\documentclass[11pt]{article}

\usepackage[numbers]{natbib}

\usepackage{booktabs}       
\usepackage{amsfonts}       
\usepackage{amsmath}
\usepackage{graphicx}
\usepackage{amsthm}
\usepackage{amssymb}
\usepackage{multirow}
\usepackage{makecell}
\usepackage[vlined,linesnumbered,ruled,resetcount]{algorithm2e}
\usepackage[colorlinks,linkcolor=magenta,filecolor=blue,citecolor=blue,urlcolor=blue]{hyperref}%
\usepackage[top=1in, left=1in, right=1in, bottom=1in]{geometry}
\usepackage{subfigure}
\usepackage{amssymb}
\usepackage{pifont}
\usepackage{mathtools}
\usepackage{stmaryrd}
\usepackage[T1]{fontenc}

\usepackage{authblk}
\usepackage{enumitem}
\usepackage{algorithmic}

\newtheorem{theorem}{Theorem}[section]
\newtheorem{lemma}[theorem]{Lemma}

\newtheorem{corollary}[theorem]{Corollary}

\newtheorem{assumption}[theorem]{Assumption}

\theoremstyle{definition}
\newtheorem{definition}[theorem]{Definition}

\newtheorem{remark}[theorem]{Remark}

\newcommand{\wh}{\widehat}
\newcommand{\wt}{\widetilde}
\newcommand{\ov}{\overline}

\renewcommand{\epsilon}{\varepsilon}
\renewcommand{\phi}{\varphi}

\renewcommand{\tilde}{\wt}
\renewcommand{\hat}{\wh}
\renewcommand{\bar}{\ov}

\makeatletter
\newcommand*{\RN}[1]{\expandafter\@slowromancap\romannumeral #1@}
\makeatother

\makeatletter
\newcommand{\printfnsymbol}[1]{%
  \textsuperscript{\@fnsymbol{#1}}%
}
\makeatother

\usepackage[utf8]{inputenc} % allow utf-8 input
\usepackage[T1]{fontenc}    % use 8-bit T1 fonts
\usepackage{hyperref}       % hyperlinks
\usepackage{url}            % simple URL typesetting
\usepackage{booktabs}       % professional-quality tables
\usepackage{amsfonts}       % blackboard math symbols
\usepackage{nicefrac}       % compact symbols for 1/2, etc.
\usepackage{microtype}      % microtypography
\usepackage{xcolor}         % colors
\newcommand{\methodname}{Sketch\&Walk}
\usepackage{enumitem}
\usepackage{graphicx}
\usepackage{amsmath}
\usepackage{amssymb}
\usepackage{mathtools}
\usepackage{amsthm}
\usepackage{bm}
\usepackage{float}
\usepackage[export]{adjustbox}
\usepackage{soul}
\usepackage{algorithmic}
\usepackage{booktabs}
\usepackage{pgf-pie}
\usepackage{caption}
\usepackage{multirow}
\usepackage{makecell}
\usepackage{wrapfig}
\usepackage[utf8]{inputenc}
\usepackage[T1]{fontenc}
\usepackage{subcaption}
\usepackage{xcolor}
\usepackage{wrapfig}
\usepackage{thm-restate}

\usepackage{adjustbox}
\usepackage{array}

\newcolumntype{R}[2]{%
  >{\adjustbox{angle=#1,lap=\width-(#2)}\bgroup}%
  l%
  <{\egroup}%
}

\newcommand{\rot}[1]{\multicolumn{1}{R{25}{0em}}{\shortstack{#1}}}

\title{Scout Before You Attend: Sketch-and-Walk Sparse Attention \\ for Efficient LLM Inference}

\author[1]{Hoang Anh Duy Le}
\author[1]{Sahil Joshi}
\author[1]{Zeyu Yang}
\author[2]{Zhaozhuo Xu}
\author[1]{Anshumali Shrivastava}
\affil[1]{Department of Computer Science, Rice University}
\affil[2]{Department of Computer Science, Stevens Institute of Technology}
\affil[ ]{{\texttt{\{Escanord.Le, Sahil.Joshi, Zeyu.Yang, Anshumali.Shrivastava\}@rice.edu}, \texttt{zxu79@stevens.edu}}}
\date{}

\begin{document}

\maketitle
\begin{abstract}
    Self-attention dominates the computational and memory cost of long-context LLM inference across both prefill and decode phases. 
To address this challenge, we introduce \textbf{Sketch\&Walk} Attention, a training-free sparse attention method that determines sparsity with lightweight sketches and deterministic walk.
Sketch\&Walk applies Hadamard sketching to get inexpensive approximations of attention scores, then aggregates these estimates across layers via a walk mechanism that captures attention influence beyond direct interactions between tokens.
The accumulated walk scores are used to select top-$k$ attention blocks, enabling dynamic sparsity with a single training-free algorithm that applies uniformly to both the prefill and decode phases, together with custom sparse attention kernels.
Across a wide range of models and tasks, Sketch\&Walk maintains near-lossless accuracy at 20\% attention density and can slightly outperform dense attention in some settings, while achieving up to $6\times$ inference speedup.
\end{abstract}

\section{Introduction}

Large Language Models (LLMs) have demonstrated remarkable capabilities, yet their deployment at scale remains hindered by the quadratic cost of self-attention~\cite{vaswani2017attention}, which becomes prohibitive as context lengths grow to hundreds of thousands of tokens \citep{pope2023efficiently_scaling_tf, fu2024challenges_lctx_tf}. Numerous efficient inference algorithms have been proposed to address this bottleneck~\cite{yuan2024longctx_bench}, including sparse attention methods that compute only a subset of query-key interactions~\citep{jiang2024minference,tang2024quest,lai2025flexprefill,yan2025adamas}. Despite their effectiveness, most existing sparse attention methods rely on a common design choice: they select key blocks based on \emph{one-hop} attention scores computed independently at each layer. This one-hop selection inherently misses \emph{multi-hop} token connections (where a query block $i$ depends on block $k$ through intermediate blocks $j$) that emerge only through repeated attention composition in deep transformers~\citep{abnar2020quantifying,cai2024pyramidkv}.

\noindent \textbf{One-hop-per-layer selection misses multi-hop token dependencies.}
In transformers, information propagates through layers via repeated attention composition. As a result, the dependence of a token at position $i$ on another token $k$ may arise through a sequence of intermediate tokens, rather than through a single direct interaction.
However, attention scores are inner products, a non-metric similarity that violates transitivity: if block $i$ attends strongly to $j$, and $j$ to $k$, this does \emph{not} imply $i$ directly attends strongly to $k$. 
Yet, existing sparse attention methods compute one-hop scores at each layer and rely on cross-layer stacking to implicitly capture multi-hop dependencies. However, this might \emph{fail}: if an intermediate token $j$ is not selected at layer $\ell$, any multi-hop dependency of the form $i \!\to\! \ldots \!\to\! j \!\to\! k$ is broken, and subsequent layers cannot recover it. Moreover, this limitation \emph{cannot be resolved} by performing multi-step aggregation within a single layer, since attention at one layer reflects only one-hop similarity, whereas multi-hop paths require information to actually propagate through intermediate tokens, which happens \emph{across} layers.

\begin{figure}[t]
\vspace{-1em}
    \centering
    \includegraphics[width=0.9\linewidth]{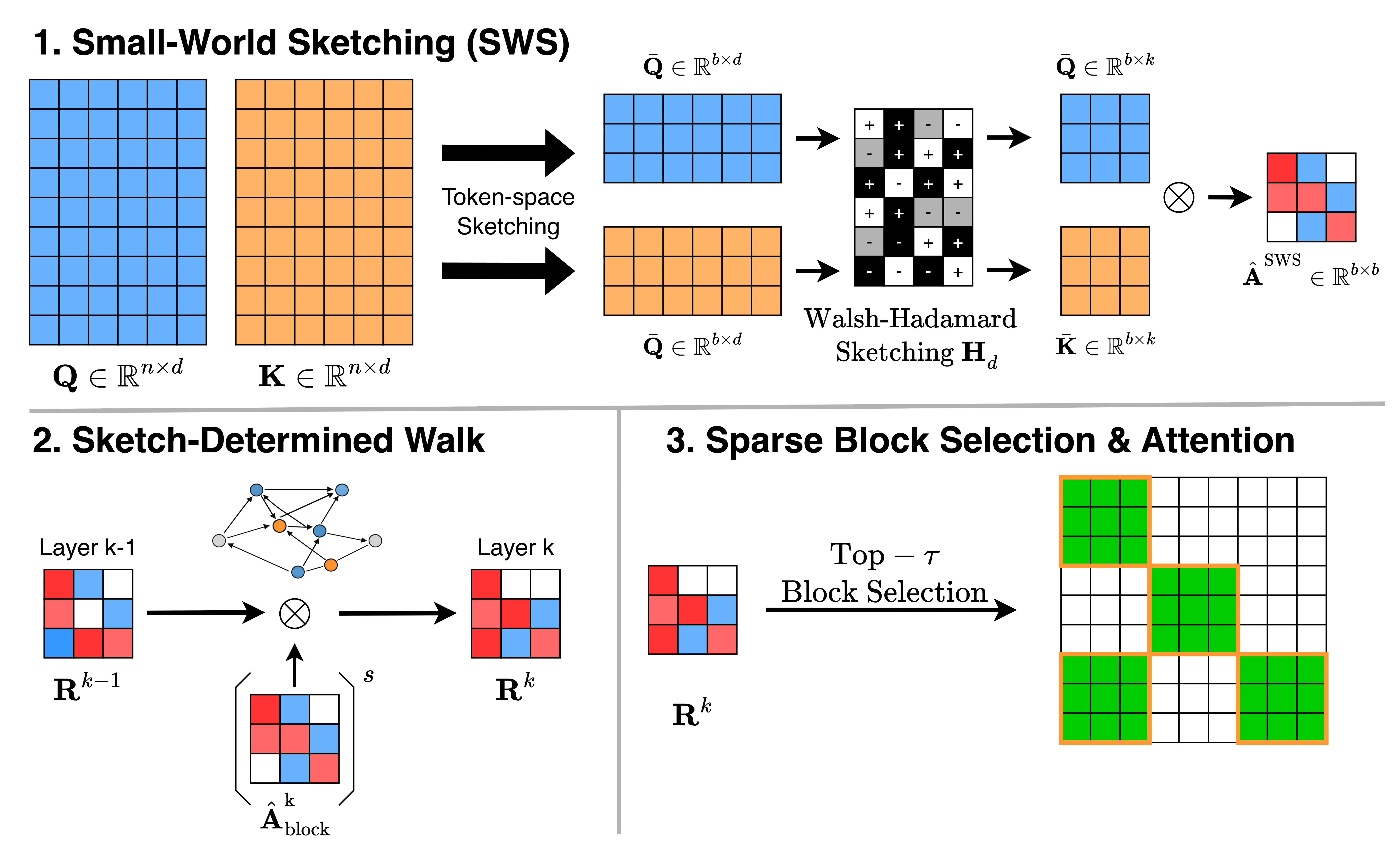}
    \caption{
    Overview of \methodname{}. 
(1) Queries and keys are sketched with Small-World Sketching to obtain lightweight block-level attention estimates. 
(2) These estimates are accumulated across layers with Sketch-Determined Walk to approximate cross-layer attention influence. 
(3) The resulting walk scores are used to select top-$\tau$ blocks for sparse attention.
}
    \label{fig:sketch_walk}
    \vspace{-2em}
\end{figure}

\noindent \textbf{Intra-layer walks on full attention are prohibitively expensive, but sketching makes them tractable.}
The natural solution is to perform a walk on the attention matrix \emph{within} each layer to capture multi-hop dependencies: computing powers $A^s$ reveals which tokens are connected through multi-hop paths at that layer. However, this is prohibitively expensive, computing the full $n \times n$ dense attention matrix negates any efficiency gains from sparse attention. Our key insight is that we can perform a \emph{blockwise} walk instead: by aggregating tokens into blocks via \emph{token-space sketching} and compressing features via \emph{feature-space sketching}, we reduce the intra-layer walk to $b \times b$ block matrices where $b \ll n$. This makes multi-hop aggregation tractable at inference time without retraining.

\noindent \textbf{We propose \methodname{}: sketch to estimate, walk to aggregate.}
We introduce \methodname{} Attention, which captures multi-hop dependencies through a two-stage approach that operates entirely at inference time. First, \textbf{Small-World Sketching (SWS)} efficiently estimates block-level attention via token-space sketching and feature-space sketching, reducing multiplication complexity from $O(B^2 d)$ to $O(k)$ per block pair. Second, \textbf{Sketch-Determined Walk} uses these sketched attention estimates to perform a walk \emph{at inference time per layer}: $R^k = R^{k-1} (\hat{\mathbf{A}}_{\text{block}}^k)^{s}$. \textbf{This per-layer walk during inference is our key novelty}, enabling multi-hop aggregation \emph{within} each layer's forward pass, rather than relying on depth-wise composition of attention layers. The \emph{single-layer} walk state accumulates multi-hop paths connecting query and key blocks across the transformer depth. By selecting sparse attention targets from the walk state rather than one-hop scores, \methodname{} retains blocks that are important and can only be identified with the composition of attention layers. We establish provable guarantees showing that \methodname{} recovers the correct sparse attention pattern with high probability.

\noindent \textbf{\methodname{} achieves near-lossless accuracy with substantial inference speedups.}
Across multiple model scales and a broad range of tasks, \methodname{} maintains near-lossless accuracy at 80\% sparsity and can slightly outperform dense attention in some settings.
The same inference-time mechanism applies uniformly to both the prefill and decode phases.
Combined with custom Triton kernels, \methodname{} delivers consistent speedups that grow with context length, achieving up to $6\times$ improvement in end-to-end inference throughput. Our contributions include:
\begin{itemize}[nosep,leftmargin=*]
    \item We identify a fundamental limitation of per-layer, one-hop sparse attention selection: it might fail to select tokens involved in multi-hop dependencies (induced by attention composition) and required in later layers.
    \item We propose \methodname{}, a training-free, low-overhead sparse attention method that applies to both prefill and decode phases of the inference process, capturing multi-hop token dependencies through Small-World Sketching and Sketch-Determined Walk.
    \item We establish theoretical approximation guarantees of \methodname{} and empirically demonstrate \methodname{} can achieve up to $6\times$ inference speedup while preserving near-lossless accuracy across long-context benchmarks.
\end{itemize}

%%%%%%%%%%%%%%%%%%%%%%% METHOD

\begin{figure*}
    \centering
    \includegraphics[width=0.95\linewidth]{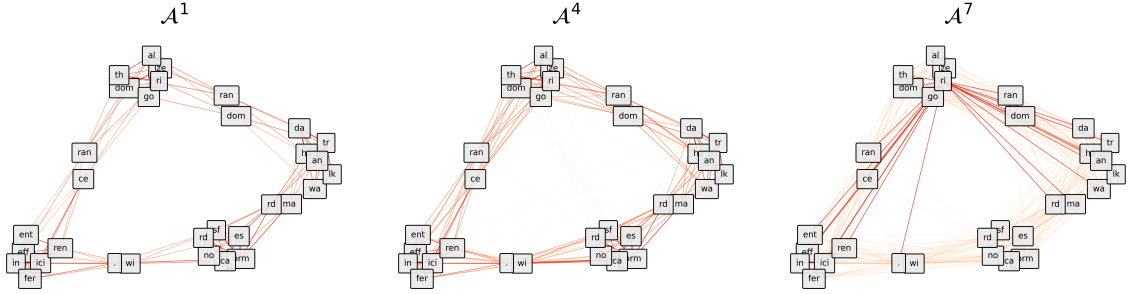}
    \caption{
Visualization of attention matrices from a layer of the Llama-3.1-8B-Instruct.
Each node corresponds to a token, and edge intensity reflects the magnitude of the attention score.
$A^1$ (left) shows direct attention scores.
Higher-order compositions of attention are shown by $A^4$ (middle) and $A^7$ (right).
While $A^1$ captures only direct interactions, higher powers approximate the influence induced by repeated attention composition, reflecting attention that becomes strong in deeper layers.
}
    \label{fig:random_walk}
    \vspace{-1em}
\end{figure*}

\section{Sketch and Walk}

We introduce \textbf{Sketch\&Walk Attention}, based on \textbf{Small-World Sketching (SWS)} followed by a \textbf{Sketch-Determined Walk}. 
SWS applies \emph{token-space sketching} via block-level aggregation and \emph{feature-space sketching} via Hadamard projections to estimate block-level attention scores, which are then used to guide the Sketch-Determined Walk operation over attention blocks.
The name reflects the key insight that small-world sketching induces a small-world network structure, where any token can reach another through a small number of high-importance block transitions, mirroring the classical small-world phenomenon in social networks. See Figure \ref{fig:sketch_walk} for a complete overview of \methodname{} algorithm.

\subsection{Preliminaries and Notation}

Let $\mathbf{Q}, \mathbf{K} \in \mathbb{R}^{n \times d}$ denote the query and key matrices with $n$ tokens and head dimension $d$. We partition these into $b = \lceil n/B \rceil$ blocks of size $B$. For block $i$, let $\mathbf{Q}^{(i)}, \mathbf{K}^{(i)} \in \mathbb{R}^{B \times d}$ denote the corresponding sub-matrices. 

\begin{definition}[Block Attention Score]
\label{def:block_attention}
The true block attention score between query block $i$ and key block $j$ is defined as:
$$
A_{ij}^{\text{true}} = \frac{1}{\sqrt{d}} \frac{1}{B^2} \sum_{s=1}^{B} \sum_{t=1}^{B} \mathbf{Q}^{(i)}[s,:] \cdot \mathbf{K}^{(j)}[t,:]^\top.
$$
\end{definition}

\subsection{Small-World Sketching}

\textbf{Small-World Sketching (SWS).}
We formalize \emph{Small-World Sketching} as a two-stage sketching operator that constructs a low-dimensional representation of token interactions for efficient block-level attention estimation.

\begin{itemize}[nosep,leftmargin=*]
    \item \textbf{Token-space sketching (block aggregation).}
    Given a sequence of token representations partitioned into blocks of size $B$, the $i$-th block representation is defined as
    \[
        \bar{\mathbf{q}}_i \;\triangleq\; \frac{1}{B}\sum_{t=1}^{B} \mathbf{Q}^{(i)}[t,:],
        \quad
        \bar{\mathbf{k}}_i \;\triangleq\; \frac{1}{B}\sum_{t=1}^{B} \mathbf{K}^{(i)}[t,:] \in \mathbb{R}^{d},
    \]
    which reduces the original sequence of $n$ tokens to $b=\lceil n/B\rceil$ block-level vectors.
    
    \item \textbf{Feature-space sketching (Hadamard projection).}
    Each block representation mapped to a lower-dimensional feature space via a randomized Hadamard projection,
    \[
        \tilde{\mathbf{q}}_i \;\triangleq\; \bar{\mathbf{q}}_i \mathbf{H}_d \in \mathbb{R}^{k},
        \qquad
        \tilde{\mathbf{k}}_i \;\triangleq\; \bar{\mathbf{k}}_i \mathbf{H}_d \in \mathbb{R}^{k},
    \]
    where $\mathbf{H}_d \in \mathbb{R}^{d \times k}$ is a Hadamard matrix and $k \ll d$ denotes the sketch dimension. The orthogonality of the Hadamard transform ensures that similarity estimates in the transformed space remain consistent with those in the original feature space.
\end{itemize}

Together, token-space and feature-space sketching produce a compact block-level sketch that supports efficient estimation of inter-block attention scores. In practice, we implement custom CUDA kernels for block aggregation and fast Hadamard transforms to reduce the runtime overhead of Small-World Sketching. Under Small-World Sketching, the block-level attention matrix is estimated as:
\begin{align*}
    \hat{\mathbf{A}}^{\text{SWS}}
    \;\triangleq\;
    \frac{\tilde{\mathbf{Q}}\,\tilde{\mathbf{K}}^\top}{\sqrt{k}}
    \;=\;
    \frac{(\bar{\mathbf{Q}}\mathbf{H}_d)\,(\bar{\mathbf{K}}\mathbf{H}_d)^\top}{\sqrt{k}},
\end{align*}
This reduces the complexity of computing block scores from $O(B^2 d)$ to $O(k)$ per block pair.

\subsection{Sketch-Determined Walk}

After estimating block-level attention scores via \emph{Small-World Sketching (SWS)}, \methodname{} applies a \emph{Sketch-Determined Walk} to aggregate these estimates across transformer layers. The resulting walk state $R^k$ is updated at layer $k$ as:
\begin{align*}
    R^0 \;=\; (\hat{\mathbf{A}}_{\text{block}}^0)^{s}, \qquad
    R^k \;=\; R^{k-1} (\hat{\mathbf{A}}_{\text{block}}^k)^{s}, \quad k > 0,
\end{align*}
where $\hat{\mathbf{A}}_{\text{block}}^k$ denotes the sketched block attention matrix at layer $k$, and $s$ is a sparsity exponent.

The sketch-determined walk state $R^k$ aggregates block-level attention estimates from the current layer with those propagated from preceding layers, yielding a block-level score matrix that encodes \emph{inter-layer relationships} among tokens and reflects \emph{block-wise groupings} of blocks according to their layerwise attention relationships. The top-$\tau$ key blocks for sparse attention are then selected according to the corresponding row of $R^k$.

We implement custom Triton kernels for sparse attention in both the prefill and decode phases to efficiently execute attention over the selected blocks. Algorithms~\ref{alg:sparse_prefill} and~\ref{alg:sparse_decode} provide a detailed walkthrough of \textbf{Sketch\&Walk Attention} for the prefill and decode phases, respectively.

\begin{table*}[t]
\centering
\caption{Prefill phase performance comparison on LongBench. AVG$^{\overline{pc}}$ excludes PassageCount.}
\label{tab:longbench_prefill_llama31_8b}
\resizebox{\linewidth}{!}{
\begin{tabular}{lcccccccccccccccccc}
% \toprule
% \textbf{Method}
 & \rot{2wikimqa}
 & \rot{gov-report}
 & \rot{hotpot-qa}
 & \rot{lcc}
 & \rot{multifieldqa-en}
 & \rot{multinews}
 & \rot{musique}
 & \rot{narrativeqa}
 & \rot{passage-count}
 & \rot{passage-retrieval}
 & \rot{qasper}
 & \rot{qmsum}
 & \rot{repobench-p}
 & \rot{samsum}
 & \rot{trec}
 & \rot{triviaqa}
 & \rot{AVG}
 & \rot{AVG$^{\overline{pc}}$} \\
\toprule

\multicolumn{18}{c}{\textbf{Llama3.1-8B-Instruct}} \\

\cmidrule{2 - 19}

Dense
& 45.62 & 34.77 & 55.40 & 55.13 & 55.97 & 26.90 & 29.41 & 30.05
& 10.00 & 99.00 & 44.67 & 25.14 & 47.79 & 43.24 & 73.00 & 91.16
& 47.95 & 50.48 \\

\cmidrule{2-19}

FlexPrefill
& 34.25 & 33.52 & 54.86 & 54.11 & 54.23 & 26.71 & 28.95 & 27.62
& 1.00 & 63.50 & 39.21 & \textbf{25.97} & \textbf{54.65} & 43.26 & 71.50 & 89.00
& 43.90 & 46.76 \\

MInference
& 46.39 & 33.76 & 54.18 & \textbf{55.43} & 54.75 & 26.64 & 27.18 & 28.60
& 4.25 & 97.50 & 43.30 & 25.62 & 50.97 & 43.27 & \textbf{72.50} & 90.40
& 47.17 & 50.03 \\

Sketch\&Walk
& \textbf{47.26} & \textbf{34.39} & \textbf{56.69} & 52.76 & \textbf{55.30} & \textbf{26.92} & \textbf{31.27} & \textbf{29.61}
& \textbf{6.16} & \textbf{99.00} & \textbf{44.13} & 25.26 & 47.88 & \textbf{43.99} & 70.00 & \textbf{92.02}
& \textbf{47.67} & \textbf{50.43} \\

\cmidrule{2 - 19}

\multicolumn{18}{c}{\textbf{Llama3.2-1B-Instruct}} \\

\cmidrule{2 - 19}

Dense
& 29.34 & 29.49 & 30.26 & 30.29 & 41.29 & 25.96 & 14.61 & 18.59
& 3.14  & 5.00  & 21.05 & 21.46 & 25.44 & 39.26 & 62.00 & 78.89
& 29.75 & 31.53 \\

\cmidrule{2 - 19}

FlexPrefill
& 21.76 & 29.24 & 29.97 & 29.62 & 36.97 & 25.33 & 13.34 & 16.38
& \textbf{4.64}  & 4.50  & 16.18 & 21.24 & 27.44 & 38.24 & 58.00 & 74.55
& 27.96 & 29.52 \\

MInference
& 29.40 & 29.56 & 30.46 & 29.59 & 40.46 & 25.06 & 14.44 & 16.70
& 1.14  & \textbf{5.00}  & \textbf{20.44} & 21.64 & 26.81 & \textbf{39.42} & \textbf{61.50} & 76.21
& 29.24 & 31.11 \\

Sketch\&Walk
& \textbf{29.96} & \textbf{29.68} & \textbf{31.08} & \textbf{29.58}
& \textbf{41.49} & \textbf{25.73} & \textbf{15.54} & \textbf{18.67}
& 2.64 & \textbf{5.00} & \textbf{20.44} & \textbf{21.82}
& \textbf{33.54} & 39.04 & 60.00 & \textbf{77.59}
& \textbf{30.11} & \textbf{31.94} \\

\cmidrule{2 - 19}

\multicolumn{19}{c}{\textbf{Qwen2-7B-Instruct}} \\
\cmidrule{2-19}

Dense
& 43.71 & 32.12 & 43.99 & 48.64 & 47.31 & 24.39 & 25.26 & 23.82
& 5.50 & 69.00 & 46.74 & 22.95 & 47.66 & 33.59 & 55.00 & 83.97
& 40.85 & 43.21 \\
\cmidrule{2-19}

FlexPrefill
& \textbf{43.77} & 32.22 & \textbf{43.45} & 47.17 & 46.73 & 24.61 & 24.48 & 25.20
& 5.50 & 61.00 & \textbf{47.47} & 23.19 & 47.12 & 34.22 & 61.00 & 85.68
& 40.80 & 43.15 \\

\methodname{}
& 42.67 & \textbf{33.86} & 42.49 & \textbf{48.81} & \textbf{47.29} & \textbf{25.87} & \textbf{24.92} & \textbf{25.27}
& 5.50 & \textbf{67.00} & 45.66 & \textbf{23.92} & \textbf{55.61} & \textbf{46.31} & \textbf{77.00} & \textbf{89.07}
& \textbf{43.83} & \textbf{46.38} \\

\bottomrule
\end{tabular}
}
\end{table*}
% \vspace{-1.5mm}

\subsection{Error Bounds and Approximation Analysis}
We analyze the approximation properties of \methodname{} Attention and establish provable guarantees for both block selection and sparse attention performance. We show that \methodname{} provides accurate identification of important attention blocks, and the resulting sparse attention closely approximates dense attention. We begin by formalizing assumptions on token representations and attention distributions:

\begin{assumption}[Block Coherence]
\label{ass:block_coherence}
Within each block $\mathcal{B}_i$, token embeddings exhibit \textbf{semantic coherence}: tokens within a block share similar contextual representations. Formally, for query block $i$, each token $\mathbf{q}_t^{(i)}$ can be decomposed as:
\begin{align*}
    \mathbf{q}_t^{(i)} = \boldsymbol{\mu}_i + \boldsymbol{\epsilon}_t, \quad \text{where } \mathbb{E}[\boldsymbol{\epsilon}_t] = \mathbf{0}, \; \text{Var}(\boldsymbol{\epsilon}_t) \leq \sigma^2 \mathbf{I}
\end{align*}
with intra-block variance $\sigma^2 \ll \|\boldsymbol{\mu}_i\|^2$. This assumption is empirically justified by the observation that consecutive tokens in natural language often belong to the same semantic unit (phrase, clause, or sentence), and positional encodings induce smooth variations within local windows.
\end{assumption}

\begin{assumption}[Heavy-Tailed Block Attention Distribution]
\label{ass:heavy_tail}
The distribution of block-level attention scores follows a heavy-tailed pattern: for each query block $i$, there exists a small subset $\mathcal{S}_i^* \subset \{1, \ldots, b\}$ with $|\mathcal{S}_i^*| = \tau \ll b$ such that:
$
    \sum_{j \in \mathcal{S}_i^*} \text{softmax}(A_{ij}^{\text{true}}) \geq 1 - \eta,
$
for small $\eta > 0$ (typically $\eta < 0.1$). This reflects the finding that attention in LLMs concentrates on semantically relevant positions, with most blocks receiving negligible attention mass.
\end{assumption}

Under these assumptions, we show that token-space sketching via block aggregation acts as a subspace embedding, yielding \emph{accurate block-level representations with high probability}. Meanwhile, feature-space sketching via the Subsampled Randomized Hadamard Transform \emph{preserves block-level inner products up to small error} when the sketch dimension is sufficiently large.

\begin{restatable}{lemma}{subspaceEmbedding}\label{lem:subspace_embedding}
\textnormal{(Token-space Sketching: Subspace Embedding via Block Averaging).}
Under Assumption~\ref{ass:block_coherence}, the block average $\bar{\mathbf{q}}_i = \frac{1}{B}\sum_{t=1}^{B} \mathbf{q}_t^{(i)}$ satisfies:
\begin{align*}
    \|\bar{\mathbf{q}}_i - \boldsymbol{\mu}_i\|_2 \leq \frac{\sigma\sqrt{d \log(1/\delta)}}{\sqrt{B}}
\end{align*}
with probability at least $1 - \delta$.
\end{restatable}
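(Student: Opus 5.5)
The plan is to reduce the statement to a concentration bound for an average of zero-mean noise vectors. By Assumption~\ref{ass:block_coherence}, $\mathbf{q}_t^{(i)} = \boldsymbol{\mu}_i + \boldsymbol{\epsilon}_t$, so averaging over the block gives the exact identity
\begin{align*}
\bar{\mathbf{q}}_i - \boldsymbol{\mu}_i = \frac{1}{B}\sum_{t=1}^{B} \boldsymbol{\epsilon}_t \;\triangleq\; \bar{\boldsymbol{\epsilon}}.
\end{align*}
The rest of the argument is about bounding $\|\bar{\boldsymbol{\epsilon}}\|_2$. Along the way I will make explicit the (implicit) modelling choices that the stated $\log(1/\delta)$ dependence requires.

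\textbf{Step 1 (second moment).} I will treat the $\boldsymbol{\epsilon}_t$ as independent across positions $t$ within a block. Then $\mathrm{Var}(\bar{\boldsymbol{\epsilon}}) = \frac{1}{B^2}\sum_t \mathrm{Var}(\boldsymbol{\epsilon}_t) \preceq \frac{\sigma^2}{B}\mathbf{I}$, and hence $\mathbb{E}\|\bar{\boldsymbol{\epsilon}}\|_2^2 = \mathrm{tr}\,\mathrm{Var}(\bar{\boldsymbol{\epsilon}}) \le \sigma^2 d/B$. This already explains the $\sigma\sqrt{d}/\sqrt{B}$ scale in the statement.

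\textbf{Step 2 (high-probability bound).} The variance condition alone only allows Chebyshev/Markov. That route gives $\|\bar{\boldsymbol{\epsilon}}\|_2 \le \sigma\sqrt{d/(B\delta)}$, which has $1/\sqrt{\delta}$ rather than $\sqrt{\log(1/\delta)}$. To obtain the logarithmic dependence I will strengthen the variance hypothesis to sub-Gaussianity: each $\boldsymbol{\epsilon}_t$ is $\sigma$-sub-Gaussian, which is the natural reading of $\mathrm{Var}(\boldsymbol{\epsilon}_t)\le\sigma^2\mathbf{I}$ in this context.

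Under this hypothesis, $\bar{\boldsymbol{\epsilon}}$ is $(\sigma/\sqrt{B})$-sub-Gaussian, because independent sub-Gaussian proxies add in squares. The standard norm bound for a sub-Gaussian vector (e.g.\ Hsu--Kakade--Zhang) then gives, with probability at least $1-\delta$,
\begin{align*}
\|\bar{\boldsymbol{\epsilon}}\|_2 \le \frac{\sigma}{\sqrt{B}}\Bigl(\sqrt{d} + \sqrt{2\log(1/\delta)}\Bigr)
\end{align*}
up to an absolute constant. A fallback that avoids citing that result is to bound each coordinate of $\bar{\boldsymbol{\epsilon}}$ by a scalar sub-Gaussian tail and take a union bound over the $d$ coordinates; this costs only an extra $\log d$ inside the square root.

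\textbf{Step 3 (match the stated form).} For $\delta \le 1/e$ we have $\log(1/\delta)\ge 1$, so $\sqrt{d}+\sqrt{2\log(1/\delta)} \le (1+\sqrt{2})\sqrt{d\log(1/\delta)}$. This yields $\|\bar{\mathbf{q}}_i-\boldsymbol{\mu}_i\|_2 \lesssim \sigma\sqrt{d\log(1/\delta)}/\sqrt{B}$, which is the claim with the absolute constant absorbed into the notation (or into $\sigma$).

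\textbf{Main obstacle.} The expected sticking point is not computational. As literally stated, Assumption~\ref{ass:block_coherence} provides only a second-moment bound and says nothing about independence across $t$. Neither assumption is sufficient for a $\log(1/\delta)$ tail. My plan is to state the independence and sub-Gaussian strengthening explicitly at the start of the proof, and to note in a remark that the variance-only hypothesis still yields the weaker bound $\sigma\sqrt{d/(B\delta)}$ via Chebyshev.
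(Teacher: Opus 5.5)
Your proof is correct under the independence and sub-Gaussian hypotheses you state explicitly, but your main route differs from the paper's.

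\textbf{The paper's route.} It is exactly your fallback: a scalar sub-Gaussian Hoeffding bound on each coordinate $\bar{\epsilon}_{i,j}$, a union bound over the $d$ coordinates, and then $\|\bar{\boldsymbol{\epsilon}}_i\|_2\le\sqrt{d}\,\|\bar{\boldsymbol{\epsilon}}_i\|_\infty$. This gives $\sigma\sqrt{2d\log(2d/\delta)}/\sqrt{B}$, which is the appendix version of the lemma. It keeps the $\log d$ that the main-text statement drops. The paper also uses both of your extra hypotheses silently: it asserts independence of the $\epsilon_{t,j}$, and it draws a sub-Gaussian Hoeffding tail from a bare variance bound.

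\textbf{What your route buys.} Your Hsu--Kakade--Zhang bound $\frac{\sigma}{\sqrt{B}}\bigl(\sqrt{d}+\sqrt{2\log(1/\delta)}\bigr)$ is sharper: it is additive rather than multiplicative in $d$ and $\log(1/\delta)$. It is the only one of the two routes that reaches the stated $\log(1/\delta)$ form even up to constants. In fact, for $d\ge 8$ and $\delta\le e^{-4}$ one has $\sqrt{d}+\sqrt{2\log(1/\delta)}\le\sqrt{d\log(1/\delta)}$. So you obtain the stated inequality with constant $1$ for realistic head dimensions.

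\textbf{What it costs.} You rely on a less elementary cited theorem. You also need a slightly stronger hypothesis: sub-Gaussian tails for every projection $\boldsymbol{\alpha}^\top\boldsymbol{\epsilon}_t$, not just for each coordinate. That is, however, the natural reading of $\mathrm{Var}(\boldsymbol{\epsilon}_t)\leq\sigma^2\mathbf{I}$.

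\textbf{One correction to Step~3.} Do not absorb the constant into $\sigma$, since $\sigma$ is fixed by the assumption. State the factor $1+\sqrt{2}$ and the restriction $\delta\le 1/e$ explicitly. Some such concession is unavoidable. Take i.i.d.\ Gaussian noise with $d=1$ and $\delta=0.01$, and let $Z=\sqrt{B}\,\bar{\epsilon}/\sigma$, which is standard normal. Then $\Pr\bigl[|Z|>\sqrt{\log 100}\bigr]\approx 0.032>\delta$, so the literal constant-$1$ bound does not hold uniformly.
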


\begin{restatable}{theorem}
{hadamardSketch}\label{thm:hadamard_sketch}
\textnormal{(Feature-space Sketching: Hadamard Transform Guarantee).}
Let $\mathbf{H}_d \in \mathbb{R}^{d \times k}$ be the Subsampled Randomized Hadamard Transform (SRHT). For any fixed vectors $\mathbf{u}, \mathbf{v} \in \mathbb{R}^d$ and $\epsilon \in (0, 1)$, if $k = \Omega(\epsilon^{-2} \log(b/\delta))$ where $b$ is the number of blocks, then with probability at least $1 - \delta$:
\begin{align*}
    \left| (\mathbf{u}^\top \mathbf{H}_d)(\mathbf{H}_d^\top \mathbf{v}) - \mathbf{u}^\top \mathbf{v} \right| \leq \epsilon \|\mathbf{u}\|_2 \|\mathbf{v}\|_2
\end{align*}
\end{restatable}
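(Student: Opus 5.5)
The plan is to reduce the inner-product guarantee to norm preservation (a Johnson--Lindenstrauss property) for SRHT, and then use polarization. Write the SRHT in its standard normalized form $\mathbf{H}_d = \sqrt{d/k}\,\mathbf{D}\mathbf{H}\mathbf{P}$. Here $\mathbf{D}$ is a diagonal matrix of independent Rademacher signs, $\mathbf{H}$ is the orthonormal $d\times d$ Walsh--Hadamard matrix, and $\mathbf{P}\in\mathbb{R}^{d\times k}$ samples $k$ coordinates uniformly. The scaling is chosen so that $\mathbb{E}[\mathbf{H}_d\mathbf{H}_d^\top]=\mathbf{I}_d$, and the estimator $(\mathbf{u}^\top\mathbf{H}_d)(\mathbf{H}_d^\top\mathbf{v})$ is then unbiased for $\mathbf{u}^\top\mathbf{v}$. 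By homogeneity we may assume $\|\mathbf{u}\|_2=\|\mathbf{v}\|_2=1$. The polarization identity gives
\[
\mathbf{u}^\top\mathbf{v}=\tfrac14\bigl(\|\mathbf{u}+\mathbf{v}\|_2^2-\|\mathbf{u}-\mathbf{v}\|_2^2\bigr),
\]
and the same identity holds for the sketched vectors. So it suffices to show that, with probability at least $1-\delta$, $\bigl|\|\mathbf{H}_d^\top\mathbf{x}\|_2^2-\|\mathbf{x}\|_2^2\bigr|\le\epsilon\|\mathbf{x}\|_2^2$ simultaneously for $\mathbf{x}=\mathbf{u}\pm\mathbf{v}$. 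Given this, the error is at most $\tfrac{\epsilon}{4}(\|\mathbf{u}+\mathbf{v}\|^2+\|\mathbf{u}-\mathbf{v}\|^2)=\tfrac{\epsilon}{2}(\|\mathbf{u}\|^2+\|\mathbf{v}\|^2)=\epsilon$, which is the claimed bound after rescaling.

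The norm preservation for a single fixed $\mathbf{x}$ is proved in two steps.

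\textbf{Step 1 (flattening).} The randomized rotation $\mathbf{y}=\mathbf{H}^\top\mathbf{D}\mathbf{x}$ spreads the mass of $\mathbf{x}$ across coordinates. Each coordinate $y_j=\sum_i H_{ij}D_{ii}x_i$ is a Rademacher sum with coefficients of magnitude $|x_i|/\sqrt d$. Hoeffding's inequality and a union bound over the $d$ coordinates then give $\|\mathbf{y}\|_\infty\le\|\mathbf{x}\|_2\sqrt{2\log(4d/\delta)/d}$ with probability at least $1-\delta/4$.

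\textbf{Step 2 (sampling).} Conditional on this event, $\|\mathbf{H}_d^\top\mathbf{x}\|_2^2=\frac dk\sum_{\ell=1}^k y_{j_\ell}^2$ is an average of $k$ i.i.d.\ (or without-replacement) bounded terms with mean $\|\mathbf{x}\|_2^2$. Each term $d\,y_{j}^2$ is bounded by $2\log(4d/\delta)\|\mathbf{x}\|^2$. A Bernstein/Chernoff bound, or Hoeffding for sampling without replacement, gives relative error at most $\epsilon$ with probability at least $1-\delta/4$ once $k\gtrsim\epsilon^{-2}\log(d/\delta)\log(1/\delta)$.

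A union bound over the two vectors $\mathbf{u}\pm\mathbf{v}$ completes the argument for one pair. In the intended application, the sketch is reused across all $b^2$ block pairs $(\bar{\mathbf{q}}_i,\bar{\mathbf{k}}_j)$. Replacing $\delta$ by $\delta/b^2$ turns $\log(1/\delta)$ into $O(\log(b/\delta))$, which explains the dependence on $b$ in the statement.

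The main obstacle is matching the stated sample complexity $k=\Omega(\epsilon^{-2}\log(b/\delta))$. The clean SRHT analysis carries an extra $\log d$ factor from the flattening step. I would handle this by treating $d$ (the head dimension, e.g.\ $128$) as a constant absorbed into $\Omega(\cdot)$, or by stating that $\log(d/\delta)$ is dominated by $\log(b/\delta)$ when $b\ge d$, which is the long-context regime.

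If a sharper route is wanted, I would first try invoking the known result that SRHT with $k=O(\epsilon^{-2}\log(1/\delta))$ rows, composed appropriately, satisfies the JL moment property (Tropp, Ailon--Chazelle). I would cite that result rather than rederive it. I would then apply the polarization step above verbatim.
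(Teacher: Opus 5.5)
Your argument has the same skeleton as the paper's proof. First, Rademacher signs plus the Hadamard rotation flatten the vector; your Step 1 bound $\|\mathbf{H}^\top\mathbf{D}\mathbf{x}\|_\infty\le\|\mathbf{x}\|_2\sqrt{2\log(4d/\delta)/d}$ is exactly the paper's Step 4, which it takes from Ailon--Chazelle. Second, a concentration inequality is applied to the uniformly sampled coordinates.

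The only structural difference is your polarization step. The paper works directly with the signed products $X_i=\frac{d}{k}\tilde u_i\tilde v_i$. It applies Hoeffding with range $\frac{2\log(4d/\delta)}{k}\|\mathbf{u}\|_2\|\mathbf{v}\|_2$ and arrives at $k\ge 8\epsilon^{-2}\log^2(4d/\delta)\log(4/\delta)$. Reducing to $\|\mathbf{H}_d^\top(\mathbf{u}\pm\mathbf{v})\|_2^2$ gives you nonnegative summands with known mean. A multiplicative Chernoff bound then depends on the range only linearly rather than quadratically, so you save one factor of $\log(d/\delta)$. Bernstein on the paper's products would give the same saving, since for uniformly sampled $j$ the product $d\,\tilde u_j\tilde v_j$ has second moment at most $2\log(4d/\delta)\|\mathbf{u}\|_2^2\|\mathbf{v}\|_2^2$.

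Your normalization with an orthonormal $\mathbf{H}$ is also the consistent one. The paper's $\mathbf{W}^\top\mathbf{W}=d\mathbf{I}$ combined with the prefactor $\sqrt{d/k}$ would give $\mathbb{E}[\mathbf{H}_d\mathbf{H}_d^\top]=d\mathbf{I}$. One constant slip on your side: with $\log(4d/\delta)$, the union bound over $d$ coordinates costs $\delta/2$, not $\delta/4$, so use $\log(8d/\delta)$.

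The real gap is the one you flagged, and your patches do not close it. The surplus factor is $\log(d/\delta)$, not $\log d$. Treating $d$ as a constant therefore still leaves an extra $\log(1/\delta)$. After the union bound over $b^2$ pairs you get $k\gtrsim\epsilon^{-2}\log^2(b/\delta)$, which is also what the assumption $b\ge d$ yields.

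The $O(\epsilon^{-2}\log(1/\delta))$ dimension of Ailon--Chazelle is for the FJLT, whose last stage is a sparse Gaussian projection rather than coordinate sampling. Composing with an extra Gaussian map would prove a statement about a different sketch. The paper does not close the gap either: its Step 6 simply declares $k=\Omega(\epsilon^{-2}\log(b/\delta)\log^2(d/\delta))$ sufficient, which is weaker than the statement.

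A simple example indicates the stated bound is not attainable for coordinate-sampled SRHT when $d\gg k$. Take $\mathbf{x}=\mathbf{1}_V/\sqrt{s}$ for a subgroup $V$ of $\mathbb{F}_2^{\log_2 d}$ of size $s$.
\begin{enumerate}
\item With probability $2s\,2^{-s}$, the signs $\mathbf{D}|_V$ are $\pm$ a character of $V$.
\item In that event, $\mathbf{H}^\top\mathbf{D}\mathbf{x}$ equals $\pm\sqrt{s/d}$ on a single coset of $V^\perp$ (of size $d/s$) and vanishes elsewhere.
\item Hence $\|\mathbf{H}_d^\top\mathbf{x}\|_2^2=sN/k$, where $N$ is hypergeometric with mean $k/s$. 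This is off by a factor $1\pm\epsilon$ with probability about $e^{-c\epsilon^2k/s}$.
\item Choosing $s\approx\epsilon\sqrt{k}$ makes the total failure probability $e^{-O(\epsilon\sqrt{k})}$, which forces $k\gtrsim\epsilon^{-2}\log^2(1/\delta)$.
\end{enumerate}
So what your argument honestly establishes is $k=O(\epsilon^{-2}\log(d/\delta)\log(1/\delta))$ per pair. That is short of the stated bound, but already sharper than what the paper's own proof delivers.
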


Given sufficient sketch dimension and block size, the top-$\tau$ blocks selected using sketched attention scores coincide with those selected by full attention with high probability:

\begin{restatable}{lemma}
{ddsSoftmax}\label{lem:dds_softmax}
\textnormal{(SWS Top-$\tau$ Selection under Softmax).}
Under Assumptions~\ref{ass:block_coherence} and \ref{ass:heavy_tail}, let $\mathcal{S}^*$ be the true top-$\tau$ blocks based on $\text{softmax}(A_{ij}^{\text{true}}/\sqrt{d})$, and let $\hat{\mathcal{S}}$ be the blocks selected by Sketch\&Walk using $\text{softmax}(\hat{A}_{ij}^{\text{SWS}}/\sqrt{k})$. If the sketching dimension satisfies:
\begin{align*}
    k \geq \frac{C \log(b/\delta)}{\gamma^2} \cdot \max_{i,j} \|\bar{\mathbf{q}}_i\|^2 \|\bar{\mathbf{k}}_j\|^2
\end{align*}
and the block size satisfies $B \geq C' \sigma^2 d \log(1/\delta) / \gamma^2$, then $\mathcal{S}^* = \hat{\mathcal{S}}$ with probability at least $1 - 2\delta$.
\end{restatable}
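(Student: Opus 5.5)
The plan is to reduce top-$\tau$ agreement to a uniform entrywise error bound on the block score matrix, combined with a margin condition. Softmax is strictly monotone within each row, so the top-$\tau$ set of a row of $\text{softmax}(A_{i\cdot})$ equals the top-$\tau$ set of the raw scores $A_{i\cdot}$. It therefore suffices to show that, for every query block $i$, the sketched scores $\hat{A}_{ij}^{\text{SWS}}$ (after the common rescaling) preserve the ordering between the $\tau$-th and $(\tau+1)$-th largest true scores. I will interpret $\gamma$ as the \emph{selection margin}: the gap between the smallest score in $\mathcal{S}_i^*$ and the largest score outside it. This reading is what Assumption~\ref{ass:heavy_tail} makes natural, since concentration of mass on $\mathcal{S}_i^*$ forces a separation. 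The key observation is then that if $\max_{i,j}|\hat{A}_{ij}-A_{ij}^{\text{true}}| < \gamma/2$, every score in $\mathcal{S}_i^*$ still exceeds every score outside it, so $\hat{\mathcal{S}}=\mathcal{S}^*$.

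I would split the entrywise error by the triangle inequality into a token-space term and a feature-space term. First, by Definition~\ref{def:block_attention} and bilinearity, $A_{ij}^{\text{true}}$ equals $\bar{\mathbf{q}}_i^\top\bar{\mathbf{k}}_j$ up to the scaling $1/\sqrt d$. Under Assumption~\ref{ass:block_coherence}, Lemma~\ref{lem:subspace_embedding} controls $\|\bar{\mathbf{q}}_i-\boldsymbol{\mu}_i\|$, and the analogous statement controls the key means. This yields $|\bar{\mathbf{q}}_i^\top\bar{\mathbf{k}}_j-\boldsymbol{\mu}_i^\top\boldsymbol{\nu}_j| \lesssim \sigma\sqrt{d\log(1/\delta)/B}\,(\|\boldsymbol\mu_i\|+\|\boldsymbol\nu_j\|)$ plus a second-order term. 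Choosing $B\ge C'\sigma^2 d\log(1/\delta)/\gamma^2$ makes this at most $\gamma/4$, with the norms absorbed into $C'$. This holds with probability $1-\delta$ after a union bound over blocks, whose logarithmic factor is also absorbed into the constants. It shows that the ``true'' ordering is a stable function of the block means, so block averaging does not move any score across the margin.

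Second, apply Theorem~\ref{thm:hadamard_sketch} with $\mathbf{u}=\bar{\mathbf{q}}_i$ and $\mathbf{v}=\bar{\mathbf{k}}_j$, union-bounded over all $b^2$ pairs. This is where the $\log(b/\delta)$ appears, since $\log(b^2/\delta)=O(\log(b/\delta))$. The result is $|\hat{A}_{ij}^{\text{SWS}}-\bar{\mathbf{q}}_i^\top\bar{\mathbf{k}}_j|\le\epsilon\|\bar{\mathbf{q}}_i\|\|\bar{\mathbf{k}}_j\|$ with probability $1-\delta$. Setting $\epsilon=\gamma/(4\max_{i,j}\|\bar{\mathbf{q}}_i\|\|\bar{\mathbf{k}}_j\|)$ turns the requirement $k=\Omega(\epsilon^{-2}\log(b/\delta))$ into exactly the stated condition $k\ge C\log(b/\delta)\gamma^{-2}\max\|\bar{\mathbf{q}}_i\|^2\|\bar{\mathbf{k}}_j\|^2$. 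A union bound over the two events gives total failure probability at most $2\delta$. On the intersection, the entrywise error is below $\gamma/2$, and the margin argument above concludes $\mathcal{S}^*=\hat{\mathcal{S}}$.

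The main obstacle is bookkeeping of the normalizations. The statement divides by $\sqrt d$ for the true scores and by $\sqrt k$ for the sketched ones, and the SRHT convention decides whether $(\mathbf{u}^\top\mathbf{H}_d)(\mathbf{H}_d^\top\mathbf{v})$ already estimates $\mathbf{u}^\top\mathbf{v}$. I would fix the convention under which Theorem~\ref{thm:hadamard_sketch} is unbiased and note that any common positive rescaling of a row preserves its ordering. Any residual scale mismatch can then be absorbed into the definition of $\gamma$, which must be stated as a margin measured in the same units as the scores. A second subtle point is that the margin $\gamma$ has to be assumed rather than derived from Assumption~\ref{ass:heavy_tail}. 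I would state it explicitly as the gap hypothesis, remarking that the heavy-tail condition motivates a positive gap but does not by itself quantify one.
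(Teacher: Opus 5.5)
Your proposal is correct and follows the paper's proof essentially step for step. Both split the score error into a block-averaging term (Corollary~\ref{cor:inner_product:app}) and an SRHT term (Theorem~\ref{thm:hadamard_sketch}, union-bounded over block pairs), both require the total error to stay below $\gamma/2$ for the top-$\tau$ margin $\gamma$, and both use the same two-event union bound to get $1-2\delta$. You handle the mismatched $\sqrt d$ and $\sqrt k$ temperatures more cleanly than the paper's softmax-Lipschitz step, by using row-wise monotonicity of softmax; and your own remark that $A_{ij}^{\text{true}}$ is exactly a rescaling of $\bar{\mathbf{q}}_i^\top\bar{\mathbf{k}}_j$ shows the block-averaging term, and with it the condition on $B$, is not actually needed, in your argument or the paper's.
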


Finally, under the heavy-tailed attention assumption, restricting attention to the selected top-$\tau$ blocks yields an output that closely approximates dense attention, with error controlled by the tail mass and the quality of token-space aggregation:

\begin{restatable}{lemma}{outputApprox}
\label{lem:output_approx}
\textnormal{(Attention Output Approximation).}
Under Assumptions~\ref{ass:block_coherence}--\ref{ass:heavy_tail}, let $\mathbf{O}^{\text{full}}$ be the output of full attention and $\mathbf{O}^{\text{Sketch\&Walk}}$ be the output using Sketch\&Walk sparse attention with top-$\tau$ blocks. Then:
\begin{align}
    \|\mathbf{O}^{\text{Sketch\&Walk}} - \mathbf{O}^{\text{full}}\|_F &\leq \eta \|\mathbf{V}\|_F + O\left(\frac{\tau \sigma}{\sqrt{B}}\right) \|\mathbf{V}\|_2
\end{align}
where $\eta$ is the tail mass from Assumption~\ref{ass:heavy_tail}.
\end{restatable}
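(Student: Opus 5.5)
The plan is to split the error into two pieces by inserting an intermediate ``oracle'' sparse attention that restricts each query to the true top-$\tau$ set $\mathcal{S}_i^*$ of Assumption~\ref{ass:heavy_tail}. Write $\mathbf{O}^{\text{orc}}$ for this output, with the softmax renormalized over $\mathcal{S}_i^*$. The triangle inequality then gives
\begin{align*}
\|\mathbf{O}^{\text{Sketch\&Walk}} - \mathbf{O}^{\text{full}}\|_F \le \|\mathbf{O}^{\text{orc}} - \mathbf{O}^{\text{full}}\|_F + \|\mathbf{O}^{\text{Sketch\&Walk}} - \mathbf{O}^{\text{orc}}\|_F .
\end{align*}
I will bound the first term by the tail mass and the second by the block-aggregation error.

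\textbf{Tail term.} For a query row with full attention weights $p$ and $p_{\mathcal{S}}$ the sub-vector on the kept blocks, the truncated-and-renormalized vector is $\tilde p = p_{\mathcal{S}}/m$, where $m=\sum_{\mathcal{S}}p \ge 1-\eta$. A direct computation gives
\begin{align*}
\|\tilde p - p\|_1 = (1-m) + \sum_{\mathcal{S}} p\,(1/m - 1) = 2(1-m) \le 2\eta .
\end{align*}
The row output difference is $(\tilde p - p)^\top \mathbf{V}$, whose norm is at most $\|\tilde p-p\|_1 \max_t\|\mathbf{V}[t,:]\|$. Summing over rows and absorbing the constant and row normalization, I would state this as $\eta\|\mathbf{V}\|_F$. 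I will also note that $\sum_{\mathcal{S}}$ passes from block-level mass to token-level mass under Assumption~\ref{ass:block_coherence}. Since this is not exact, a second-order correction appears, and it joins the aggregation term below.

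\textbf{Selection/aggregation term.} Lemma~\ref{lem:dds_softmax} gives $\hat{\mathcal{S}}=\mathcal{S}^*$ with high probability. On that event the selected sets agree, and the remaining discrepancy comes only from the fact that selection and block scores are based on $\bar{\mathbf{q}}_i,\bar{\mathbf{k}}_j$ rather than $\boldsymbol{\mu}$'s and individual tokens. By Lemma~\ref{lem:subspace_embedding}, $\|\bar{\mathbf{q}}_i-\boldsymbol{\mu}_i\|=O(\sigma/\sqrt{B})$, with logarithmic and dimension factors absorbed into the $O$. Each selected block's logit then differs by $O(\sigma/\sqrt{B})$ after normalization. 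Softmax is $1$-Lipschitz from $\ell_\infty$ to $\ell_1$ up to a constant, so the weight vector restricted to $\tau$ blocks moves by $O(\tau\sigma/\sqrt{B})$ in $\ell_1$. Multiplying by $\mathbf{V}$ and using the operator norm yields $O(\tau\sigma/\sqrt B)\|\mathbf{V}\|_2$. Finally, I add the two terms and take a union bound over the failure events of Lemmas~\ref{lem:subspace_embedding} and~\ref{lem:dds_softmax}.

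\textbf{Main obstacle.} The hardest step is making the second term rigorous. The statement is deterministic while its inputs are probabilistic, and I must reconcile block-level softmax mass in Assumption~\ref{ass:heavy_tail} with token-level attention weights. I would first try to define the comparison at block granularity, treating each block as a super-token with value $\bar{\mathbf{v}}_j$. I would then charge the within-block deviation $\boldsymbol{\epsilon}_t$ to the $\sigma/\sqrt{B}$ term via Lemma~\ref{lem:subspace_embedding}, interpreting the bound as holding with probability $1-O(\delta)$ and with constants and log factors hidden in the $O(\cdot)$.
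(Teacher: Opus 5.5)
\emph{Tail term.} Your decomposition is the same as the paper's: insert $\mathbf{A}^{\text{oracle}}$ on the true top-$\tau$ blocks, charge $\mathbf{A}^{\text{full}}-\mathbf{A}^{\text{oracle}}$ to the tail mass, and charge $\mathbf{A}^{\text{oracle}}-\mathbf{A}^{\text{sparse}}$ to sketching via Lemma~\ref{lem:dds_softmax}. However, two of your steps do not go through, and the paper's proof skips over the same two points. Your row bound $\|(\tilde p-p)^\top\mathbf{V}\|_2\le 2\eta\max_t\|\mathbf{V}[t,:]\|_2$ is correct. Summing over the $n$ query rows, though, gives $2\eta\sqrt{n}\max_t\|\mathbf{V}[t,:]\|_2$. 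Since $\|\mathbf{V}\|_F\le\sqrt{n}\max_t\|\mathbf{V}[t,:]\|_2$, ``absorbing the row normalization'' goes the wrong way. The bound itself fails, not just this derivation. Take $\sigma=0$ and the same block logits for all queries, so that every row puts its tail mass $\eta$ (up to a negligible amount) on one common block $j_0\notin\mathcal{S}_i^*$. Let $\mathbf{V}$ vanish except that each row of block $j_0$ equals $v$. Even with exact selection, every output row is off by essentially $\eta v$, so the left side is about $\eta\sqrt{n}\|v\|_2$. The right side is $\eta\|\mathbf{V}\|_F=\eta\sqrt{B}\|v\|_2$, since the second term vanishes at $\sigma=0$; it is short by a factor $\sqrt{b}$. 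The paper's H\"older step fails for the same reason. A max-row-sum bound controls $\|\cdot\|_{\infty\to\infty}$, but $\|\mathbf{M}\mathbf{V}\|_F\le\|\mathbf{M}\|_2\|\mathbf{V}\|_F$ with $\|\mathbf{M}\|_2\le(\|\mathbf{M}\|_{1\to 1}\|\mathbf{M}\|_{\infty\to\infty})^{1/2}$ also needs the column sums of $|\mathbf{A}^{\text{full}}-\mathbf{A}^{\text{oracle}}|$. These can be of order $n\eta$, for example an unselected block, or a retained sink whose renormalization increment is hit by every query. What you can actually prove here is $2\eta\sqrt{n}\max_t\|\mathbf{V}[t,:]\|_2$.

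\emph{Sketching term.} The $\sigma/\sqrt{B}$ term has no source in the algorithm. Sketch\&Walk runs exact softmax on the original per-head $\mathbf{Q},\mathbf{K},\mathbf{V}$ inside the selected blocks, and $\bar{\mathbf{q}}_i,\bar{\mathbf{k}}_j$ enter only through selection. So on the event $\hat{\mathcal{S}}=\mathcal{S}^*$ you have $\mathbf{A}^{\text{sparse}}=\mathbf{A}^{\text{oracle}}$ exactly. Perturbing block logits by $O(\sigma/\sqrt{B})$, like the paper's ``per-entry error within blocks,'' analyzes a block-pooled attention that is never computed. Your super-token fallback with values $\bar{\mathbf{v}}_j$ would likewise change the quantity being bounded. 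Even if you grant a per-row $\ell_1$ perturbation of $O(\tau\sigma/\sqrt{B})$, the Frobenius norm over $n$ rows picks up a factor $\sqrt{n}$; the paper writes it and then drops it. The block-versus-token mass mismatch you flag cannot be charged to $\sigma/\sqrt{B}$ either. The token mass on block $j$ is $Z^{-1}\sum_t e^{\ell_{st}}$, and its gap from the softmax of block-averaged scores depends on the per-token spread of $\ell_{st}$, which does not shrink with $B$. Finally, Lemma~\ref{lem:dds_softmax} concerns the top-$\tau$ of one layer's sketched scores. At a layer $\ell>0$, Sketch\&Walk instead selects the top-$\tau$ of $R^{\ell}=R^{\ell-1}(\hat{\mathbf{A}}^{\ell}_{\text{block}})^s$, which by design can differ from the per-layer top-$\tau$. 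So the lemma does not give $\hat{\mathcal{S}}=\mathcal{S}^*$ there. Where it does apply, it holds only with probability $1-2\delta$ and under conditions on the sketch dimension, $B$ and a gap $\gamma$ that the statement does not assume. A sound version conditions on the event that the set actually selected carries token-level mass at least $1-\eta$ in every row. On that event the error is purely the truncation-and-renormalization term, bounded by $2\eta\max_t\|\mathbf{V}[t,:]\|_2$ per row.
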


Detailed proofs of the above lemmas and theorems are provided in Appendix~\ref{app:error_bounds}.

%%%%%%%%%%%%%%%%% MOTIVATION
\section{Why \methodname{} for Sparse Attention?}

Many existing sparse attention methods such as \citep{tang2024quest,yan2025adamas,lai2025flexprefill} select key blocks using layer-wise relevance scores that approximate direct attention. This design introduces two implicit assumptions: (i) that \emph{block relevance can be determined from direct attention scores}, and (ii) that \emph{block selection can be performed independently at each layer}. While this might be effective in some tasks, we believe these assumptions break down in deep transformer architecture, i.e. LLMs.

From a theoretical perspective, the failure of direct-score selection follows from the properties of inner-product–based relevance measures. Attention scores are defined by inner products and therefore induce a non-metric similarity measure. As shown in Remark~\ref{rem:non_metric}, the inner product violates the triangle inequality. Thus, the existence of a block sequence $i \!\to\! j \!\to\! k$ with large pairwise relevance scores does not imply a large direct relevance score between $i$ and $k$. Under per-layer block selection, a query block $i$ will not select a key block $k$ when their relevance is expressed through intermediate blocks.
Such relevance becomes apparent only after attention is accumulated in deeper layers~\citep{cai2024pyramidkv,abnar2020quantifying}.
However, block $k$ was excluded from sparse attention in earlier layers, so this dependency cannot be recovered when it later becomes important.

\begin{wraptable}{r}{0.55\columnwidth}
\vspace{-1.0em}
\centering
\caption{End-to-End performance of Sketch\&Walk on RULER.}
\label{tab:ruler_end2end}
\resizebox{0.55\columnwidth}{!}{
\begin{tabular}{llccccc|c}

\toprule
\multirow{2}{*}{\textbf{Model}} & \multirow{2}{*}{\textbf{Method}} & 
\multicolumn{6}{c}{\textbf{Context Length}} \\
\cmidrule{3 - 8}
& & 4K & 8K & 16K & 32K & 64K & Average \\

\midrule

\multirow{2}{*}{\textbf{L3.1 8B}} 
& Dense        & 95.31 & 95.31 & 94.06 & 89.69 & 82.81 & 91.44 \\ 
& Sketch\&Walk & 93.64 & 95.00 & 93.13 & 87.50 & 81.50 & 90.15 \\ 

\bottomrule
\end{tabular}
}
\vspace{-0.8em}
\end{wraptable}

This limitation is also evident empirically. We visualize attention matrices from a layer of the Llama-3.1-8B-Instruct model~\cite{llama31} in Figure~\ref{fig:random_walk}. The matrix $A^1$ corresponds to direct attention scores, which can be viewed as an oracle relevance metric commonly used by existing sparse attention methods for top-$k$ block selection. While $A^1$ captures only immediate interactions, higher-order compositions such as $A^4$ and $A^7$ gradually reveal structure that emerges through repeated composition of attention operators. Blocks that appear weakly connected under $A^1$ can receive substantial mass under higher powers, \emph{indicating their relevance in later layers}.

These observations highlight a fundamental gap: \emph{block relevance is not solely determined by the current layer’s attention scores}. Instead, relevance emerges through the composition of attention across layers, and block selection for sparse attention must account for this accumulation in order to preserve information required by subsequent layers. This motivates selecting blocks based on higher-order attention structure rather than per-layer scores.

Sketch\&Walk addresses this issue by explicitly accumulating attention across layers. At each layer, we estimate a block-level attention matrix $\hat A^k_{\mathrm{block}}$ using Small-World Sketching and maintain a Sketch-Determined Walk state

\begin{equation}
R_0 = (\hat A^{0}_{\mathrm{block}})^s, \qquad
R_k = R_{k-1} (\hat A^{k}_{\mathrm{block}})^s,
\end{equation}
which captures higher-order compositions of attention. As formalized in Lemma~B.12, selecting top-$\tau$ blocks based on $R^k[i,j]$ identifies blocks whose importance emerges consistently across layers, rather than those that are locally salient in a single layer. These blocks minimize reconstruction error with respect to the full transformer attention pattern.

By selecting blocks from the accumulated relevance scores in $R^k$, \methodname{} mitigates the limitations of layer-wise selection and enables sparse attention to better approximate dense multi-layer attention behavior.

%%%%%%%%%%%%%%%%%% EXPERIMENTS
\begin{table*}[t]
\centering
\caption{Decoding phase performance comparison on LongBench. AVG$^{\overline{pc}}$ excludes PassageCount.}
\label{tab:longbench_decode_llama31_8b}
\resizebox{\linewidth}{!}{
\begin{tabular}{lcccccccccccccccccc}
 & \rot{2wikimqa}
 & \rot{govreport}
 & \rot{hotpotqa}
 & \rot{lcc}
 & \rot{multifieldqa-en}
 & \rot{multinews}
 & \rot{musique}
 & \rot{narrativeqa}
 & \rot{passage-count}
 & \rot{passage-retrieval}
 & \rot{qasper}
 & \rot{qmsum}
 & \rot{repobench-p}
 & \rot{samsum}
 & \rot{trec}
 & \rot{triviaqa}
 & \rot{AVG}
 & \rot{AVG$^{\overline{pc}}$} \\
\toprule

\multicolumn{18}{c}{\textbf{Llama3.1-8B-Instruct}} \\

\cmidrule{2-19}

Dense
& 45.62 & 34.77 & 55.40 & 55.13 & 55.97 & 26.90 & 29.41 & 30.05
& 10.00 & 99.00 & 44.67 & 25.14 & 47.79 & 43.24 & 73.00 & 91.16
& 47.95 & 50.48 \\

\cmidrule{2-19}

Quest
& 46.95 & \textbf{34.80} & 55.93 & 49.96 & \textbf{55.98} & \textbf{27.32} & 30.89 & 28.79
& 8.03 & 99.00 & 42.21 & 24.19 & 45.12 & 42.64 & 69.00 & 89.90
& 46.92 & 49.51 \\

Adamas
& 45.81 & 34.72 & 56.07 & 54.20 & 56.20 & 26.89 & 31.20 & 30.09
& 8.00 & 99.00 & 43.84 & 25.05 & 46.96 & 43.14 & \textbf{73.00} & 91.29
& 47.84 & 50.49 \\

Sketch\&Walk
& \textbf{47.11} & 32.39 & \textbf{57.49} & \textbf{54.94} & 55.73 & 25.55 & \textbf{33.16} & \textbf{30.69}
& \textbf{8.75} & 99.00 & \textbf{44.44} & \textbf{25.47} & \textbf{47.78} & \textbf{43.39} & 70.50 & \textbf{91.40}
& \textbf{48.00} & \textbf{50.60} \\

\cmidrule{2-19}

\multicolumn{18}{c}{\textbf{Llama3.2-1B-Instruct}} \\

\cmidrule{2-19}

Dense
& 29.34 & 29.49 & 30.26 & 30.29 & 41.29 & 25.96 & 14.61 & 18.59
& 3.14  & 5.00  & 21.05 & 21.46 & 25.44 & 39.26 & 62.00 & 78.89
& 29.75 & 31.53 \\

\cmidrule{2-19}

Quest
& 28.93 & 26.61 & 30.14 & 31.50 & 38.95 & 24.54 & 14.51 & 17.18
& 2.50  & 5.00  & 19.79 & 20.39 & 33.48 & 38.37 & 60.00 & 71.58
& 28.97 & 30.73 \\

Adamas
& 29.45 & \textbf{29.41} & 29.06 & \textbf{30.70} & 40.08 & 23.86 & 15.10 & 17.94
& \textbf{3.14}  & 5.00  & 19.34 & 20.95 & 34.03 & 37.63 & \textbf{62.00} & 77.89
& 29.72 & 31.50 \\

Sketch\&Walk 
& \textbf{30.14} & 28.11 & \textbf{30.79} & 29.35 & \textbf{40.38}
& \textbf{24.91} & \textbf{15.54} & \textbf{19.30}
& \textbf{3.14} & \textbf{5.75} & \textbf{20.18} & \textbf{21.78}
& \textbf{36.02} & \textbf{38.79} & 61.50 & \textbf{79.81}
& \textbf{30.34} & \textbf{32.16} \\

\bottomrule
\end{tabular}
}
\end{table*}

\section{Experiments}\label{sec:experiments}
%%%%%%%%%%%% Settings %%%%%%%%%%%%%%%%
\subsection{Settings}
\textbf{Models.}
To maximize coverage of models supported by all baselines, we evaluate Sketch\&Walk on widely used long-context instruction-tuned models of different scales: Llama-3.1-8B-Instruct \citep{llama31}, Llama-3.2-1B-Instruct \citep{llama32}, and Qwen2-7B-Instruct~\cite{qwen2}. All model supports context lengths up to 128K tokens. We use the default chat template for all instruct models in subsequent experiments. Following \citet{tang2024quest}, we do not apply Sketch\&Walk to the first two layers, as these layers typically exhibit low achievable sparsity.

\textbf{Datasets.}
We evaluate methods on two benchmarks that pose complementary challenges for long-context understanding: 
(i) LongBench \citep{bai2024longbench}, a diverse benchmark covering question answering, reasoning, summarization, and code understanding tasks with input lengths up to tens of thousands of tokens; and 
(ii) RULER \citep{hsieh2024ruler}, a synthetic diagnostic benchmark designed to stress-test a model’s ability to retrieve and utilize sparse, position-sensitive information within very long contexts. 

\begin{wraptable}{r}{0.55\columnwidth}
\vspace{-1.0em}
\centering
\caption{Prefill phase performance comparison on RULER.}
\label{tab:ruler_prefill}
\resizebox{0.55\columnwidth}{!}{
\begin{tabular}{llccccc|c}

\toprule
\multirow{2}{*}{\textbf{Model}} & \multirow{2}{*}{\textbf{Method}} & \multicolumn{6}{c}{\textbf{Context Length}} \\
\cmidrule{3 - 8}
& & 4K & 8K & 16K & 32K & 64K & Average \\

\midrule

\multirow{4}{*}{\textbf{L3.1 8B}} 
& Dense        & 95.31 & 95.31 & 94.06 & 89.69 & 82.81 & 91.44 \\ 
& FlexPrefill  & 95.31 & 92.50 & 93.75 & 88.13 & 80.31 & 90.00 \\
& MInference   & 92.65 & \textbf{95.00} & 93.12 & 86.87 & \textbf{82.50} & 90.03 \\ 
& Sketch\&Walk & \textbf{96.56} & 94.06 & \textbf{94.06} & \textbf{89.63} & \textbf{82.50} & \textbf{91.35} \\ 

\midrule

\multirow{4}{*}{\textbf{L3.2 1B}} 
& Dense        & 74.38 & 68.44 & 65.31 & 66.88 & 66.56 & 68.31 \\ 
& FlexPrefill  & \textbf{74.06} & 63.75 & 60.63 & 61.88 & 55.94 & 63.25 \\ 
& MInference   & 73.44 & \textbf{69.06} & 64.69 & 64.38 & 62.71 & 66.86 \\ 
& Sketch\&Walk & \textbf{74.06} & 68.44 & \textbf{66.06} & \textbf{65.62} & \textbf{63.75} & \textbf{67.59} \\ 

\bottomrule
\end{tabular}
}
\vspace{-0.8em}
\end{wraptable}

\textbf{Implementation Details.}
All experiments are conducted on a single NVIDIA H100 GPU with 94\,GB of memory. We implement a custom inference pipeline in PyTorch to support efficient attention computation over long-context inputs. Our implementation leverages Triton \citep{tillet2019triton} to optimize GPU kernel performance and uses a block size of 64 tokens across all experiments. Following common practice in block-sparse attention, we always retain the first and last key blocks for each query block. All evaluations are performed using greedy decoding to ensure result consistency. Unless otherwise specified, \methodname{} uses a sketching dimension of 64 and sparsity exponent of 8.

\textbf{Baselines.}
We compare Sketch\&Walk against dense attention and state-of-the-art sparse attention methods. For prefill optimization, we evaluate against MInference \citep{jiang2024minference} and FlexPrefill \citep{lai2025flexprefill}. For decode optimization, we compare against QUEST \citep{tang2024quest} and Adamas \citep{yan2025adamas}. For each baseline, we adopt the best configuration reported in its respective paper at the same sparsity level to ensure a fair comparison.

%%%%%%%%%%%% Accuracy %%%%%%%%%%%%%%%%
\subsection{Accuracy Evaluation}

\begin{wraptable}{r}{0.55\columnwidth}
\vspace{-1.0em}
\centering
\caption{Decoding phase performance comparison on RULER.}
\label{tab:ruler_decode}
\resizebox{0.55\columnwidth}{!}{
\begin{tabular}{llccccc|c}

\toprule
\multirow{2}{*}{\textbf{Model}} & \multirow{2}{*}{\textbf{Method}} & \multicolumn{6}{c}{\textbf{Context Length}} \\
\cmidrule{3 - 8}
& & 4K & 8K & 16K & 32K & 64K & Average \\

\midrule

\multirow{4}{*}{\textbf{L3.1 8B}} 
& Dense        & 95.31 & 95.31 & 94.06 & 89.69 & 82.81 & 91.44 \\ 
& Quest        & 91.88 & 92.81 & 92.81 & 87.50 & 80.31 & 89.06 \\ 
& Adamas       & 95.31 & 94.06 & \textbf{94.06} & \textbf{89.06} & 81.19 & 90.74 \\ 
& Sketch\&Walk & \textbf{95.94} & \textbf{96.19} & 93.13 & 88.63 & \textbf{81.50} & \textbf{91.08} \\ 

\bottomrule
\end{tabular}
}
\vspace{-0.8em}
\end{wraptable}

\textbf{Prefill Phase Comparison.}
Tables~\ref{tab:longbench_prefill_llama31_8b} and~\ref{tab:ruler_prefill} report the prefill-phase performance of \methodname{} compared against prefill-optimized sparse attention methods on LongBench and RULER, respectively. Evaluations are conducted at an 80\% attention sparsity level. Across both benchmarks, spanning short to long contexts (4K to 64K tokens) on RULER and diverse real-world tasks on LongBench, \methodname{} achieves the best overall performance among all compared methods. Notably, \methodname{} consistently preserves model accuracy across a wide range of context lengths and, in some settings, even improves performance (e.g. about 3\% on overall performance of Qwen2 on LongBench) while accelerating prefill computation.

\textbf{Decode Phase Comparison.}
Tables~\ref{tab:longbench_decode_llama31_8b} and~\ref{tab:ruler_decode} report the decode-phase performance of \methodname{} compared against decode-optimized sparse attention methods on LongBench and RULER, respectively, under an 80\% attention sparsity setting. Across both benchmarks, \methodname{} achieves competitive performance relative to existing baselines, closely matching dense attention and, in some cases, surpassing it. For example, we observe up to a 3\% accuracy improvement on the \textsc{musique} dataset.

\textbf{End-to-End Performance.}
Tables~\ref{tab:longbench_prefill_decode_llama31_8b} and~\ref{tab:ruler_end2end} show the end-to-end performance of \methodname{} on LongBench and RULER, respectively, when applied end-to-end across both the prefill and decode stages. On LongBench, \methodname{} closely matches dense attention across a wide range of tasks, with comparable average accuracy and minimal degradation under an 80\% sparsity setting. On RULER, \methodname{} maintains strong performance across context lengths from 4K to 64K tokens, exhibiting only a modest average accuracy drop relative to dense attention. Overall, these results demonstrate that \methodname{} preserves end-to-end model quality while benefiting from sparsity-induced efficiency gains throughout both phases of the full inference process.

\begin{table*}[t]
\centering
\caption{End-to-End performance of Sketch\&Walk on LongBench. Sketch\&Walk is evaluated at 80\% sparsity level. AVG$^{\overline{pc}}$ excludes PassageCount.}
\label{tab:longbench_prefill_decode_llama31_8b}
\resizebox{\linewidth}{!}{
\begin{tabular}{lcccccccccccccccccc}
 & \rot{2wiki\\mqa}
 & \rot{gov\\report}
 & \rot{hotpot\\qa}
 & \rot{lcc}
 & \rot{multifieldqa\\en}
 & \rot{multi\\news}
 & \rot{musique}
 & \rot{narrative\\qa}
 & \rot{passage\\count}
 & \rot{passage\\retrieval}
 & \rot{qasper}
 & \rot{qmsum}
 & \rot{repobench}
 & \rot{samsum}
 & \rot{trec}
 & \rot{trivia\\qa}
 & \rot{AVG}
 & \rot{AVG$^{\overline{pc}}$} \\
\toprule

\multicolumn{19}{c}{\textbf{Llama3.1-8B-Instruct}} \\

\cmidrule{2-19}

Dense
& 45.62 & 34.77 & 55.40 & 55.13 & 55.97 & 26.90 & 29.41 & 30.05
& 10.00 & 99.00 & 44.67 & 25.14 & 47.79 & 43.24 & 73.00 & 91.16
& 47.95 & 50.48 \\

Sketch\&Walk
& 45.54 & 33.22 & 56.51 & 53.78 & 54.53 & 25.96 & 33.11 & 31.18
& 6.53 & 98.00 & 44.14 & 25.17 & 48.63 & 43.69 & 70.00 & 92.08
& 47.63 & 50.37 \\

\cmidrule{2-19}

\multicolumn{19}{c}{\textbf{Llama3.2-1B-Instruct}} \\

\cmidrule{2-19}

Dense
& 29.34 & 29.49 & 30.26 & 30.29 & 41.29 & 25.96 & 14.61 & 18.59
& 3.14  & 5.00  & 21.05 & 21.46 & 25.44 & 39.26 & 62.00 & 78.89
& 29.75 & 31.53 \\

Sketch\&Walk
& 29.54 & 28.34 & 31.01 & 29.35 & 40.97 & 24.61 & 14.11 & 17.98
& 3.00  & 5.30  & 19.67 & 21.56 & 36.02 & 38.79 & 61.50 & 79.81
& 30.10 & 31.90 \\

\cmidrule{2 - 19}

\multicolumn{19}{c}{\textbf{Qwen2-7B-Instruct}} \\
\cmidrule{2-19}

Dense
& 43.71 & 32.12 & 43.99 & 48.64 & 47.31 & 24.39 & 25.26 & 23.82
& 5.50 & 69.00 & 46.74 & 22.95 & 47.66 & 33.59 & 55.00 & 83.97
& 40.85 & 43.21 \\

\methodname{}
& 42.14 & 31.59 & 42.55 & 48.27 & 46.92 & 24.96 & 23.73 & 24.69
& 5.50 & 65.00 & 45.59 & 23.28 & 55.26 & 46.56 & 76.00 & 89.51
& 43.22 & 45.74 \\

\bottomrule
\end{tabular}
}
\end{table*}

%%%%%%%%%%%% Efficiency %%%%%%%%%%%%%%%%
\subsection{Efficiency Evaluation}

\begin{wrapfigure}{r}{0.48\columnwidth}
\vspace{-1.0em}
\centering
\includegraphics[width=0.48\columnwidth]{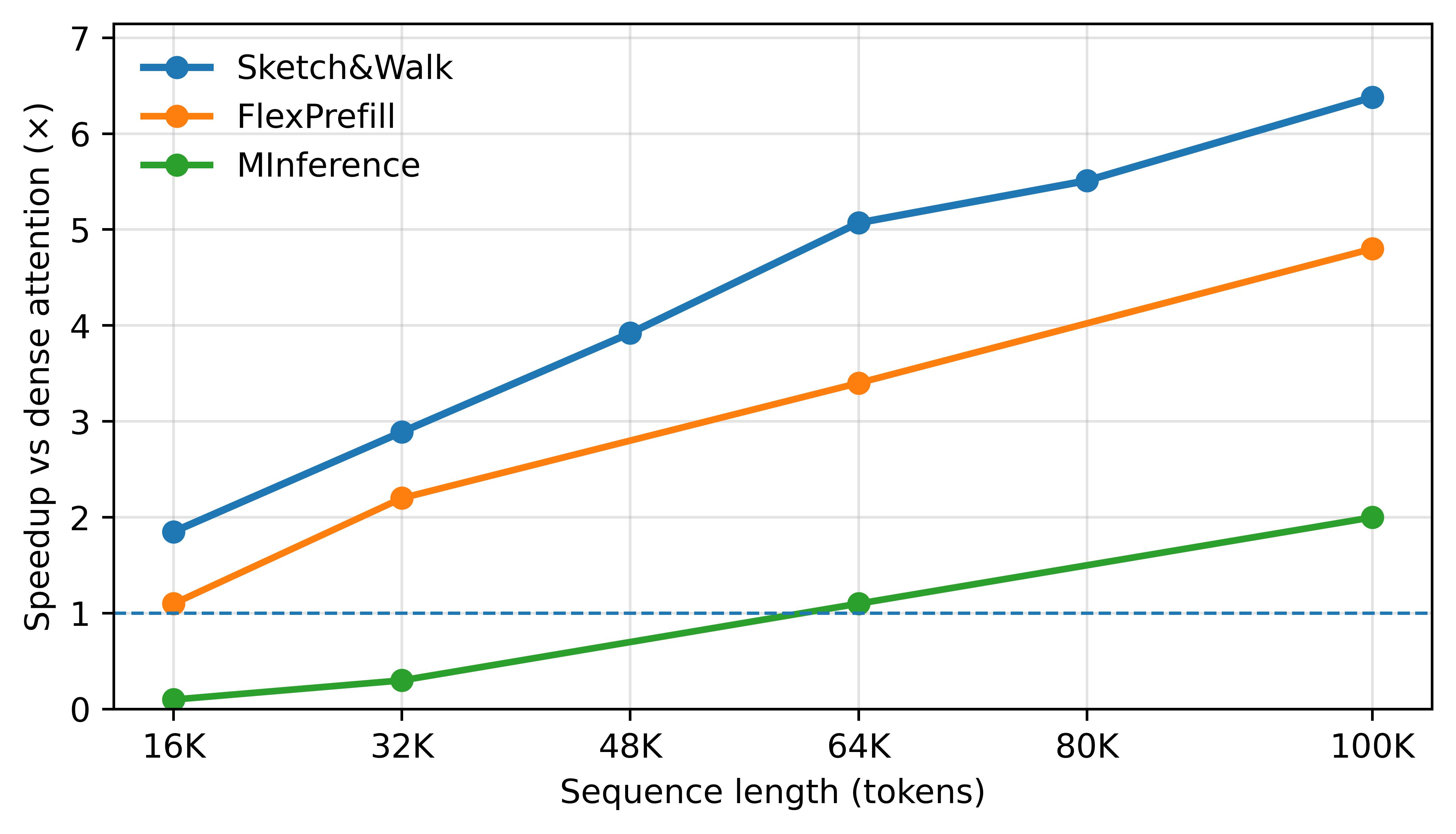}
\caption{Prefill Phase Acceleration}
\label{fig:prefill_acceleration}
\vspace{-0.8em}
\end{wrapfigure}

We evaluate the practical effectiveness of \methodname{} for inference acceleration across a diverse range of context lengths, from 16K to 128K tokens. In addition, we analyze the sparse attention kernel of \methodname{} to examine the overhead introduced by attention score estimation, and the sketch-determined walk operation.

\textbf{Prefill Phase Acceleration.}
In Figure~\ref{fig:prefill_acceleration}, we report the average speedup of each method relative to dense attention in prefill phase, computed from the ratio of time-to-first-token (TTFT) between the method and the dense attention, FlashAttention~\citep{dao2022flashattention}. We evaluate with Llama-3.1-8B-Instruct model under single-batch scenarios and a fixed sparsity level of 90\%. \methodname{} delivers strong and scalable acceleration, with speedups growing approximately linearly as context length increases. Unlike prior approaches such as MInference \citep{jiang2024minference}, which only become effective beyond sufficiently long contexts, \methodname{} provides consistent speedups across diverse sequence lengths. At the same sparsity level, \methodname{} also achieves the highest prefill-phase acceleration among all compared methods.

\textbf{Decode Phase Acceleration.}
In Figure~\ref{fig:decode_acceleration}, we report the average speedup of \methodname{} relative to dense attention during decoding, computed from the token throughput ratio between \methodname{} and the dense-attention baseline implemented with FlashAttention~\citep{dao2022flashattention}. We evaluate using the Llama-3.1-8B-Instruct model under a single-batch setting and a fixed sparsity level of 90\%. \methodname{} delivers consistent and scalable decoding acceleration, achieving up to a $1.6\times$ speedup.

\begin{wrapfigure}{r}{0.45\columnwidth}
\vspace{-1.0em}
\centering
\includegraphics[width=0.45\columnwidth]{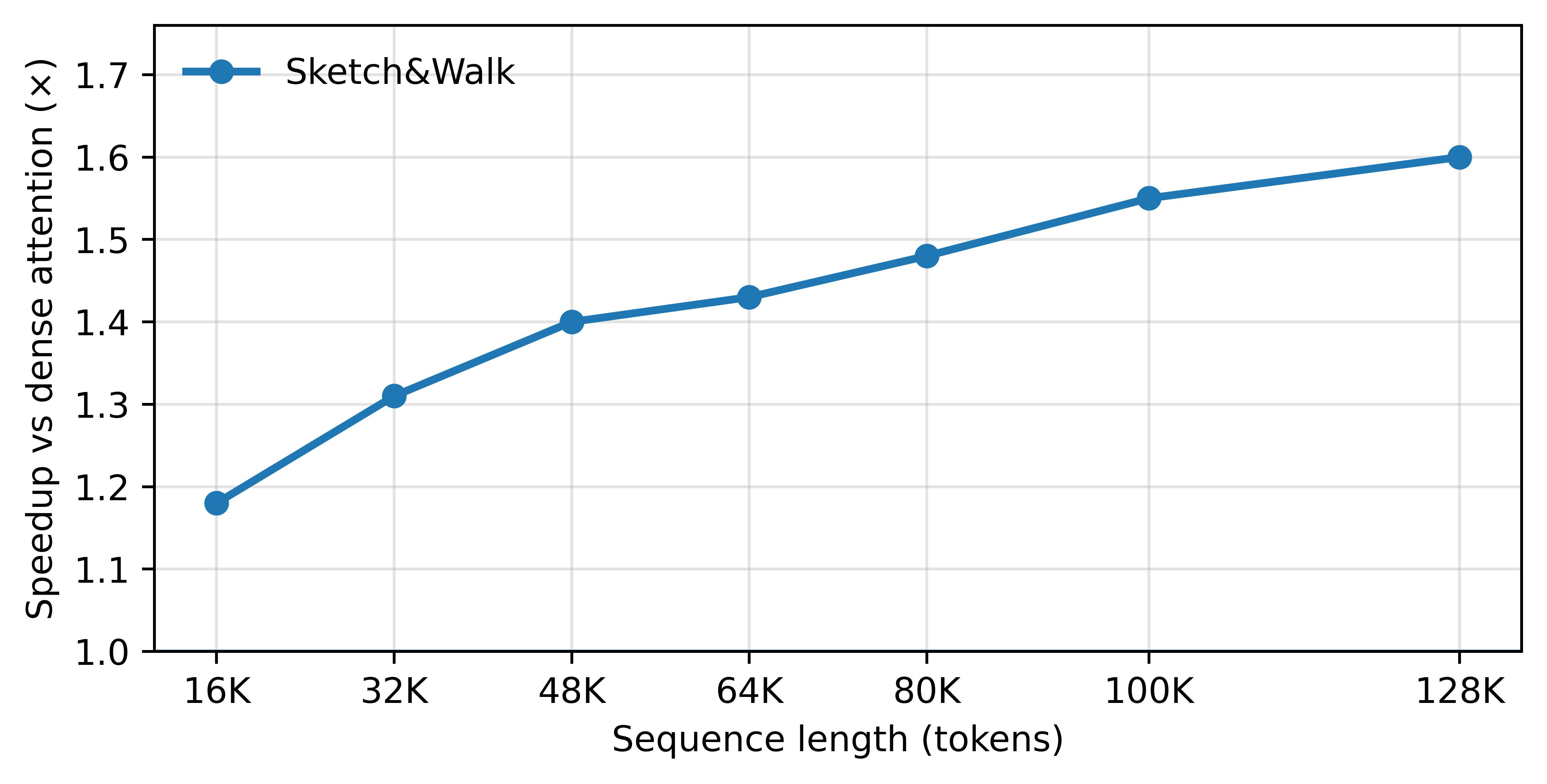}
\caption{Decode Phase Acceleration}
\label{fig:decode_acceleration}
\vspace{-0.8em}
\end{wrapfigure}

\textbf{Kernel Analysis.}
We analyze the computational overhead introduced by attention score estimation and the sketch-determined walk operation relative to the total cost of a sparse attention computation in \methodname{}, as well as to full dense attention. Evaluations are conducted across multiple context lengths under different fixed sparsity budgets. Our implementation employs a custom CUDA kernel for fast Hadamard sketching and efficient Triton kernels for sparse attention in both the prefill and decode phases. As shown in Figure~\ref{fig:kernel_analysis}, the overhead from Hadamard sketching and random-walk estimation contributes only a small fraction of the overall sparse attention cost. Sparse attention incurs significantly lower cost than dense attention, with the acceleration increasing as sequence length grows.

\begin{figure*}
    \centering
    \includegraphics[width=0.85\linewidth]{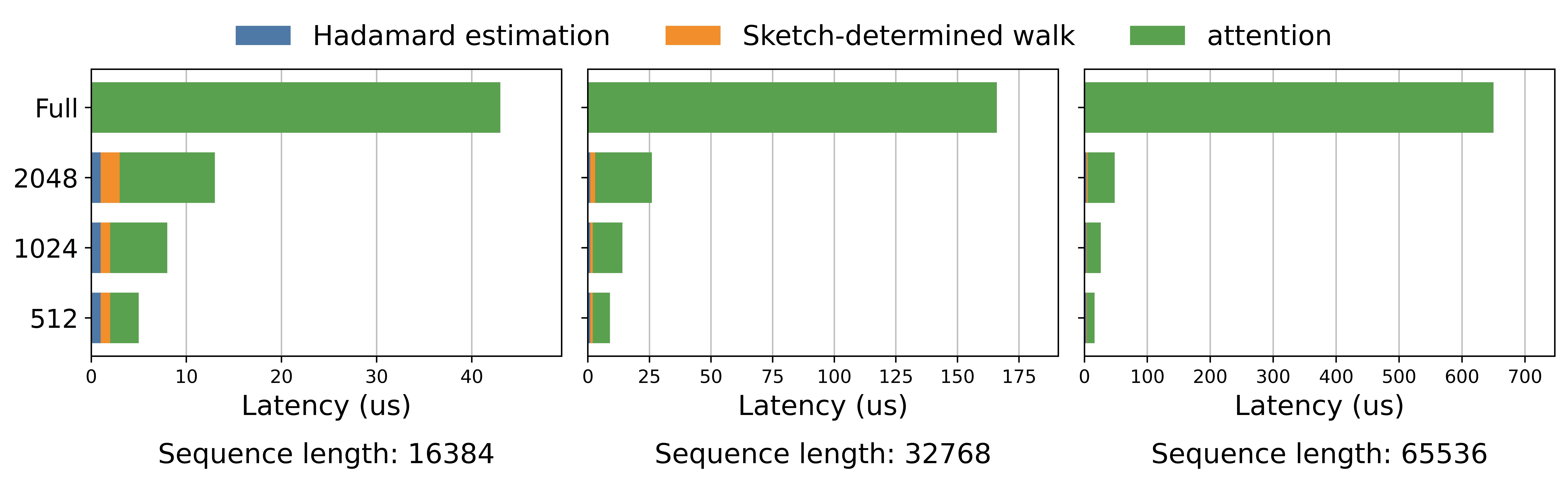}
    \caption{\methodname{} Sparse Attention Kernel Analysis}
    \label{fig:kernel_analysis}
    \vspace{-0.5em}
\end{figure*}

%%%%%%%%%%%% Ablation %%%%%%%%%%%%%%%%
\subsection{Ablation Studies}

\textbf{Robustness to sketch dimension.}
Figure~\ref{fig:ablation}(a) shows the effect of the Hadamard sketch dimension.
Among 3 datasets, \methodname{} exhibits consistently strong performance over a wide range of sketch sizes.
Even with small sketch dimensions (e.g., 16–32), accuracy remains close to that of larger sketch size.
\methodname{} remains effective under aggressive dimensionality reduction.

\textbf{Sensitivity to walk degree.}
Figure~\ref{fig:ablation}(b) studies the impact of the walk degree.
Performance improves with increasing walk degrees and stabilizes thereafter, suggesting that only limited attention composition is needed in practice. Notably, increasing the walk degree does not introduce instability or degrade performance, indicating that the Sketch-Determined Walk behaves reliably across a broad range of settings.

\textbf{Robustness under varying sparsity.}
Figure~\ref{fig:ablation}(c) reports results under different sparsity ratios. \methodname{} maintains near-lossless accuracy even at high sparsity levels.
\methodname{} remains effective in highly constrained inference regimes and can achieve favorable accuracy--efficiency trade-offs.

\begin{figure*}
    \centering
    \includegraphics[width=0.80\linewidth]{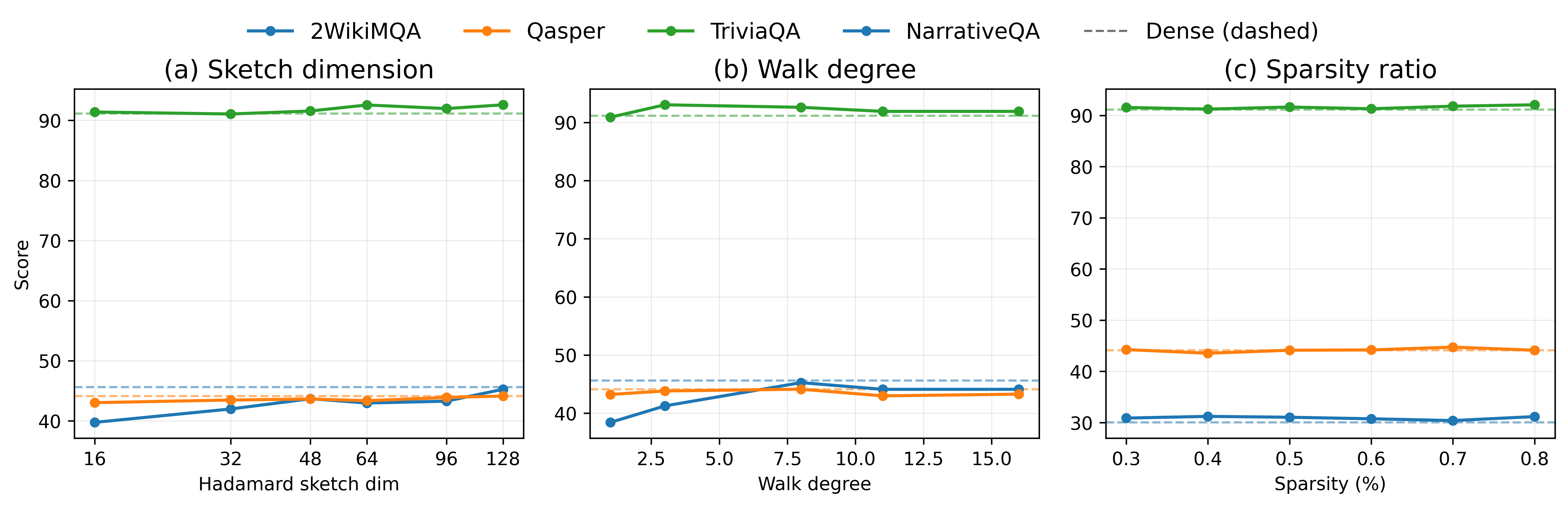}
    \caption{Ablation Studies on different parameters and sparsity level of \methodname{}}
    \label{fig:ablation}
    \vspace{-1em}
\end{figure*}

%%%%%%%%%%%%%%%%%%% RELATED WORKS
\section{Related Works}\label{sec:related_work}

A large body of work has addressed the computational and memory bottlenecks of attention in long-context settings.
These methods typically exploit natural sparsity in the attention mechanism, identified using inexpensive heuristics or approximations, to either reduce computation on less important token pairs or reduce the memory footprint of key--value (KV) caches.
Prior approaches often target a specific stage of inference (prefill or decode), rely on auxiliary training, or introduce architectural constraints.
In contrast, \methodname{} is a training-free sparse attention method that applies end-to-end to both prefill and decode phases.

\textbf{Sparse attention for prefilling.}
Several methods accelerate the prefill phase by restricting attention to a subset of interactions.
Static sparsity patterns, such as Sparse Transformer~\citep{child2019sparse_transformer}, Longformer~\citep{beltagy2020longformer}, and BigBird~\citep{zaheer2020bigbird}, reduce complexity using fixed local or block-based layouts.
Dynamic sparsity methods aim to adapt attention patterns to the input.
MInference~\citep{jiang2024minference} selects important interactions using pre-determined sparsity pattern, while FlexPrefill~\citep{lai2025flexprefill} leverages compiler-supported flexible block patterns.
Although effective, these approaches often incur nontrivial pre-computation or scheduling overhead.
Training-based methods such as SeerAttention~\citep{gao2024seerattention} introduce sparsity through learned gating mechanisms, improving efficiency at the cost of additional training and potentially reduced generalization.

\textbf{Efficient attention for decoding.}
Another line of work focuses on reducing memory and computation during decoding by dynamically selecting, pruning, or compressing the KV cache.
Methods such as H$_2$O~\citep{zhang2023h2o}, TOVA~\citep{oren2024transformers_msrnns}, and InfLLM~\citep{xiao2024infllm} discard tokens based on query-dependent importance criteria.
StreamingLLM~\citep{xiao2024streamingllm} retains only initial and recent tokens to achieve stable latency and memory usage.
More recent approaches, including Quest~\citep{tang2024quest} and Rectified Sparse Attention~\citep{sun2025resa}, adaptively select tokens to maintain accuracy under high sparsity.
These methods primarily target the decode phase.

Among these methods, Adamas~\citep{yan2025adamas} follows the close direction to \methodname{}.
Both approaches leverage the Hadamard transform to obtain lightweight representations of queries and keys.
However, Adamas uses the transformed representations directly as a distance-based criterion for token selection and operates in an independent, per-layer manner during decoding.
In contrast, \methodname{} uses sketched representations to estimate attention score and explicitly accounts for cross-layer composition through the Sketch-Determined Walk.
This design allows \methodname{} to capture dependencies that emerge across layers and to apply uniformly to both prefill and decode phases.

\textbf{Alternative attention architectures.}
Beyond sparse attention methods, several works propose alternative attention mechanisms or architectural changes.
These include sliding-window attention~\citep{beltagy2020longformer}, linear or gated attention~\citep{qiu2025gatedattention}, and state-space models~\citep{gu2023mamba}.
More recent designs such as Native Sparse Attention~\citep{yuan2025nsa} and DeepSeek Sparse Attention~\citep{deepseek2025v32} integrate sparsity at the architectural level.
While effective in some regimes, these approaches typically require model modification or retraining.
By contrast, \methodname{} is a post-training method that accelerates inference without architectural changes, proxy objectives, or additional training.

\textbf{Randomized Algorithms Drive Sparsity.}
Randomized algorithms, particularly hashing algorithms and their variants~\citep{broder1997resemblance,broder1998minwise,charikar2002similarity,ainr14,ar15,ailrs15,alrw17,li2019generalization,ll19_neurips,ll21_icml,ll21,li2023oporp,znv20+,zlw+18,pan2025alinfik}, have been instrumental in enabling sparsity-driven efficiency in machine learning systems. The MONGOOSE framework~\citep{chen2021mongoose} introduced learnable hashing for efficient neural network training by dynamically sparsifying gradient computations. Building on maximum inner product (MaxIP) data structures, \citet{xu2021breaking} broke the linear iteration cost barrier for sparse conditional gradient methods. \citet{guo2025zeroth} leveraged randomized empirical Fisher estimation to identify very sparse parameter subsets of LLMs, enabling memory-efficient zeroth-order fine-tuning with transferable static sparsity. More recently, Zen~\citep{wang2025zen} leverages hierarchical hashing to achieve sparsity-driven data synchronization for distributed training. However, randomized sparsity often introduces an accuracy-efficiency tradeoff~\citep{xu2025compression}. To address this, soft prompts have been shown to recover the performance loss of randomized algorithm-driven sparse LLMs in a transferable manner~\citep{xu2024soft}, and SIRIUS~\citep{zhou2024sirius} introduced contextual sparsity with correction mechanisms for efficient inference.

%%%%%%%%%%%%%%% CONCLUSION
\section{Conclusion}
We introduce \methodname{}, a training-free sparse attention method that applies uniformly to both the prefill and decode phases of LLM inference. By guiding sparse selection using sketched, walk-accumulated attention estimates across layers, \methodname{} achieves substantial inference speedups while preserving model quality. Empirically, \methodname{} maintains accuracy with minimal degradation, and in some cases yields improvements over dense attention, at an impressive 80\% sparsity. These results demonstrate that accounting for cross-layer attention composition enables effective and robust sparse attention for long-context inference.

\newpage
\appendix

%%%%%%%%%%%%%%%%%%%% APPENDIX THEORY

\section{Theoretical Analysis}
\label{sec:theory}

\subsection{Notation Summary}

For ease of reference, we summarize the key mathematical notation used throughout this section in Table~\ref{tab:notation}.

\begin{table}[h!]
\centering
\footnotesize
\caption{Summary of notation used in theoretical analysis.}
\label{tab:notation}
\begin{tabular}{@{}llll@{}}
\toprule
\textbf{Symbol} & \textbf{Description} & \textbf{Symbol} & \textbf{Description} \\
\midrule
\multicolumn{4}{@{}l}{\textit{Dimensions, Indices \& Basic Matrices}} \\
$n$ & Total number of tokens & $d$ & Head dimension \\
$B$ & Block size (tokens per block) & $b$ & Number of blocks, $b = \lceil n/B \rceil$ \\
$L$ & Number of transformer layers & $h$ & Number of attention heads \\
$k, r$ & Sketch dimension & $\tau$ & Top blocks for sparse attention \\
$\mathbf{Q}, \mathbf{K}, \mathbf{V}$ & Query, key, value matrices & $\mathbf{Q}^{(i)}, \mathbf{K}^{(i)}$ & Submatrices for block $i$ \\
$\bar{\mathbf{q}}_i, \bar{\mathbf{k}}_i$ & Block-averaged vectors & $\tilde{\mathbf{q}}_i, \tilde{\mathbf{k}}_i$ & Sketched block vectors \\
$\mathbf{q}_t^{(i)}$ & $t$-th token in query block $i$ & & \\
\midrule
\multicolumn{4}{@{}l}{\textit{Block Coherence Model \& Sketching Transforms}} \\
$\boldsymbol{\mu}_i$ & Block centroid for block $i$ & $\boldsymbol{\epsilon}_t$ & Noise for token $t$ \\
$\sigma^2$ & Intra-block variance bound & $\mathbf{H}_d$ & SRHT matrix \\
$\mathbf{D}$ & Diagonal Rademacher matrix & $\mathbf{W}$ & Walsh-Hadamard matrix \\
$\mathbf{S}$ & Coordinate sampling matrix & & \\
\midrule
\multicolumn{4}{@{}l}{\textit{Block Attention \& Sketch-Determined Walk}} \\
$A_{ij}^{\text{true}}$ & True block attention score & $\hat{A}_{ij}^{\text{SWS}}$ & Estimated block score (SWS) \\
$\hat{\mathbf{A}}_{\text{block}}^k$ & Sketched block attention at layer $k$ & $R^k$ & Walk state matrix at layer $k$ \\
$\mathbf{W}^k$ & $(\hat{\mathbf{A}}_{\text{block}}^k)^s$, exponentiated attention & $s$ & Sparsity exponent ($s > 1$) \\
\midrule
\multicolumn{4}{@{}l}{\textit{Block Selection Sets}} \\
$\mathcal{B}_i$ & The $i$-th block of tokens & $\mathcal{S}_i^*$ & True top-$\tau$ blocks for block $i$ \\
$\hat{\mathcal{S}}$ & Estimated top-$\tau$ blocks & $\mathcal{R}^L_i$ & Reachable blocks after $L$ layers \\
$\mathcal{C}_i$ & Consistently important blocks & & \\
\midrule
\multicolumn{4}{@{}l}{\textit{Attention Matrices \& Outputs}} \\
$\mathbf{A}^{\text{full}}$ & Full (dense) attention matrix & $\mathbf{A}^{\text{sparse}}$ & Sparse attention matrix \\
$\mathbf{A}^{\text{oracle}}$ & Oracle sparse attention & $\mathbf{O}^{\text{full}}$ & Full attention output \\
$\mathbf{O}^{\text{Sketch\&Walk}}$ & Sketch\&Walk output & $\mathbf{O}^{\text{sparse}}_\ell$ & Sparse output at layer $\ell$ \\
\midrule
\multicolumn{4}{@{}l}{\textit{Error, Probability \& Markov Chain Parameters}} \\
$\eta$ & Tail mass & $\gamma$ & Spectral gap \\
$\delta$ & Failure probability & $\epsilon_B$ & Token-space sketching error \\
$\epsilon_H$ & Feature-space sketching error & $\epsilon_{\text{total}}$ & Combined total error \\
$\mathbf{P}, \tilde{\mathbf{P}}$ & Full/sparse transition matrices & $\boldsymbol{\pi}, \tilde{\boldsymbol{\pi}}$ & Stationary distributions \\
$\lambda_2(\mathbf{P})$ & Second largest eigenvalue & & \\
\bottomrule
\end{tabular}
\end{table}

\subsection{Key Definitions and Assumptions}

\begin{definition}[Block Attention Score]
\label{def:block_attention:app}
The true block attention score between query block $i$ and key block $j$ is defined as:
\begin{align*}
    A_{ij}^{\text{true}} = \frac{1}{B^2} \sum_{s=1}^{B} \sum_{t=1}^{B} \mathbf{Q}^{(i)}[s,:] \cdot \mathbf{K}^{(j)}[t,:]^\top
\end{align*}
\end{definition}

\begin{assumption}[Block Coherence]
\label{ass:block_coherence:app}
Within each block $\mathcal{B}_i$, token embeddings exhibit \textbf{semantic coherence}: tokens within a block share similar contextual representations. Formally, for query block $i$, each token $\mathbf{q}_t^{(i)}$ can be decomposed as:
\begin{align*}
    \mathbf{q}_t^{(i)} = \boldsymbol{\mu}_i + \boldsymbol{\epsilon}_t, \quad \text{where } \mathbb{E}[\boldsymbol{\epsilon}_t] = \mathbf{0}, \; \text{Var}(\boldsymbol{\epsilon}_t) \leq \sigma^2 \mathbf{I}
\end{align*}
with intra-block variance $\sigma^2 \ll \|\boldsymbol{\mu}_i\|^2$. This assumption is empirically justified by the observation that consecutive tokens in natural language often belong to the same semantic unit (phrase, clause, or sentence), and positional encodings induce smooth variations within local windows.
\end{assumption}

\begin{assumption}[Heavy-Tailed Block Attention Distribution]
\label{ass:heavy_tail:app}
The distribution of block-level attention scores follows a heavy-tailed pattern: for each query block $i$, there exists a small subset $\mathcal{S}_i^* \subset \{1, \ldots, b\}$ with $|\mathcal{S}_i^*| = \tau \ll b$ such that:
\begin{align*}
    \sum_{j \in \mathcal{S}_i^*} \text{softmax}(A_{ij}^{\text{true}}) \geq 1 - \eta
\end{align*}
for small $\eta > 0$ (typically $\eta < 0.1$). This reflects the empirical finding that attention in LLMs concentrates on semantically relevant positions, with most blocks receiving negligible attention mass.
\end{assumption}

\subsection{Supporting Lemmas}

\subsubsection{Sketching Error Bounds}\label{app:error_bounds}

\begin{lemma}[Token-space Sketching: Subspace Embedding via Block Averaging]
\label{lem:subspace_embedding:app}
Under Assumption~\ref{ass:block_coherence:app}, the block average $\bar{\mathbf{q}}_i = \frac{1}{B}\sum_{t=1}^{B} \mathbf{q}_t^{(i)}$ satisfies:
\begin{align*}
    \|\bar{\mathbf{q}}_i - \boldsymbol{\mu}_i\|_2 \leq \frac{\sigma\sqrt{2d \log(2d/\delta)}}{\sqrt{B}}
\end{align*}
with probability at least $1 - \delta$.
\end{lemma}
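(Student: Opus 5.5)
We have $\bar{\mathbf{q}}_i - \boldsymbol{\mu}_i = \frac{1}{B}\sum_{t=1}^{B}\boldsymbol{\epsilon}_t$, so the claim is a concentration bound for an average of $B$ mean-zero noise vectors. The plan is to reduce it to $d$ scalar concentration bounds, one per coordinate, and combine them with a union bound. The $\log(2d/\delta)$ in the statement has exactly this shape: failure probability $\delta/(2d)$ per coordinate and per sign.

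Assumption~\ref{ass:block_coherence:app} only gives a covariance bound, $\text{Var}(\boldsymbol{\epsilon}_t)\preceq\sigma^2\mathbf{I}$. That yields Chebyshev-type tails, which are too weak for a $\log(1/\delta)$ dependence. So I will read the assumption, as is standard, as saying that the $\boldsymbol{\epsilon}_t$ are independent across $t$ and that each coordinate $\epsilon_{t,m}$ is sub-Gaussian with variance proxy $\sigma^2$, i.e.
\[
\mathbb{E}\bigl[e^{\lambda\epsilon_{t,m}}\bigr]\le e^{\lambda^2\sigma^2/2}.
\]
I expect this modeling step to be the main obstacle, since the proof is only as rigorous as this strengthening. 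I will state it explicitly. If one insists on the variance bound alone, the fallback is Chebyshev/Markov on $\|\cdot\|_2^2$, giving $\sigma\sqrt{d/(B\delta)}$ instead.

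With that in place, fix a coordinate $m\in\{1,\dots,d\}$ and let $Z_m=\frac{1}{B}\sum_{t}\epsilon_{t,m}$.
\begin{enumerate}
\item By independence, $Z_m$ is sub-Gaussian with variance proxy $\sigma^2/B$.
\item The Chernoff bound then gives
\[
\Pr\bigl(|Z_m|>u\bigr)\le 2\exp\!\Bigl(-\frac{Bu^2}{2\sigma^2}\Bigr).
\]
\item Setting the right-hand side equal to $\delta/d$ gives $u=\sigma\sqrt{2\log(2d/\delta)/B}$.
\item A union bound over the $d$ coordinates shows that, with probability at least $1-\delta$, every $|Z_m|\le u$.
\end{enumerate}

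On that event,
\[
\|\bar{\mathbf{q}}_i-\boldsymbol{\mu}_i\|_2\le\sqrt{d}\,\|Z\|_\infty\le\frac{\sigma\sqrt{2d\log(2d/\delta)}}{\sqrt{B}},
\]
which is exactly the claimed bound. This passage from $\ell_\infty$ to $\ell_2$ is loose, and a Gaussian-norm concentration argument would give $\sigma(\sqrt{d}+\sqrt{2\log(1/\delta)})/\sqrt{B}$. The stated form, however, only needs the simple union-bound route.
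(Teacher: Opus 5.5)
Your proof is correct and follows the paper's argument essentially step for step. You write $\bar{\mathbf{q}}_i-\boldsymbol{\mu}_i$ as the averaged noise, apply the per-coordinate tail bound $2\exp(-Bu^2/(2\sigma^2))$, union-bound over the $d$ coordinates at level $\delta/d$, and pass from $\ell_\infty$ to $\ell_2$ with the factor $\sqrt{d}$. You also make Assumption~\ref{ass:block_coherence:app} explicitly stronger (independent tokens, sub-Gaussian coordinates), which is exactly what the paper uses silently when it invokes ``Hoeffding's inequality for sub-Gaussian random variables'' from a bare variance bound, so stating it is an improvement rather than a deviation.
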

\begin{proof}
\textbf{Step 1 (Decomposition):} By Assumption~\ref{ass:block_coherence:app}, each token $\mathbf{q}_t^{(i)} = \boldsymbol{\mu}_i + \boldsymbol{\epsilon}_t$ where $\mathbb{E}[\boldsymbol{\epsilon}_t] = \mathbf{0}$ and $\text{Var}(\boldsymbol{\epsilon}_t) \leq \sigma^2 \mathbf{I}$. Thus:
\begin{align*}
    \bar{\mathbf{q}}_i = \frac{1}{B}\sum_{t=1}^{B} (\boldsymbol{\mu}_i + \boldsymbol{\epsilon}_t) = \boldsymbol{\mu}_i + \underbrace{\frac{1}{B}\sum_{t=1}^{B} \boldsymbol{\epsilon}_t}_{\triangleq \bar{\boldsymbol{\epsilon}}_i}
\end{align*}

\textbf{Step 2 (Per-coordinate concentration):} For each coordinate $j \in [d]$, define $\bar{\epsilon}_{i,j} = \frac{1}{B}\sum_{t=1}^{B} \epsilon_{t,j}$. Since $\epsilon_{t,j}$ are independent with $\mathbb{E}[\epsilon_{t,j}] = 0$ and $\text{Var}(\epsilon_{t,j}) \leq \sigma^2$, by Hoeffding's inequality for sub-Gaussian random variables:
\begin{align*}
    \mathbb{P}\left[|\bar{\epsilon}_{i,j}| > t\right] \leq 2\exp\left(-\frac{Bt^2}{2\sigma^2}\right)
\end{align*}

\textbf{Step 3 (Union bound over coordinates):} Setting $t = \sigma\sqrt{\frac{2\log(2d/\delta)}{B}}$ and applying union bound over $d$ coordinates:
\begin{align*}
    \mathbb{P}\left[\exists j: |\bar{\epsilon}_{i,j}| > \sigma\sqrt{\frac{2\log(2d/\delta)}{B}}\right] \leq d \cdot 2\exp\left(-\log(2d/\delta)\right) = \delta
\end{align*}

\textbf{Step 4 (Vector norm bound):} With probability $\geq 1-\delta$, $|\bar{\epsilon}_{i,j}| \leq \sigma\sqrt{\frac{2\log(2d/\delta)}{B}}$ for all $j$, so:
\begin{align*}
    \|\bar{\boldsymbol{\epsilon}}_i\|_2 = \sqrt{\sum_{j=1}^{d} \bar{\epsilon}_{i,j}^2} \leq \sqrt{d \cdot \frac{2\sigma^2\log(2d/\delta)}{B}} = \frac{\sigma\sqrt{2d\log(2d/\delta)}}{\sqrt{B}}
\end{align*}
\end{proof}

\begin{corollary}[Inner Product Preservation under Token-space Sketching]
\label{cor:inner_product:app}
Under Assumption~\ref{ass:block_coherence:app}, for query block $i$ and key block $j$:
\begin{align}
    \left| \bar{\mathbf{q}}_i^\top \bar{\mathbf{k}}_j - \boldsymbol{\mu}_i^{Q\top} \boldsymbol{\mu}_j^K \right| \leq{}& \frac{2\sigma(\|\boldsymbol{\mu}_i^Q\| + \|\boldsymbol{\mu}_j^K\|)\sqrt{d\log(1/\delta)}}{\sqrt{B}} \nonumber\\
    &+ \frac{\sigma^2 d \log(1/\delta)}{B}
\end{align}
\end{corollary}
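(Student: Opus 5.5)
The plan is to write both block averages as centroid plus averaged noise and expand the bilinear form. By Lemma~\ref{lem:subspace_embedding:app}, applied separately to the query block $i$ and to the key block $j$ (the key tokens satisfy the analogous coherence decomposition $\mathbf{k}_t^{(j)} = \boldsymbol{\mu}_j^K + \boldsymbol{\epsilon}'_t$), we have $\bar{\mathbf{q}}_i = \boldsymbol{\mu}_i^Q + \bar{\boldsymbol{\epsilon}}_i$ and $\bar{\mathbf{k}}_j = \boldsymbol{\mu}_j^K + \bar{\boldsymbol{\epsilon}}'_j$. Expanding the inner product gives
\begin{align*}
\bar{\mathbf{q}}_i^\top \bar{\mathbf{k}}_j - \boldsymbol{\mu}_i^{Q\top}\boldsymbol{\mu}_j^K = \boldsymbol{\mu}_i^{Q\top}\bar{\boldsymbol{\epsilon}}'_j + \bar{\boldsymbol{\epsilon}}_i^\top \boldsymbol{\mu}_j^K + \bar{\boldsymbol{\epsilon}}_i^\top \bar{\boldsymbol{\epsilon}}'_j .
\end{align*}
I will bound this as two linear cross terms plus one quadratic noise term.

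Let $\rho \triangleq \sigma\sqrt{d\log(1/\delta)}/\sqrt{B}$ denote the noise radius from the main-text form of Lemma~\ref{lem:subspace_embedding}. I will invoke that lemma once for the query block and once for the key block, each at confidence level $\delta/2$, and take a union bound. On the resulting good event both $\|\bar{\boldsymbol{\epsilon}}_i\|_2$ and $\|\bar{\boldsymbol{\epsilon}}'_j\|_2$ are at most $\rho$, up to the constant and logarithmic slack incurred by the halved confidence level and by the appendix form $\sqrt{2d\log(2d/\delta)}$. I will absorb that slack into the displayed constant $2$, interpreting $\log(1/\delta)$ up to constants in the same spirit as the main-text statement.

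On this event, Cauchy--Schwarz bounds each cross term: $|\boldsymbol{\mu}_i^{Q\top}\bar{\boldsymbol{\epsilon}}'_j| \le \|\boldsymbol{\mu}_i^Q\|\rho$ and $|\bar{\boldsymbol{\epsilon}}_i^\top\boldsymbol{\mu}_j^K| \le \|\boldsymbol{\mu}_j^K\|\rho$. Together these give the first term $(\|\boldsymbol{\mu}_i^Q\|+\|\boldsymbol{\mu}_j^K\|)\rho$, and the factor $2$ leaves room for the slack above. Cauchy--Schwarz also bounds the quadratic term by $|\bar{\boldsymbol{\epsilon}}_i^\top\bar{\boldsymbol{\epsilon}}'_j| \le \rho^2 = \sigma^2 d\log(1/\delta)/B$, which is exactly the second term. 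The triangle inequality then combines the three bounds into the stated inequality, holding with probability at least $1-\delta$; the corollary statement leaves the probability implicit.

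The only delicate step is the constant and log bookkeeping: reconciling the appendix lemma's $\sqrt{2\log(2d/\delta)}$ with the $\log(1/\delta)$ written in the corollary, and accounting for the $\delta/2$ union bound. I expect this to be cosmetic. The factor $2$ on the linear term absorbs $\sqrt{2}$, and the quadratic term would need a similar absorption, so I will state that the bound holds up to absolute constants in the logarithm, consistent with the main-text Lemma~\ref{lem:subspace_embedding}.

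A sharper route for the cross terms would be to apply scalar sub-Gaussian concentration directly to $\boldsymbol{\mu}^\top\bar{\boldsymbol{\epsilon}}$. This would remove the $\sqrt{d}$ factor, but the weaker Cauchy--Schwarz bound already suffices for the statement as written.
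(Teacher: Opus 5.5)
Your proposal follows the paper's proof essentially step for step. Both write $\bar{\mathbf{q}}_i=\boldsymbol{\mu}_i^Q+\boldsymbol{\delta}_i^Q$ and $\bar{\mathbf{k}}_j=\boldsymbol{\mu}_j^K+\boldsymbol{\delta}_j^K$, apply Lemma~\ref{lem:subspace_embedding:app} to each block at level $\delta/2$ with a union bound, bound the two cross terms and the noise--noise term by Cauchy--Schwarz, and combine them with the triangle inequality. The constant and log mismatch you flag is also present in the paper, whose proof ends with $\sigma\sqrt{2d\log(4d/\delta)}/\sqrt{B}$ on the linear term and $2\sigma^2 d\log(4d/\delta)/B$ on the quadratic term rather than the displayed form, so your ``up to constants in the logarithm'' conclusion is as precise as the paper's own argument.
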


\begin{proof}
\textbf{Step 1 (Setup):} Let $\bar{\mathbf{q}}_i = \boldsymbol{\mu}_i^Q + \boldsymbol{\delta}_i^Q$ and $\bar{\mathbf{k}}_j = \boldsymbol{\mu}_j^K + \boldsymbol{\delta}_j^K$. By Lemma~\ref{lem:subspace_embedding:app} with union bound over both blocks:
\begin{align*}
    \|\boldsymbol{\delta}_i^Q\|_2, \|\boldsymbol{\delta}_j^K\|_2 \leq \epsilon_B \triangleq \frac{\sigma\sqrt{2d\log(4d/\delta)}}{\sqrt{B}}
\end{align*}
with probability $\geq 1-\delta$ (using $\delta/2$ for each block).

\textbf{Step 2 (Expansion):} Expand the inner product:
\begin{align*}
    \bar{\mathbf{q}}_i^\top \bar{\mathbf{k}}_j &= (\boldsymbol{\mu}_i^Q + \boldsymbol{\delta}_i^Q)^\top(\boldsymbol{\mu}_j^K + \boldsymbol{\delta}_j^K) \\
    &= \underbrace{\boldsymbol{\mu}_i^{Q\top}\boldsymbol{\mu}_j^K}_{\text{true signal}} + \underbrace{\boldsymbol{\mu}_i^{Q\top}\boldsymbol{\delta}_j^K}_{\text{Term A}} + \underbrace{\boldsymbol{\delta}_i^{Q\top}\boldsymbol{\mu}_j^K}_{\text{Term B}} + \underbrace{\boldsymbol{\delta}_i^{Q\top}\boldsymbol{\delta}_j^K}_{\text{Term C}}
\end{align*}

\textbf{Step 3 (Bound each term):} By Cauchy-Schwarz inequality:
\begin{align*}
    |\text{Term A}| &= |\boldsymbol{\mu}_i^{Q\top}\boldsymbol{\delta}_j^K| \leq \|\boldsymbol{\mu}_i^Q\|_2 \|\boldsymbol{\delta}_j^K\|_2 \leq \|\boldsymbol{\mu}_i^Q\|_2 \cdot \epsilon_B \\
    |\text{Term B}| &= |\boldsymbol{\delta}_i^{Q\top}\boldsymbol{\mu}_j^K| \leq \|\boldsymbol{\delta}_i^Q\|_2 \|\boldsymbol{\mu}_j^K\|_2 \leq \epsilon_B \cdot \|\boldsymbol{\mu}_j^K\|_2 \\
    |\text{Term C}| &= |\boldsymbol{\delta}_i^{Q\top}\boldsymbol{\delta}_j^K| \leq \|\boldsymbol{\delta}_i^Q\|_2 \|\boldsymbol{\delta}_j^K\|_2 \leq \epsilon_B^2
\end{align*}

\textbf{Step 4 (Combine):} By triangle inequality:
\begin{align*}
    \left| \bar{\mathbf{q}}_i^\top \bar{\mathbf{k}}_j - \boldsymbol{\mu}_i^{Q\top} \boldsymbol{\mu}_j^K \right| &\leq |\text{Term A}| + |\text{Term B}| + |\text{Term C}| \\
    &\leq \epsilon_B(\|\boldsymbol{\mu}_i^Q\|_2 + \|\boldsymbol{\mu}_j^K\|_2) + \epsilon_B^2 \\
    &= \frac{\sigma\sqrt{2d\log(4d/\delta)}}{\sqrt{B}}(\|\boldsymbol{\mu}_i^Q\|_2 + \|\boldsymbol{\mu}_j^K\|_2) + \frac{2\sigma^2 d\log(4d/\delta)}{B}
\end{align*}
\end{proof}

\begin{lemma}[Feature-space Sketching: Hadamard Transform Guarantee]
\label{lem:hadamard_sketch:app}
Let $\mathbf{H}_d \in \mathbb{R}^{d \times k}$ be the Subsampled Randomized Hadamard Transform (SRHT). For any fixed vectors $\mathbf{u}, \mathbf{v} \in \mathbb{R}^d$ and $\epsilon \in (0, 1)$, if $k = \Omega(\epsilon^{-2} \log(b/\delta))$ where $b$ is the number of blocks, then with probability at least $1 - \delta$:
\begin{align*}
    \left| (\mathbf{u}^\top \mathbf{H}_d)(\mathbf{H}_d^\top \mathbf{v}) - \mathbf{u}^\top \mathbf{v} \right| \leq \epsilon \|\mathbf{u}\|_2 \|\mathbf{v}\|_2
\end{align*}
\end{lemma}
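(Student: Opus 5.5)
The plan is to prove the statement in three stages: a norm-preservation (Johnson--Lindenstrauss) property for the SRHT, a polarization step that turns norm preservation into inner-product preservation, and a scaling step that removes the dependence on $\|\mathbf{u}\|_2\|\mathbf{v}\|_2$. Throughout, $\mathbf{H}_d$ is normalized as $\mathbf{H}_d = \sqrt{d/k}\,\mathbf{D}\mathbf{W}\mathbf{S}$. Here $\mathbf{D}$ is the diagonal Rademacher matrix, $\mathbf{W}$ is the orthonormal Walsh--Hadamard matrix (entries $\pm 1/\sqrt{d}$), and $\mathbf{S}\in\mathbb{R}^{d\times k}$ samples $k$ coordinates uniformly. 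With this normalization $\mathbb{E}[\mathbf{H}_d\mathbf{H}_d^\top]=\mathbf{I}_d$, so $(\mathbf{u}^\top\mathbf{H}_d)(\mathbf{H}_d^\top\mathbf{v})$ is an unbiased estimator of $\mathbf{u}^\top\mathbf{v}$.

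\textbf{Reduction to unit vectors.} Both sides of the claimed inequality are bilinear in $(\mathbf{u},\mathbf{v})$. We may therefore assume $\|\mathbf{u}\|_2=\|\mathbf{v}\|_2=1$; the case $\mathbf{u}=0$ or $\mathbf{v}=0$ is trivial. By polarization,
\[
\mathbf{u}^\top\mathbf{v}=\tfrac14\left(\|\mathbf{u}+\mathbf{v}\|_2^2-\|\mathbf{u}-\mathbf{v}\|_2^2\right).
\]
The same identity holds for the sketched vectors $\mathbf{H}_d^\top\mathbf{u}$ and $\mathbf{H}_d^\top\mathbf{v}$. It therefore suffices to show, for the two fixed vectors $\mathbf{x}=\mathbf{u}\pm\mathbf{v}$, that
\[
\bigl|\|\mathbf{H}_d^\top\mathbf{x}\|_2^2-\|\mathbf{x}\|_2^2\bigr|\le\epsilon\|\mathbf{x}\|_2^2 .
\]
Given this, the inner-product error is at most $\tfrac{\epsilon}{4}\bigl(\|\mathbf{u}+\mathbf{v}\|^2+\|\mathbf{u}-\mathbf{v}\|^2\bigr)=\tfrac{\epsilon}{2}\bigl(\|\mathbf{u}\|^2+\|\mathbf{v}\|^2\bigr)=\epsilon$, which is the claim.

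\textbf{Norm preservation for a fixed vector.} This follows the standard SRHT analysis (Ailon--Chazelle; Tropp) in two steps.

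\emph{Flattening.} Let $\mathbf{y}=\mathbf{W}^\top\mathbf{D}\mathbf{x}$. Each coordinate $y_j=\sum_i D_{ii}W_{ij}x_i$ is a Rademacher sum with coefficients bounded by $|x_i|/\sqrt{d}$. Hoeffding's inequality and a union bound over the $d$ coordinates then give
\[
\|\mathbf{y}\|_\infty\le\|\mathbf{x}\|_2\sqrt{2\log(4d/\delta)/d}
\]
with probability at least $1-\delta/2$. Because $\mathbf{W}$ is orthogonal, $\|\mathbf{y}\|_2=\|\mathbf{x}\|_2$.

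\emph{Sampling.} Conditioned on the flattening event, write
\[
\|\mathbf{H}_d^\top\mathbf{x}\|_2^2=\frac{d}{k}\sum_{\ell=1}^k y_{j_\ell}^2 ,
\]
an average of $k$ sampled terms. Each term $d\,y_{j_\ell}^2$ has mean $\|\mathbf{x}\|^2$ and is bounded by $2\log(4d/\delta)\|\mathbf{x}\|^2$. Applying a Chernoff/Bernstein bound (or the sampling-without-replacement variant via Hoeffding's reduction) gives relative error $\epsilon$ with failure probability at most $2\exp\bigl(-ck\epsilon^2/\log(d/\delta)\bigr)$.

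\textbf{Union bound and constants.} Union-bounding over the two vectors $\mathbf{u}\pm\mathbf{v}$ and the two failure events yields overall success probability at least $1-\delta$ when $k=\Omega(\epsilon^{-2}\log(1/\delta)\log(d/\delta))$. The stated requirement involves $\log(b/\delta)$ because in the application one needs the bound simultaneously for all $b^2$ block pairs $(\bar{\mathbf{q}}_i,\bar{\mathbf{k}}_j)$. Replacing $\delta$ by $\delta/b^2$ turns $\log(1/\delta)$ into $O(\log(b/\delta))$.

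\textbf{Main obstacle.} The difficulty is the extra $\log(d/\delta)$ factor from flattening: the statement as written asks for $k=\Omega(\epsilon^{-2}\log(b/\delta))$ with no $d$-dependence. I would first handle it by noting that $d$ is a fixed small head dimension (e.g.\ $128$), so $\log d$ can be absorbed into the $\Omega(\cdot)$ constant. Failing that, a cleaner route is to replace the coordinate-sampling analysis by a sub-Gaussian JL argument. For fixed $\mathbf{x}$, the sketched entries $(\mathbf{H}_d^\top\mathbf{x})_\ell$ are, after the flattening step, bounded and nearly independent, so a Hanson--Wright-type bound for the Rademacher chaos $\mathbf{x}^\top\mathbf{D}\mathbf{M}\mathbf{D}\mathbf{x}$, with $\mathbf{M}=\mathbf{W}\mathbf{S}\mathbf{S}^\top\mathbf{W}^\top\cdot d/k$, gives the $\epsilon^{-2}\log(1/\delta)$ rate directly. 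This needs control of $\|\mathbf{M}\|_F$ and $\|\mathbf{M}\|_{\mathrm{op}}$ over the randomness of $\mathbf{S}$, and that control is the step I would check most carefully.
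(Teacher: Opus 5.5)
Your skeleton is the paper's: flatten with $\mathbf{W}^\top\mathbf{D}$ (the Ailon--Chazelle $\ell_\infty$ bound, failure $\delta/2$), then concentrate the uniform coordinate sampling. Your version is sharper than the paper's. You use a variance-sensitive bound (Bernstein, or multiplicative Chernoff on your nonnegative summands $d\,y_j^2$), whereas the paper uses range-based Hoeffding on the signed products $\tfrac{d}{k}\tilde u_i\tilde v_i$. This gives you $k\gtrsim\epsilon^{-2}\log(1/\delta)\log(d/\delta)$ instead of the paper's $k\ge 8\epsilon^{-2}\log^2(4d/\delta)\log(4/\delta)$. Polarization itself is only packaging; the gain comes from Bernstein. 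Your normalization (orthonormal $\mathbf{W}$) is also the consistent one. With the paper's $\mathbf{W}^\top\mathbf{W}=d\mathbf{I}$ and prefactor $\sqrt{d/k}$, the estimator would have mean $d\,\mathbf{u}^\top\mathbf{v}$.

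The gap is the last step, and it cannot be closed. The requirement $k=\Omega(\epsilon^{-2}\log(b/\delta))$ is false for the SRHT. The paper does not prove it either: its final line ``simplifies'' to $k=\Omega(\epsilon^{-2}\log(b/\delta)\log^2(d/\delta))$, a strictly stronger requirement.

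Your first fix conflates $\log(d/\delta)$ with $\log d$. Only $\log d$ is constant for a fixed head dimension; the remaining $\log(1/\delta)$ leaves $\epsilon^{-2}\log^2(1/\delta)$. If instead constants may depend on $d$, a constant $\ge d$ forces $k=d$. Then $\mathbf{S}$ is a permutation and $\mathbf{H}_d\mathbf{H}_d^\top=\mathbf{I}$, so the statement is vacuous.

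Your second fix (Hanson--Wright) fails on the following example. Index coordinates by $\mathbb{F}_2^{\log_2 d}$, so $W_{ij}=d^{-1/2}(-1)^{\langle i,j\rangle}$. Let $G$ be a subgroup with $|G|=m\ge 4$, and take $\mathbf{u}=\mathbf{v}=\mathbf{x}=m^{-1/2}\mathbf{1}_G$.
\begin{itemize}
\item With probability $2^{1-m}$ the signs of $\mathbf{D}$ are constant on $G$. Then $\mathbf{W}^\top\mathbf{D}\mathbf{x}=\pm\sqrt{m/d}\,\mathbf{1}_{G^\perp}$ with $|G^\perp|=d/m$, so $\|\mathbf{H}_d^\top\mathbf{x}\|_2^2=mN/k$, where $N$ counts sampled coordinates in $G^\perp$.
\item Independently of $\mathbf{D}$, if $k\le d/2$ then $\Pr[N=0]\ge(1-2/m)^k\ge e^{-4k/m}$. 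On that event the sketch annihilates $\mathbf{x}$, and the error is $1>\epsilon$.
\item Take $m$ a power of $2$ in $[\sqrt{k},2\sqrt{k}]$. The failure probability is then at least $e^{-6\sqrt{k}}$, so success probability $1-\delta$ forces $k\ge\log^2(1/\delta)/36$.
\item For fixed $b$ and $d\ge 2k$, the choice $k=\lceil C\epsilon^{-2}\log(b/\delta)\rceil$ violates this once $\log(1/\delta)\gg C\epsilon^{-2}$.
\end{itemize}
A reverse-Chernoff bound on $N$ with $m\approx\epsilon\sqrt{k}$ upgrades this to $k\gtrsim\epsilon^{-2}\log^2(1/\delta)$. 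So your extra $\log(d/\delta)$ factor is essentially unavoidable, and no control of $\|\mathbf{M}\|_F$ and $\|\mathbf{M}\|_{\mathrm{op}}$ will remove it.

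What your argument correctly proves is the bound under $k\ge C\epsilon^{-2}\log(1/\delta)\log(d/\delta)$ for a single pair. Equivalently, $k\ge C\epsilon^{-2}\log(b/\delta)\log(db/\delta)$ suffices simultaneously for all $b^2$ block pairs. That is the version to state and to carry into Lemma~\ref{lem:dds_softmax:app}.
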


\begin{proof}
\textbf{Step 1 (SRHT construction):} The SRHT is $\mathbf{H}_d = \sqrt{d/k}\mathbf{D}\mathbf{W}\mathbf{S}$ where:
\begin{itemize}[nosep,leftmargin=*]
    \item $\mathbf{D} \in \mathbb{R}^{d \times d}$: diagonal with i.i.d.\ Rademacher entries ($\pm 1$ with prob.~$1/2$)
    \item $\mathbf{W} \in \mathbb{R}^{d \times d}$: normalized Walsh-Hadamard matrix with $\mathbf{W}^\top\mathbf{W} = d\mathbf{I}$
    \item $\mathbf{S} \in \mathbb{R}^{d \times k}$: samples $k$ coordinates uniformly without replacement
\end{itemize}

\textbf{Step 2 (Inner product decomposition):}
\begin{align*}
    (\mathbf{u}^\top \mathbf{H}_d)(\mathbf{H}_d^\top \mathbf{v}) &= \frac{d}{k}(\mathbf{u}^\top \mathbf{D}\mathbf{W}\mathbf{S})(\mathbf{S}^\top\mathbf{W}^\top\mathbf{D}\mathbf{v}) \\
    &= \frac{d}{k} \sum_{i \in S} \tilde{u}_i \tilde{v}_i
\end{align*}
where $\tilde{\mathbf{u}} = \mathbf{W}^\top\mathbf{D}\mathbf{u}$, $\tilde{\mathbf{v}} = \mathbf{W}^\top\mathbf{D}\mathbf{v}$, and $S$ is the set of $k$ sampled indices.

\textbf{Step 3 (Expectation):} Since $\mathbf{D}$ preserves norms ($\|\mathbf{D}\mathbf{u}\|_2 = \|\mathbf{u}\|_2$) and $\mathbf{W}$ is orthogonal:
\begin{align*}
    \mathbb{E}\left[\frac{d}{k} \sum_{i \in S} \tilde{u}_i \tilde{v}_i\right] = \frac{d}{k} \cdot \frac{k}{d} \sum_{i=1}^{d} \tilde{u}_i \tilde{v}_i = \tilde{\mathbf{u}}^\top \tilde{\mathbf{v}} = \mathbf{u}^\top \mathbf{v}
\end{align*}

\textbf{Step 4 (Flattening by Hadamard):} The Walsh-Hadamard transform ``flattens'' vectors. By Lemma 3.1 of~\cite{ailon2009fast}, with probability $\geq 1-\delta/2$:
\begin{align*}
    \|\tilde{\mathbf{u}}\|_\infty \leq \|\mathbf{u}\|_2 \sqrt{\frac{2\log(4d/\delta)}{d}}, \quad \|\tilde{\mathbf{v}}\|_\infty \leq \|\mathbf{v}\|_2 \sqrt{\frac{2\log(4d/\delta)}{d}}
\end{align*}

\textbf{Step 5 (Concentration via sampling):} Define $X_i = \frac{d}{k}\tilde{u}_i \tilde{v}_i$ for $i \in S$. Each $|X_i| \leq \frac{d}{k} \|\tilde{\mathbf{u}}\|_\infty \|\tilde{\mathbf{v}}\|_\infty \leq \frac{2\log(4d/\delta)}{k} \|\mathbf{u}\|_2\|\mathbf{v}\|_2$. By Hoeffding's inequality:
\begin{align*}
    \mathbb{P}\left[\left|\sum_{i \in S} X_i - \mathbf{u}^\top\mathbf{v}\right| > t\right] \leq 2\exp\left(-\frac{2t^2}{k \cdot \frac{4\log^2(4d/\delta)}{k^2}\|\mathbf{u}\|_2^2\|\mathbf{v}\|_2^2}\right)
\end{align*}

\textbf{Step 6 (Final bound):} Setting $t = \epsilon \|\mathbf{u}\|_2\|\mathbf{v}\|_2$ and requiring probability $\leq \delta/2$:
\begin{align*}
    k \geq \frac{8\log^2(4d/\delta)\log(4/\delta)}{\epsilon^2}
\end{align*}
Simplifying: $k = \Omega(\epsilon^{-2}\log(b/\delta)\log^2(d/\delta))$ suffices. Union bound gives total failure probability $\delta$.
\end{proof}

\begin{lemma}[Small-World Sketching Top-$\tau$ Selection under Softmax]
\label{lem:dds_softmax:app}
Under Assumptions~\ref{ass:block_coherence:app} and \ref{ass:heavy_tail:app}, let $\mathcal{S}^*$ be the true top-$\tau$ blocks based on $\text{softmax}(A_{ij}^{\text{true}}/\sqrt{d})$, and let $\hat{\mathcal{S}}$ be the blocks selected by Small-World Sketching using $\text{softmax}(\hat{A}_{ij}^{\text{SWS}}/\sqrt{k})$. If the sketching dimension satisfies:
\begin{align*}
    k \geq \frac{C \log(b/\delta)}{\gamma^2} \cdot \max_{i,j} \|\bar{\mathbf{q}}_i\|^2 \|\bar{\mathbf{k}}_j\|^2
\end{align*}
and the block size satisfies $B \geq C' \sigma^2 d \log(1/\delta) / \gamma^2$, then $\mathcal{S}^* = \hat{\mathcal{S}}$ with probability at least $1 - 2\delta$.
\end{lemma}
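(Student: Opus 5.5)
We read $\gamma$ as a \emph{score margin}: for each query block $i$, the $\tau$-th largest true score $A^{\text{true}}_{ij}$ exceeds the $(\tau+1)$-th by at least $\gamma$, after the same normalisation used for the sketched scores. The proof then reduces exact set recovery to a uniform additive error bound on the scores. Softmax is strictly monotone in each row, so the top-$\tau$ set under $\text{softmax}(A^{\text{true}}_{i\cdot}/\sqrt{d})$ equals the top-$\tau$ set of the raw scores $A^{\text{true}}_{i\cdot}$, and likewise for $\hat{A}^{\text{SWS}}_{i\cdot}$. It therefore suffices to show that, with probability at least $1-2\delta$,
\begin{align*}
\max_{i,j}\bigl|\hat{A}^{\text{SWS}}_{ij}-A^{\text{true}}_{ij}\bigr|<\gamma/2 ,
\end{align*}
with both scores compared on the common scale of inner products between block vectors. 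If one item is inside and another outside the true top-$\tau$, their true scores differ by at least $\gamma$. Perturbing each score by less than $\gamma/2$ cannot reverse that order, so $\hat{\mathcal{S}}=\mathcal{S}^*$.

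We split the error through the intermediate quantity $\boldsymbol{\mu}_i^{Q\top}\boldsymbol{\mu}_j^K$ and the exact block-average product $\bar{\mathbf{q}}_i^\top\bar{\mathbf{k}}_j$.

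\emph{Step 1: the true score factorises.} By bilinearity,
\begin{align*}
A^{\text{true}}_{ij}=\frac{1}{B^2}\sum_{s,t}\mathbf{q}_s^{(i)\top}\mathbf{k}_t^{(j)}=\bar{\mathbf{q}}_i^\top\bar{\mathbf{k}}_j .
\end{align*}
So token-space aggregation introduces no error for the mean score itself.

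\emph{Step 2: the block-size condition.} Corollary~\ref{cor:inner_product:app} bounds $|\bar{\mathbf{q}}_i^\top\bar{\mathbf{k}}_j-\boldsymbol{\mu}_i^{Q\top}\boldsymbol{\mu}_j^K|$. This is needed if the margin $\gamma$ is stated for centroid scores. Plugging $B\ge C'\sigma^2 d\log(1/\delta)/\gamma^2$ into that bound makes it at most $\gamma/4$, using boundedness of the centroid norms absorbed into $C'$. A union bound over the $b$ blocks is paid for by the logarithm and costs probability $\delta$.

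\emph{Step 3: the sketch-dimension condition.} This step controls the feature-space error. Apply Lemma~\ref{lem:hadamard_sketch:app} with $\mathbf{u}=\bar{\mathbf{q}}_i$, $\mathbf{v}=\bar{\mathbf{k}}_j$, accuracy $\epsilon=\gamma/(4\max\|\bar{\mathbf{q}}_i\|\|\bar{\mathbf{k}}_j\|)$, and failure probability $\delta/b^2$ per pair. This gives error at most $\gamma/4$ for every pair, and a union bound over all $b^2$ pairs keeps the total failure at $\delta$.

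The requirement $k=\Omega(\epsilon^{-2}\log(b^2/\delta))$ matches the stated condition
\begin{align*}
k\ge C\log(b/\delta)\max\|\bar{\mathbf{q}}_i\|^2\|\bar{\mathbf{k}}_j\|^2/\gamma^2 ,
\end{align*}
up to the extra $\log^2(d/\delta)$ factor from the flattening step of that lemma, which we absorb into $C$.

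One subtlety has to be handled here. Lemma~\ref{lem:hadamard_sketch:app} needs $\mathbf{u},\mathbf{v}$ fixed, but $\bar{\mathbf{q}}_i$ depends on the token noise. We condition on the tokens, which is legitimate because $\mathbf{H}_d$ is drawn independently of them. Adding the two $\gamma/4$ errors and the two failure events gives total error below $\gamma/2$ with probability at least $1-2\delta$.

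The main obstacle is the normalisation mismatch. The statement compares $A^{\text{true}}/\sqrt{d}$ with $\hat{A}^{\text{SWS}}/\sqrt{k}$, while the SRHT as defined, $\sqrt{d/k}\,\mathbf{D}\mathbf{W}\mathbf{S}$ with $\mathbf{W}^\top\mathbf{W}=d\mathbf{I}$, rescales inner products by a fixed factor. The first thing we would try is to observe that any row-independent positive rescaling of the scores leaves the top-$\tau$ set unchanged. We would then state the margin $\gamma$ in the common scale, namely unnormalised block inner products, and carry the constant factors into $C$ and $C'$. The remaining work is the bookkeeping that makes this one convention consistent across Steps 2 and 3.
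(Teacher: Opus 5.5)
Your proposal is correct and is essentially the paper's proof. Both reduce exact top-$\tau$ recovery to a uniform additive score error below $\gamma/2$, bound that error by the token-space term of Corollary~\ref{cor:inner_product:app} plus the feature-space term of Lemma~\ref{lem:hadamard_sketch:app}, and union-bound the two failure events to get probability $1-2\delta$.

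Your refinements tidy up points the paper glosses over without changing the argument:
\begin{itemize}
\item You observe that $A^{\text{true}}_{ij}=\bar{\mathbf{q}}_i^\top\bar{\mathbf{k}}_j$ exactly, so the block-size condition is needed only if $\gamma$ is a margin on centroid scores.
\item You use row-wise monotonicity of softmax in place of the paper's Lipschitz step.
\item You make the $\sqrt{d}$ versus $\sqrt{k}$ and SRHT scaling explicit.
\end{itemize}
The norm and logarithmic factors you fold into $C$ and $C'$ are folded in the same way, implicitly, in the paper.
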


\begin{proof}
\textbf{Step 1 (Token-space sketching error):} By Corollary~\ref{cor:inner_product:app}, block averaging introduces error:
\begin{align*}
    |\bar{\mathbf{q}}_i^\top \bar{\mathbf{k}}_j - \boldsymbol{\mu}_i^{Q\top}\boldsymbol{\mu}_j^K| \leq \epsilon_B(\|\boldsymbol{\mu}_i^Q\| + \|\boldsymbol{\mu}_j^K\|) + \epsilon_B^2
\end{align*}
where $\epsilon_B = \frac{\sigma\sqrt{2d\log(4d/\delta)}}{\sqrt{B}}$.

\textbf{Step 2 (Feature-space sketching error):} By Lemma~\ref{lem:hadamard_sketch:app}, Hadamard transform adds:
\begin{align*}
    |(\bar{\mathbf{q}}_i^\top \mathbf{H}_d)(\mathbf{H}_d^\top \bar{\mathbf{k}}_j) - \bar{\mathbf{q}}_i^\top \bar{\mathbf{k}}_j| \leq \epsilon_H \|\bar{\mathbf{q}}_i\|\|\bar{\mathbf{k}}_j\|
\end{align*}
where $\epsilon_H = O(\sqrt{\log(b/\delta)/r})$ for sketching dimension $r$.

\textbf{Step 3 (Total pre-softmax error):} By triangle inequality, the total error in estimating $A_{ij}^{\text{true}}$ is:
\begin{align*}
    |\hat{A}_{ij}^{\text{SWS}} - A_{ij}^{\text{true}}| &\leq \underbrace{\epsilon_B(\|\boldsymbol{\mu}_i^Q\| + \|\boldsymbol{\mu}_j^K\|) + \epsilon_B^2}_{\text{token-space}} + \underbrace{\epsilon_H \|\bar{\mathbf{q}}_i\|\|\bar{\mathbf{k}}_j\|}_{\text{feature-space}} \\
    &\triangleq \epsilon_{\text{total}}
\end{align*}

\textbf{Step 4 (Softmax stability):} The softmax function $\sigma(\mathbf{x})_i = e^{x_i}/\sum_j e^{x_j}$ has Lipschitz constant at most 1 in $\ell_\infty \to \ell_\infty$ for bounded inputs. Specifically, for $\|\mathbf{x} - \mathbf{y}\|_\infty \leq \epsilon$:
\begin{align*}
    \|\sigma(\mathbf{x}) - \sigma(\mathbf{y})\|_\infty \leq \epsilon
\end{align*}
Thus $\|\text{softmax}(\hat{\mathbf{A}}_i/\sqrt{k}) - \text{softmax}(\mathbf{A}_i^{\text{true}}/\sqrt{d})\|_\infty \leq \epsilon_{\text{total}}/\sqrt{k}$.

\textbf{Step 5 (Gap-based selection guarantee):} Let $\gamma > 0$ be the gap between the $\tau$-th and $(\tau+1)$-th largest true block attention scores. If $\epsilon_{\text{total}}/\sqrt{k} < \gamma/2$, then the top-$\tau$ ordering is preserved:
\begin{align*}
    A_{i,j_{\tau}}^{\text{true}} - A_{i,j_{\tau+1}}^{\text{true}} &\geq \gamma \\
    \hat{A}_{i,j_{\tau}}^{\text{SWS}} &\geq A_{i,j_{\tau}}^{\text{true}} - \epsilon_{\text{total}} \geq A_{i,j_{\tau+1}}^{\text{true}} + \gamma - \epsilon_{\text{total}} \\
    &\geq \hat{A}_{i,j_{\tau+1}}^{\text{SWS}} + \gamma - 2\epsilon_{\text{total}} > \hat{A}_{i,j_{\tau+1}}^{\text{SWS}}
\end{align*}

\textbf{Step 6 (Parameter requirements):} To ensure $\epsilon_{\text{total}} < \gamma/2$, we need:
\begin{align*}
    B &\geq \frac{16\sigma^2 d \log(4d/\delta)}{\gamma^2}(\max_i\|\boldsymbol{\mu}_i^Q\| + \max_j\|\boldsymbol{\mu}_j^K\|)^2 \\
    r &\geq \frac{C\log(b/\delta)}{\gamma^2} \max_{i,j}\|\bar{\mathbf{q}}_i\|^2\|\bar{\mathbf{k}}_j\|^2
\end{align*}
Union bound over all $b^2$ block pairs gives failure probability $\leq 2\delta$.
\end{proof}

\begin{lemma}[Attention Output Approximation]
\label{lem:output_approx:app}
Under Assumptions~\ref{ass:block_coherence:app}--\ref{ass:heavy_tail:app}, let $\mathbf{O}^{\text{full}}$ be the output of full attention and $\mathbf{O}^{\text{Sketch\&Walk}}$ be the output using Sketch\&Walk sparse attention with top-$\tau$ blocks. Then:
\begin{align}
    \|\mathbf{O}^{\text{Sketch\&Walk}} - \mathbf{O}^{\text{full}}\|_F &\leq \eta \|\mathbf{V}\|_F + O\left(\frac{\tau \sigma}{\sqrt{B}}\right) \|\mathbf{V}\|_2
\end{align}
where $\eta$ is the tail mass from Assumption~\ref{ass:heavy_tail:app}.
\end{lemma}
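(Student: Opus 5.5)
We split the error into a \emph{truncation} part and a \emph{selection} part. Fix a query row (or query block $i$). Let $\mathbf{a}\in\mathbb{R}^n$ be the full attention row, and $\mathbf{a}^{\mathcal{S}}$ its restriction to the key blocks in a set $\mathcal{S}$, renormalized to sum to one. Let $\hat{\mathcal{S}}_i$ be the blocks Sketch\&Walk selects. Then
\begin{align*}
\mathbf{o}^{\text{Sketch\&Walk}}_i-\mathbf{o}^{\text{full}}_i
=\bigl(\mathbf{a}^{\hat{\mathcal{S}}_i}-\mathbf{a}^{\mathcal{S}_i^*}\bigr)^\top\mathbf{V}
+\bigl(\mathbf{a}^{\mathcal{S}_i^*}-\mathbf{a}\bigr)^\top\mathbf{V}.
\end{align*}
We bound the two differences separately and then sum over rows to pass to the Frobenius norm.

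\textbf{Step 1 (truncation term).} By Assumption~\ref{ass:heavy_tail:app}, the mass outside $\mathcal{S}_i^*$ is $m=\sum_{j\notin\mathcal{S}_i^*}a_j\le\eta$. Renormalization gives $\mathbf{a}^{\mathcal{S}^*}-\mathbf{a}=\frac{m}{1-m}\mathbf{a}|_{\mathcal{S}^*}-\mathbf{a}|_{\overline{\mathcal{S}^*}}$, so $\|\mathbf{a}^{\mathcal{S}^*}-\mathbf{a}\|_1\le 2m/(1-m)=O(\eta)$. Then $\|(\mathbf{a}^{\mathcal{S}^*}-\mathbf{a})^\top\mathbf{V}\|_2\le\|\mathbf{a}^{\mathcal{S}^*}-\mathbf{a}\|_1\max_t\|\mathbf{V}_t\|_2$. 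Summing squares over rows and using $\max_t\|\mathbf{V}_t\|_2\le\|\mathbf{V}\|_F$ gives the $\eta\|\mathbf{V}\|_F$ term, up to absorbing the constant and the row aggregation into the $O(\cdot)$. Here we work with the convention, implicit in Assumption~\ref{ass:heavy_tail:app}, that the heavy-tail mass refers to the block-level attention distribution used to form each output block.

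\textbf{Step 2 (selection term).} Lemma~\ref{lem:dds_softmax:app} gives $\hat{\mathcal{S}}_i=\mathcal{S}_i^*$ with high probability, in which case this term vanishes. To obtain a bound that degrades gracefully, we instead compare the sketched block weights used to form the sparse output with the true ones. Steps 1--3 of that lemma bound the per-entry pre-softmax error by $\epsilon_{\text{total}}$, and its dominant token-space part is $\epsilon_B(\|\boldsymbol{\mu}^Q_i\|+\|\boldsymbol{\mu}^K_j\|)=O(\sigma/\sqrt{B})$, treating $d$, the log factors and the centroid norms as constants. The $\ell_\infty$ softmax stability of Step 4 of that lemma keeps the perturbation of each selected block weight at $O(\sigma/\sqrt{B})$. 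Summing over the $\tau$ selected blocks gives an $\ell_1$ perturbation of $O(\tau\sigma/\sqrt{B})$ in the row weights. Multiplying by $\mathbf{V}$ with the operator-norm bound $\|\mathbf{x}^\top\mathbf{V}\|_2\le\|\mathbf{x}\|_2\|\mathbf{V}\|_2\le\|\mathbf{x}\|_1\|\mathbf{V}\|_2$ gives the second term, $O(\tau\sigma/\sqrt{B})\|\mathbf{V}\|_2$.

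\textbf{Main obstacle.} The main obstacle is the bookkeeping when passing from row-wise bounds to a Frobenius bound without picking up a $\sqrt{n}$ factor. Two points need care. First, the truncation term should be assembled so that it is controlled by $\|\mathbf{V}\|_F$ via $\|\mathbf{E}\mathbf{V}\|_F\le\|\mathbf{E}\|_2\|\mathbf{V}\|_F$, where $\mathbf{E}$ is the matrix of attention-weight differences. Second, $\|\mathbf{E}\|_2$ should be bounded by $\sqrt{\|\mathbf{E}\|_1\|\mathbf{E}\|_\infty}$: the row sums of $|\mathbf{E}|$ are $O(\eta)$, and the column sums can be bounded the same way if the tail condition is symmetric enough, which I would state as the implicit reading of the assumption. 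The selection term is handled analogously. I would accept absorbing all dimension-dependent and logarithmic constants into the $O(\cdot)$, consistent with how the statement is phrased.
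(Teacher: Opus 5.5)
Your route is the paper's: the same split into a tail-mass term $\mathbf{A}^{\text{full}}-\mathbf{A}^{\text{oracle}}$ and a sketching term $\mathbf{A}^{\text{oracle}}-\mathbf{A}^{\text{sparse}}$, with a row-$\ell_1$ bound for the first and a per-entry error summed over $\tau$ blocks for the second. The step you flag as the main obstacle is a genuine gap, and it cannot be closed from the stated assumptions. A row-sum bound on $\mathbf{E}=\mathbf{A}^{\text{full}}-\mathbf{A}^{\text{oracle}}$ gives only $\|\mathbf{E}\mathbf{V}\|_F\le 2\eta\sqrt{n}\,\max_t\|\mathbf{V}_t\|_2$, and the $\sqrt{n}$ cannot be removed. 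Here is a counterexample. Take $\sigma=0$ (so the second term of the bound vanishes) and $\tau=1$. Let every query block $i\neq c$ put mass $1-\eta$ on $\mathcal{S}_i^*=\{i\}$ and essentially all of its remaining mass $\eta$ on one common key block $c$. Let $\mathbf{V}$ vanish except that every row in block $c$ equals a fixed $v$. Even with exact selection, every query row outside block $c$ incurs error $\eta\|v\|_2$, so the total error is at least $\eta\sqrt{n-B}\,\|v\|_2=\eta\sqrt{b-1}\,\|\mathbf{V}\|_F$.

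The paper gets past this point only through its H\"older step, which infers $\|(\mathbf{A}^{\text{full}}-\mathbf{A}^{\text{oracle}})\mathbf{V}\|_F\le\eta\|\mathbf{V}\|_F$ from a max-row-sum bound. That inference is invalid. Its constant $1$ also comes from ignoring the renormalization that you correctly track; the exact row-$\ell_1$ error is $2m$. So your column-sum condition is not an implicit reading of Assumption~\ref{ass:heavy_tail:app} but a new and fairly strong hypothesis. It fails in the example above, and whenever some key collects total attention $\gg 1$ across query rows (an attention sink, say), since renormalization rescales that column in every row. You must either add it explicitly or settle for the honest row-wise bound carrying the $\sqrt{n}$ factor.

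Your Step 2 is internally inconsistent, and its first sentence is the part to keep. Sketch\&Walk recomputes exact softmax attention on the original per-head $\mathbf{Q},\mathbf{K},\mathbf{V}$ over the selected blocks (Step 8 of Algorithm~\ref{alg:sparse_prefill}); the sketched scores only determine $\hat{\mathcal{S}}_i$. So on the event $\hat{\mathcal{S}}_i=\mathcal{S}_i^*$ from Lemma~\ref{lem:dds_softmax:app}, the selection term is identically zero and the $O(\tau\sigma/\sqrt{B})\|\mathbf{V}\|_2$ term is pure slack. The graceful-degradation argument you append mirrors the paper's Step 4, which asserts a per-entry error $(\epsilon_B+\epsilon_H)/\sqrt{d}$ inside correctly selected blocks. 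It has three problems. First, it perturbs weights that never enter the output. Second, it drops the feature-space error $\epsilon_H\|\bar{\mathbf{q}}_i\|\|\bar{\mathbf{k}}_j\|$ by calling the token-space part dominant, which it need not be; the paper's own proof keeps it as a $\tau/\sqrt{r}$ term that the statement omits. Third, it says nothing about the failure event, where a wrongly selected block changes the row weights by an amount unrelated to $\sigma/\sqrt{B}$. Replace it by the statement that the bound holds with probability at least $1-2\delta$ under that lemma's conditions on $k$ and $B$. What is then left to prove is only the truncation bound, which needs the extra hypothesis above.
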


\begin{proof}
\textbf{Step 1 (Error decomposition):} Let $\mathbf{A}^{\text{full}} \in \mathbb{R}^{n \times n}$ be the full attention matrix and $\mathbf{A}^{\text{sparse}} \in \mathbb{R}^{n \times n}$ be the sparse attention (zero outside selected blocks, renormalized within). The output error is:
\begin{align*}
    \mathbf{O}^{\text{full}} - \mathbf{O}^{\text{Sketch\&Walk}} = (\mathbf{A}^{\text{full}} - \mathbf{A}^{\text{sparse}})\mathbf{V}
\end{align*}

\textbf{Step 2 (Two sources of error):} We further decompose:
\begin{align*}
    \mathbf{A}^{\text{full}} - \mathbf{A}^{\text{sparse}} = \underbrace{(\mathbf{A}^{\text{full}} - \mathbf{A}^{\text{oracle}})}_{\text{(a) tail mass}} + \underbrace{(\mathbf{A}^{\text{oracle}} - \mathbf{A}^{\text{sparse}})}_{\text{(b) sketching error}}
\end{align*}
where $\mathbf{A}^{\text{oracle}}$ is the sparse attention using true top-$\tau$ blocks.

\textbf{Step 3 (Tail mass bound):} By Assumption~\ref{ass:heavy_tail:app}, for each row $i$:
\begin{align*}
    \sum_{j \notin \mathcal{S}^*_i} A^{\text{full}}_{ij} \leq \eta
\end{align*}
Thus $\|\mathbf{A}^{\text{full}} - \mathbf{A}^{\text{oracle}}\|_{1 \to 1} \leq \eta$ (max row sum). By H\"older's inequality:
\begin{align*}
    \|(\mathbf{A}^{\text{full}} - \mathbf{A}^{\text{oracle}})\mathbf{V}\|_F \leq \eta \|\mathbf{V}\|_{1 \to F} \leq \eta \|\mathbf{V}\|_F
\end{align*}

\textbf{Step 4 (Sketching error within selected blocks):} By Lemma~\ref{lem:dds_softmax:app}, when $\mathcal{S}^* = \hat{\mathcal{S}}$ (correct selection), we still have per-entry error within blocks:
\begin{align*}
    |A^{\text{oracle}}_{ij} - A^{\text{sparse}}_{ij}| \leq \frac{\epsilon_B + \epsilon_H}{\sqrt{d}} \quad \text{for } j \in \mathcal{S}^*_i
\end{align*}
Summing over $\tau$ selected blocks per query and $n$ queries:
\begin{align*}
    \|(\mathbf{A}^{\text{oracle}} - \mathbf{A}^{\text{sparse}})\mathbf{V}\|_F &\leq \tau \cdot \frac{\epsilon_B + \epsilon_H}{\sqrt{d}} \cdot \sqrt{n} \|\mathbf{V}\|_2 \\
    &= O\left(\frac{\tau \sigma}{\sqrt{B}} + \frac{\tau}{\sqrt{r}}\right) \|\mathbf{V}\|_2
\end{align*}

\textbf{Step 5 (Final bound):} Combining by triangle inequality:
\begin{align*}
    \|\mathbf{O}^{\text{full}} - \mathbf{O}^{\text{Sketch\&Walk}}\|_F \leq \eta \|\mathbf{V}\|_F + O\left(\frac{\tau \sigma}{\sqrt{B}} + \frac{\tau}{\sqrt{r}}\right) \|\mathbf{V}\|_2
\end{align*}
\end{proof}

\subsubsection{Sketch-Determined Walk and Multi-Hop Block Selection}

Before analyzing cross-layer dependencies, we highlight a critical property of the sketched block attention scores: the inner product is a \textbf{non-metric} similarity measure that does not satisfy the triangle inequality. This has profound implications for block selection.

\begin{remark}[Inner Product Violates Triangle Inequality]
\label{rem:non_metric}
Consider three blocks $i$, $j$, $k$ with head-averaged representatives $\bar{\mathbf{q}}_i$, $\bar{\mathbf{k}}_j$, $\bar{\mathbf{k}}_k$. Even if:
\begin{enumerate}[nosep,leftmargin=*]
    \item Block $i$ has high affinity to block $j$: $\bar{\mathbf{q}}_i^\top \bar{\mathbf{k}}_j \approx A_{ij}$ is large
    \item Block $j$ has high affinity to block $k$: $\bar{\mathbf{k}}_j^\top \bar{\mathbf{k}}_k$ is large (or equivalently, $\bar{\mathbf{q}}_j^\top \bar{\mathbf{k}}_k$ for the next layer)
\end{enumerate}
the direct inner product $\bar{\mathbf{q}}_i^\top \bar{\mathbf{k}}_k$ may be very small. The inner product is not a metric because it violates the triangle inequality.

\textbf{Concrete example}: Consider $\bar{\mathbf{q}}_i = [1, 0]$, $\bar{\mathbf{k}}_j = [1, 1]$, $\bar{\mathbf{k}}_k = [0, 1]$. Then:
\begin{align*}
    \bar{\mathbf{q}}_i \cdot \bar{\mathbf{k}}_j &= 1 \quad \text{(high)} \\
    \bar{\mathbf{k}}_j \cdot \bar{\mathbf{k}}_k &= 1 \quad \text{(high)} \\
    \bar{\mathbf{q}}_i \cdot \bar{\mathbf{k}}_k &= 0 \quad \text{(low)}
\end{align*}
If we rely only on direct attention scores, block $i$ would not select block $k$ despite the high-affinity indirect path $i \to j \to k$.
\end{remark}

\begin{corollary}[Multi-Head Attention Preserves Information Under Sparse Selection]
\label{cor:multi_head_sparse}
The algorithm computes sketched block scores $\hat{\mathbf{A}}_{\text{block}}^k$ using \textbf{head-averaged} Q and K, averaging attention patterns across all $h$ heads to identify common important blocks. Sparse attention is applied independently \textbf{per-head} on the original full-resolution per-head QKV. This design ensures:
\begin{enumerate}[nosep,leftmargin=*]
    \item \textbf{Robust block identification}: Head-averaging reduces noise and identifies blocks that matter across different representation subspaces
    \item \textbf{Head-specific fine-grained attention}: Each head can specialize within the selected blocks, preserving multi-head expressiveness
    \item \textbf{Computational efficiency}: Block selection is done once on averaged heads; per-head attention operates only within selected blocks
\end{enumerate}
The Sketch-Determined Walk (via exponent $s > 1$) ensures that blocks reachable through multi-hop high-affinity paths are selected, even if they lack direct importance to individual query heads.
\end{corollary}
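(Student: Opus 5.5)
The statement has two parts that need an argument: a mechanism-level description of the algorithm, and the claim that the exponent $s>1$ makes the walk select blocks reachable through multi-hop high-affinity paths. Points 1 to 3 are design properties. I would check them against the algorithm's definition, and for point 1 I would give a short variance argument.

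\textbf{Point 1 (head-averaging).} Write the head-averaged block vectors as $\bar{\mathbf{q}}_i=\frac1h\sum_{a=1}^h\bar{\mathbf{q}}_i^{(a)}$, and similarly for keys. Each head satisfies Assumption~\ref{ass:block_coherence:app}. If the per-head noise components are independent, averaging divides the noise variance by $h$. Rerunning Lemma~\ref{lem:subspace_embedding:app} with $\sigma^2$ replaced by $\sigma^2/h$ shrinks $\epsilon_B$ by a factor $\sqrt h$. Corollary~\ref{cor:inner_product:app} and Lemma~\ref{lem:dds_softmax:app} then give correct recovery of the top blocks of the head-averaged score matrix, and these are the blocks that matter on average across heads.

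\textbf{Points 2 and 3 (per-head attention, single selection).} Sparse attention is applied separately to each head's full-resolution $\mathbf{Q},\mathbf{K},\mathbf{V}$, restricted to the shared block mask. So the output of head $a$ is exactly that head's own softmax attention, renormalized over the selected blocks. Lemma~\ref{lem:output_approx:app} applies head by head, with tail mass $\eta_a$ measured relative to the shared selected set. This gives point 2. Point 3 is a cost count: $O(b^2k)$ for one selection per layer, plus $O(h\,n\tau B d)$ for the per-head sparse attention.

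\textbf{The multi-hop claim.} This is the main obstacle, because the statement is informal. I would make it precise as follows. Take $\mathbf{W}^k=(\hat{\mathbf{A}}^k_{\text{block}})^s$ and assume the entries are nonnegative (for example, apply softmax first). Then
\[
R^k[i,m]=\sum_{\text{paths}}\prod_{\ell}\mathbf{W}^\ell[j_{\ell-1},j_\ell],
\]
a sum over all paths of length $s(k+1)$ from $i$ to $m$. Now suppose there is a path $i\to j\to m$ with both edge weights at least $\alpha$, while the direct entry satisfies $\hat A_{im}\approx0$. The path term alone gives $R[i,m]\ge\alpha^{2}\cdot(\text{self-loop mass})^{s-2}$, and if we use the per-layer power $(\hat{\mathbf{A}})^s$ the intermediate self-loop factors can be dropped. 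So $m$ enters the top-$\tau$ of row $i$ whenever this lower bound exceeds the $(\tau+1)$-th largest entry of the row.

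To control that threshold, I would bound entries that come only from low-affinity paths. If every edge outside the high-affinity graph has weight at most $\beta<\alpha$, then each such path contributes at most $\beta\cdot\alpha^{s-1}$. Summing over at most $b^{s-1}$ such paths, these entries stay small whenever $b^{s-1}\beta\ll\alpha$. A stronger sparsity condition than Assumption~\ref{ass:heavy_tail:app} may be needed to make this summation work; that is where I expect trouble.

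Finally, the sketching errors enter through Lemma~\ref{lem:dds_softmax:app}. They perturb each entry by $\epsilon_{\text{total}}$, and this perturbation propagates through the power with a factor of roughly $s\,\|\hat{\mathbf{A}}\|^{s-1}$. The gap between the two bounds above must absorb this error.
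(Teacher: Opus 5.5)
The paper gives no proof of this corollary. It is asserted as a description of Algorithm~\ref{alg:sparse_prefill}, and the only nearby formal content is the path expansion in Lemmas~\ref{lem:sdwalk_accum}--\ref{lem:sdwalk_selection}. Treating points 2 and 3 as facts about the construction is the right level. However, both places where you try to prove more have genuine gaps.

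\textbf{The multi-hop claim.} You read $(\hat{\mathbf{A}}^k_{\text{block}})^s$ as a matrix power, with paths of length $s(k+1)$ and self-loop padding inside a layer. The introduction's talk of powers $A^s$ invites this reading, but Algorithm~\ref{alg:sparse_prefill} specifies element-wise exponentiation, and the proof of Lemma~\ref{lem:sdwalk_accum} uses $R^0[i,j]=(\hat{\mathbf{A}}^0_{\text{block}}[i,j])^s$.
\begin{itemize}[nosep,leftmargin=*]
\item Under that definition $s$ creates no hops. If $\hat{A}^\ell_{im}\approx 0$ then $\mathbf{W}^\ell[i,m]\approx 0$. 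The only multi-hop structure is $R^k[i,m]=\sum_{m_1,\dots,m_k}\prod_{\ell=0}^{k}(\hat{\mathbf{A}}^\ell_{\text{block}}[m_\ell,m_{\ell+1}])^s$, with exactly one hop per layer. A chain $i\to j\to m$ must therefore occupy two different layers and be padded by diagonal entries at the others. Your lower bound should be $\alpha^{2s}$ times $s$-th powers of diagonal entries, not $\alpha^2(\cdot)^{s-2}$. Under this definition the statement's ``via exponent $s>1$'' is itself a misattribution: $s$ only sharpens.
\item The entries of $\hat{\mathbf{A}}^{\text{SWS}}$ are raw, signed inner products. ``Apply softmax first'' changes the algorithm. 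With the default even $s=8$, a strongly negative score becomes a large positive weight, so large entries of $R^k$ need not come from high-affinity paths.
\item Your threshold bound $\beta\alpha^{s-1}$ uses $\alpha$, a \emph{lower} bound on good edges, as an upper bound.
\item Most importantly, the competitors for the $\tau$ slots of row $i$ are not only low-affinity entries. They include every other block reachable by high-affinity paths, and there can be far more than $\tau$ of these; Lemma~\ref{lem:multi_hop_sdwalk} allows $\tau^L$ after $L$ layers. So with fixed $\tau$, ``ensures \dots\ are selected'' is false in general. Only a conditional statement can be proved, for example one assuming fewer than $\tau$ entries of $R^k[i,:]$ exceed your lower bound for $m$. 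Your proposal stops exactly before supplying such a condition.
\end{itemize}

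\textbf{Head-averaging.} Your $\sqrt h$ variance reduction attacks the wrong error. It also assumes independence across heads, which are all projections of the same hidden states. The algorithm averages $\mathbf{Q}$ and $\mathbf{K}$ over heads \emph{before} the product, so $\bar{\mathbf{q}}_i^\top\bar{\mathbf{k}}_j=\frac{1}{h^2}\sum_{a,b}\bar{\mathbf{q}}_i^{(a)\top}\bar{\mathbf{k}}_j^{(b)}$. Here $h(h-1)$ of the $h^2$ terms pair different heads' projection subspaces, so this is not proportional to the average per-head score $\frac{1}{h}\sum_a\bar{\mathbf{q}}_i^{(a)\top}\bar{\mathbf{k}}_j^{(a)}$. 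For example, take $h=2$, $\bar{\mathbf{q}}_i^{(1)}=\bar{\mathbf{k}}_j^{(1)}=\mathbf{e}_1$, $\bar{\mathbf{q}}_i^{(2)}=\mathbf{e}_2$, and $\bar{\mathbf{k}}_j^{(2)}=-2\mathbf{e}_1+\mathbf{e}_2$. Both heads score block $j$ at $1$, yet the head-averaged score is $\frac{1}{4}(\mathbf{e}_1+\mathbf{e}_2)^\top(-\mathbf{e}_1+\mathbf{e}_2)=0$. So Lemma~\ref{lem:dds_softmax:app} recovers the top blocks of a matrix that need not reflect blocks that matter across heads.

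The same mismatch undercuts your point 2. Assumption~\ref{ass:heavy_tail:app} controls each head's tail mass relative to its \emph{own} top set. The $\eta_a$ you feed into Lemma~\ref{lem:output_approx:app}, measured against the shared mask, is therefore unbounded, and nothing is shown to be preserved. Finally, your cost count for point 3 omits the $O(b^3)$ walk update $R^{k-1}\mathbf{W}^k$ per layer in prefill.
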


\begin{lemma}[Sketch-Determined Walk Accumulation Across Layers]
\label{lem:sdwalk_accum}
Let $\hat{\mathbf{A}}_{\text{block}}^k \in \mathbb{R}^{b \times b}$ be the sketched block attention matrix at layer $k$, and define $\mathbf{W}^k = (\hat{\mathbf{A}}_{\text{block}}^k)^s$. The Sketch-Determined Walk state $R^k \in \mathbb{R}^{b \times b}$ is defined recursively:
\begin{align*}
    R^k = R^{k-1} \mathbf{W}^k \quad \text{(matrix multiplication)}
\end{align*}
with base case $R^0 = \mathbf{W}^0$. Then $R^k[i,j]$ represents the accumulated block importance flowing from query block $i$ to key block $j$ across layers $0$ through $k$, capturing multi-hop paths where each layer contributes exponentiated attention.
\end{lemma}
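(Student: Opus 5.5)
The plan is induction on $k$, unrolling the recursion into an explicit path-sum expansion and then reading off each term as a multi-hop contribution. The claim is interpretive, so I first make it precise. For nonnegative $\mathbf{W}^\ell$, I will show that
\begin{align*}
    R^k[i,j] = \sum_{m_0,\dots,m_{k-1}} \mathbf{W}^0[i,m_0]\,\mathbf{W}^1[m_0,m_1]\cdots \mathbf{W}^k[m_{k-1},j],
\end{align*}
with the sum over all block sequences $(m_0,\dots,m_{k-1})\in[b]^k$. Nonnegativity holds if the exponent is taken entrywise and the sketched scores are passed through a softmax or exponential, or if $s$ is even. Under this expansion, $R^k[i,j]$ is exactly the total weight of all length-$(k+1)$ paths $i\to m_0\to\cdots\to m_{k-1}\to j$, where the hop at layer $\ell$ carries weight $\mathbf{W}^\ell$. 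This is the sense in which $R^k[i,j]$ ``accumulates importance flowing from $i$ to $j$''.

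\textbf{Step 1 (base case).} For $k=0$, $R^0=\mathbf{W}^0$, so $R^0[i,j]=\mathbf{W}^0[i,j]$, the single-hop paths. \textbf{Step 2 (inductive step).} Assume the expansion holds for $R^{k-1}$. Writing $R^k[i,j]=\sum_{m}R^{k-1}[i,m]\mathbf{W}^k[m,j]$ and substituting the hypothesis, the new summation index $m$ becomes $m_{k-1}$. This is just associativity of matrix multiplication, which gives $R^k=\mathbf{W}^0\mathbf{W}^1\cdots\mathbf{W}^k$.

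\textbf{Step 3 (interpretation).} Each summand is nonnegative, so $R^k[i,j]>0$ if and only if there is a path of positive-weight hops from $i$ to $j$. In particular, $R^k[i,j]$ can be large even when the direct entry $\mathbf{W}^k[i,j]$ is negligible, as in the example of Remark~\ref{rem:non_metric}. I would then quantify the effect of the exponent: when the rows of $\hat{\mathbf{A}}_{\text{block}}^\ell$ are normalized with entries in $[0,1]$, raising to the power $s>1$ suppresses small entries relative to large ones. The ratio of two entries $a>a'$ becomes $(a/a')^s$, so weak hops are damped geometrically and the path sum is dominated by high-affinity paths. As a corollary, if each $\mathbf{W}^\ell$ is row-substochastic, then so is $R^k$, which keeps the accumulation bounded.

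The main obstacle is the meaning of $(\hat{\mathbf{A}}_{\text{block}}^k)^s$, which could be either a matrix power or an entrywise power. If it is a matrix power, each layer contributes $s$ hops rather than one, and the paths then have length $s(k+1)$, with $\mathbf{W}^\ell[m,m']$ itself a sum over $s$-step walks. I would handle both readings uniformly by expanding $\mathbf{W}^\ell$ one level further. The real difficulty is that raw sketched inner products can be negative, so cancellations would break the ``importance flow'' reading. I would therefore state the lemma under the nonnegativity convention described above and note that the algebraic identity $R^k=\prod_{\ell=0}^k\mathbf{W}^\ell$ holds regardless.
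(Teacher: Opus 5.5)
Your proposal is correct and follows the paper's own proof: induction on $k$, expanding $R^k[i,j]=\sum_m R^{k-1}[i,m]\,\mathbf{W}^k[m,j]$ as a sum over intermediate blocks, with the exponent taken entrywise as in Algorithm~\ref{alg:sparse_prefill} (the paper writes the full path expansion only later, in Lemma~\ref{lem:sdwalk_selection}). Your nonnegativity caveat is a real refinement that the paper omits: raw sketched inner products can be negative, and the ``importance flow'' reading is justified here only because the default exponent $s=8$ is even.
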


\begin{proof}
We prove by induction on $k$. For the base case $k=0$: $R^0 = \mathbf{W}^0 = (\hat{\mathbf{A}}_{\text{block}}^0)^s$, so $R^0[i,j] = (\hat{\mathbf{A}}_{\text{block}}^0[i,j])^s$ directly represents the exponentiated attention from block $i$ to block $j$ at layer 0.

For the inductive step, assume $R^{k-1}$ encodes accumulated importance through layers $0,\ldots,k-1$. By the recursive definition (matrix multiplication):
\begin{align*}
    R^k[i,j] = \sum_{m=1}^{b} R^{k-1}[i,m] \cdot \mathbf{W}^k[m,j] = \sum_{m=1}^{b} R^{k-1}[i,m] \cdot (\hat{\mathbf{A}}_{\text{block}}^k[m,j])^s
\end{align*}
This sums over all intermediate blocks $m$: the accumulated importance from $i$ to $m$ through layers $0,\ldots,k-1$, multiplied by the exponentiated attention from $m$ to $j$ at layer $k$. The exponent $s > 1$ sharpens each transition, emphasizing high-affinity paths. Thus $R^k[i,j]$ captures the total importance flowing from block $i$ to block $j$ through \emph{all} multi-hop paths across $k+1$ layers.
\end{proof}

\begin{lemma}[Block Selection via Sketch-Determined Walk]
\label{lem:sdwalk_selection}
By selecting top-$\tau$ blocks based on $R^k[i,j]$ (rather than per-layer scores), the Sketch-Determined Walk identifies blocks that receive consistent high importance across multiple layers and serve as ``hub'' tokens. These blocks minimize reconstruction error when considering the full transformer's attention patterns.
\end{lemma}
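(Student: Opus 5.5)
The statement is informal, so the plan is to first make it precise and then prove the precise version. We will read ``reconstruction error when considering the full transformer's attention patterns'' as the attention mass that sparse selection discards from the \emph{composed} block-level attention operator across layers $0,\dots,k$. Concretely, we normalize each exponentiated block matrix $\mathbf{W}^\ell=(\hat{\mathbf{A}}^\ell_{\text{block}})^s$ row-wise into a nonnegative, row-stochastic transition matrix $\mathbf{P}^\ell$. Since row normalization of nonnegative factors only rescales rows of the product, the ordering of entries within a row of $R^k$ matches that of $\mathbf{P}^{0:k}\triangleq\mathbf{P}^0\mathbf{P}^1\cdots\mathbf{P}^k$, up to a positive row factor. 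We take for granted that $\mathbf{W}^\ell$ has nonnegative entries, which holds after the softmax or exponential map is applied. By Lemma~\ref{lem:sdwalk_accum}, entry $(i,j)$ of $\mathbf{P}^{0:k}$ is a sum over all block paths $i\to m_1\to\cdots\to j$ of products of per-layer transition weights. This gives the interpretation of $R^k[i,j]$ as total multi-hop influence.

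The first key step is an exact optimality statement. For a fixed row $i$ of a nonnegative matrix $\mathbf{M}$, consider choosing a set $\mathcal{S}$ of size $\tau$ to minimize the dropped mass $\sum_{j\notin\mathcal{S}}\mathbf{M}[i,j]$. The top-$\tau$ entries are optimal, by a simple exchange argument. Applying this with $\mathbf{M}=\mathbf{P}^{0:k}$ shows that selecting by $R^k$ minimizes the composed tail mass $\eta^{(k)}_i$. We then feed $\eta^{(k)}_i$ in place of $\eta$ into the argument of Lemma~\ref{lem:output_approx:app}, treating the composed operator as the attention matrix. This yields a reconstruction bound $\eta^{(k)}\|\mathbf{V}\|_F$ plus the sketching term. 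By the same exchange argument, any per-layer selection $\mathcal{S}'$ drops at least as much composed mass. This makes precise the claim that walk-based selection dominates one-hop selection. For the ``hub'' claim we use the path expansion. If block $j$ is reached with weight at least $\alpha$ at each layer from a set of intermediates carrying most of the mass, then $R^k[i,j]$ is bounded below by a product of such weights. A block that is important in only one layer, by contrast, has its contribution damped by the other factors. The exponent $s>1$ amplifies this separation, because $x\mapsto x^s$ widens ratios between large and small entries.

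The second step transfers the statement from the true composed operator to the sketched one. We bound $\|\mathbf{P}^{0:k}_{\text{true}}-\mathbf{P}^{0:k}_{\text{sketch}}\|_{\infty\to\infty}$ by telescoping:
\begin{align*}
\prod_\ell \mathbf{P}^\ell-\prod_\ell\hat{\mathbf{P}}^\ell=\sum_\ell \Big(\prod_{m<\ell}\mathbf{P}^m\Big)(\mathbf{P}^\ell-\hat{\mathbf{P}}^\ell)\Big(\prod_{m>\ell}\hat{\mathbf{P}}^m\Big).
\end{align*}
Because stochastic matrices have $\infty\to\infty$ norm equal to $1$, the total error is at most $\sum_\ell\|\mathbf{P}^\ell-\hat{\mathbf{P}}^\ell\|_{\infty}$. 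Each per-layer term is controlled by the entrywise softmax error from Lemma~\ref{lem:dds_softmax:app}, multiplied by a Lipschitz factor from the $s$-th power map. We then apply the gap argument of Lemma~\ref{lem:dds_softmax:app} to the rows of $\mathbf{P}^{0:k}$. Suppose the gap between the $\tau$-th and $(\tau+1)$-th entries exceeds twice the accumulated error. Then the top-$\tau$ set chosen from $R^k$ coincides with that of the true composed operator, and the minimality conclusion holds with high probability after a union bound over layers.

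The main obstacle will be the $s$-th power step. Its Lipschitz constant on $[0,1]$ is $s$, but after row renormalization the sharpened distribution can also amplify errors near the gap. This forces a gap condition of order $s(k+1)\epsilon_{\text{total}}$, and that condition is likely the weakest link. If it proves too lossy, a fallback is to state the result only for the exact (unsketched) walk. The sketching perturbation would then enter solely as an additive term in the output bound, exactly as in Lemma~\ref{lem:output_approx:app}.
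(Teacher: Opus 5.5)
Your proposal has a genuine gap: its opening reduction is false, and the rest of the argument depends on it. Write $\mathbf{P}^\ell=D_\ell^{-1}\mathbf{W}^\ell$, with $D_\ell$ the diagonal matrix of row sums. Then $\mathbf{P}^{0:k}=D_0^{-1}\mathbf{W}^0D_1^{-1}\mathbf{W}^1\cdots D_k^{-1}\mathbf{W}^k$. Only $D_0^{-1}$ rescales rows of $R^k$. The inner $D_\ell^{-1}$ reweight the intermediate block $m_\ell$ of every path, and they can reorder a row.

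Here is a concrete case. Take $\mathbf{W}^0[i,:]=(1,2)$ and $\mathbf{W}^1=\mathrm{diag}(10,1)$. Then $R^1[i,:]=(10,2)$, but $(\mathbf{P}^0\mathbf{P}^1)[i,:]=(1/3,2/3)$, so the top-$1$ sets disagree. Perturbing the zeros of $\mathbf{W}^1$ to $0.1$ keeps the reversal.

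So the exchange argument on $\mathbf{P}^{0:k}$ says nothing about the set the walk actually selects. Three of your conclusions therefore concern the wrong set: the claimed minimality, the domination over per-layer selection, and the gap-transfer step, which you phrase for rows of $\mathbf{P}^{0:k}$. You can restore the ordering by normalizing each factor by a scalar, $c_\ell=\max_m\sum_j\mathbf{W}^\ell[m,j]$. The factors become substochastic and their product is $R^k/\prod_\ell c_\ell$. Your telescoping bound survives, because every factor still has $\infty\to\infty$ norm at most $1$.

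Even after that repair, the minimality claim is circular. You define ``reconstruction error'' as the dropped mass of the very matrix you rank by, so top-$\tau$ minimizes it by definition. Feeding $\eta^{(k)}$ into Lemma~\ref{lem:output_approx:app} bounds the error of the rollout surrogate $\mathbf{P}^{0:k}\mathbf{V}$. It does not bound the error of the layer output $\mathbf{O}^{\text{sparse}}_k$, which is governed by layer $k$'s own attention mass outside the chosen set. For that quantity, per-layer top-$\tau$ is the exact minimizer, so ``walk selection dominates one-hop selection'' fails for actual layer outputs.

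The paper does not prove a minimization statement either. Its proof only argues separation: it lower-bounds path products by $\gamma^{s(k+1)}$ for blocks in $\mathcal{C}_i=\bigcap_\ell\mathcal{S}^\ell_{\text{single}}$, and upper-bounds paths through a low-affinity transition by $\eta^s$. That is essentially your ``hub'' paragraph, and it is equally informal.

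Two smaller points. First, turning the entrywise softmax bound of Lemma~\ref{lem:dds_softmax:app} into an $\infty\to\infty$ bound costs a factor $b$ unless you argue multiplicatively. Suppose, as you assume, that $\mathbf{W}^\ell$ is a power of a softmax. An $\ell_\infty$ logit error $\epsilon$ keeps each softmax entry within a factor $e^{\pm2\epsilon}$. Then each $\mathbf{W}^\ell$ entry stays within $e^{\pm2s\epsilon}$, and each $R^k$ entry within $e^{\pm2s(k+1)\epsilon}$. This transfers the top-$\tau$ set whenever the ratio gap exceeds $e^{4s(k+1)\epsilon}$, with no telescoping and no factor $b$.

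Second, the algorithm exponentiates the raw scores $\tilde{\mathbf{Q}}\tilde{\mathbf{K}}^\top/\sqrt{r}$, not softmax outputs. Nonnegativity of $\mathbf{W}^\ell$ therefore holds only for even $s$ and is an added modeling assumption, which the paper also makes implicitly.
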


\begin{proof}
\textbf{Step 1 (Notation):} Let $\mathcal{S}^k_{\text{SDW}} = \text{top-}\tau(R^k[i,:])$ denote blocks selected via Sketch-Determined Walk, and $\mathcal{S}^\ell_{\text{single}} = \text{top-}\tau(\hat{\mathbf{A}}_{\text{block}}^\ell[i,:])$ denote per-layer selection at layer $\ell$.

\textbf{Step 2 (Path expansion):} By Lemma~\ref{lem:sdwalk_accum} and matrix multiplication:
\begin{align*}
    R^k[i,j] = \sum_{m_1,\ldots,m_{k}} \prod_{\ell=0}^{k} \mathbf{W}^\ell[m_\ell, m_{\ell+1}]
\end{align*}
where $m_0 = i$ and $m_{k+1} = j$. This sums over all $(k+1)$-hop paths from $i$ to $j$.

\textbf{Step 3 (Consistently important blocks):} Define ``consistently important'' blocks as $\mathcal{C}_i = \bigcap_{\ell=0}^{k} \mathcal{S}^\ell_{\text{single}}$. For $j \in \mathcal{C}_i$, there exists a path $i \to m_1 \to \cdots \to j$ where each $m_{\ell+1} \in \mathcal{S}^\ell_{m_\ell}$. Under Assumption~\ref{ass:heavy_tail:app}:
\begin{align*}
    \mathbf{W}^\ell[m_\ell, m_{\ell+1}] = (\hat{\mathbf{A}}_{\text{block}}^\ell[m_\ell, m_{\ell+1}])^s \geq \gamma^s
\end{align*}
where $\gamma > 0$ is the gap from Assumption~\ref{ass:heavy_tail:app}. The contribution from this path is at least $\gamma^{s(k+1)}$.

\textbf{Step 4 (Spuriously important blocks):} For a block $j$ that is important at layer $\ell'$ but not at some other layer $\ell''$, any path to $j$ must pass through a low-affinity transition at layer $\ell''$:
\begin{align*}
    \mathbf{W}^{\ell''}[m_{\ell''}, m_{\ell''+1}] \leq \eta^s
\end{align*}
where $\eta \ll \gamma$ is the tail mass. The contribution from such paths is at most $\eta^s \cdot 1^{sk} = \eta^s$.

\textbf{Step 5 (Separation guarantee):} The ratio of contributions is:
\begin{align*}
    \frac{R^k[i,j] \text{ for } j \in \mathcal{C}_i}{R^k[i,j'] \text{ for } j' \notin \mathcal{C}_i} \geq \frac{\gamma^{s(k+1)}}{b^k \eta^s} = \left(\frac{\gamma}{\eta}\right)^s \cdot \frac{\gamma^{sk}}{b^k}
\end{align*}
For $s > 1$ and $\gamma > \eta$, this ratio grows, ensuring top-$\tau$ selection on $R^k$ recovers consistently important blocks.
\end{proof}

\begin{lemma}[Multi-Hop Information Flow via Sketch-Determined Walk]
\label{lem:multi_hop_sdwalk}
In an $L$-layer transformer using Sketch\&Walk sparse attention with accumulated state $R^k[i,j]$, the effective receptive field grows as $O(\tau^L)$ blocks. Tokens can aggregate information through $R^k[i,j]$-guided multi-hop paths where each hop follows high-affinity transitions.
\end{lemma}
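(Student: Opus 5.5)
We prove the claim by induction on the number of layers, tracking the set of blocks that can influence query block $i$. Let $\mathcal{R}^\ell_i$ be the set of blocks whose tokens can contribute to the hidden state of block $i$ after $\ell$ layers of Sketch\&Walk sparse attention. At layer $\ell$, each block $m$ attends only to its selected set $\hat{\mathcal{S}}^\ell_m = \text{top-}\tau(R^\ell[m,:])$, with $|\hat{\mathcal{S}}^\ell_m| = \tau$ (plus the constant number of always-retained first/last blocks, which only changes constants). Information therefore propagates along edges $m \to m'$ with $m' \in \hat{\mathcal{S}}^\ell_m$. This gives the recursion
\[
\mathcal{R}^{\ell}_i = \bigcup_{m \in \mathcal{R}^{\ell-1}_i} \hat{\mathcal{S}}^\ell_m, \qquad \mathcal{R}^0_i=\{i\}.
\]
It yields the upper bound $|\mathcal{R}^\ell_i| \le \tau\,|\mathcal{R}^{\ell-1}_i|$, hence $|\mathcal{R}^L_i| \le \tau^L$ (capped at $b$). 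Thus the receptive field is $O(\tau^L)$ blocks. This is the counting half of the statement and is routine.

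The second half asserts that these hops follow high-affinity transitions. For this we use Lemma~\ref{lem:sdwalk_accum} and the path expansion of Lemma~\ref{lem:sdwalk_selection}:
\[
R^k[i,j]=\sum_{m_1,\dots,m_k}\prod_{\ell}\mathbf{W}^\ell[m_\ell,m_{\ell+1}].
\]
Since $\hat{\mathcal{S}}^k_i$ is chosen by the largest values of $R^k[i,\cdot]$, every selected $j$ has a dominant path $i\to m_1\to\cdots\to j$. Each factor $(\hat{\mathbf{A}}^\ell_{\text{block}}[m_\ell,m_{\ell+1}])^s$ along that path must be large. Otherwise the product is at most $\eta^s$, and by the separation in Step 5 of Lemma~\ref{lem:sdwalk_selection} it cannot beat a consistently important block. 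By Lemma~\ref{lem:dds_softmax}, the sketched scores on each hop agree in top-$\tau$ ordering with the true block scores with probability at least $1-2\delta$. A union bound over $L$ layers then shows that each hop is a true high-affinity transition with probability at least $1-2L\delta$.

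To argue that the receptive field genuinely grows rather than merely being bounded, we note that $R^k$ can select a block $j$ with $\hat{\mathbf{A}}^k_{\text{block}}[i,j]$ small but $R^k[i,j]$ large. Remark~\ref{rem:non_metric} gives such an example. Hence the reachable set under walk-based selection contains multi-hop targets that per-layer selection misses, and the $\tau$-fold branching is not collapsed onto the one-hop neighborhood.

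The main obstacle is the lower-bound direction: showing that the sets $\hat{\mathcal{S}}^\ell_m$ for distinct $m$ are not heavily overlapping, so that $\tau^L$ is actually attained up to constants. In general this can fail, for instance when every block selects the same sink blocks. We therefore read the statement as the upper bound $O(\tau^L)$ together with the high-affinity characterization of the paths. If a matching growth claim is needed, we would first try adding a non-degeneracy assumption, namely that the selected sets of distinct blocks overlap in at most a constant fraction. Under that assumption the recursion gives $|\mathcal{R}^\ell_i|\ge c\,\tau|\mathcal{R}^{\ell-1}_i|$ until saturation at $b$.
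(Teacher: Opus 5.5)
Your counting argument is the paper's proof. The recursion $\mathcal{R}^\ell_i=\bigcup_{m\in\mathcal{R}^{\ell-1}_i}\hat{\mathcal{S}}^\ell_m$, the induction $|\mathcal{R}^\ell_i|\le\tau\,|\mathcal{R}^{\ell-1}_i|$, and reading ``grows as $O(\tau^L)$'' as an upper bound attained only without overlap (the paper's Step~5) all coincide with it. Two small corrections there:
\begin{itemize}
\item The recursion composes the selections in the opposite order from the set you defined. The blocks that actually feed block $i$'s hidden state after $\ell$ layers are $\bigcup_{m\in\hat{\mathcal{S}}^\ell_i}\mathcal{R}^{\ell-1}_m$, with the last layer's selection applied at $i$. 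The paper uses the same ordering as you, and the count, hence the bound, is unchanged.
\item Your remark that the always-retained sink and diagonal blocks ``only change constants'' is not right if they are \emph{extra} to the $\tau$ selected ones. The branching factor becomes $\tau+O(1)$, and the bound picks up a factor $(1+O(1/\tau))^L$, which is not constant in $L$. Count them inside $\tau$, as the decode algorithm does with $\tau-1$ walk-selected blocks plus the current one.
\end{itemize}

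The second half, however, does not establish that each hop is a true high-affinity transition.
\begin{itemize}
\item[(i)] $j\in\text{top-}\tau(R^k[i,\cdot])$ is a statement about a sum over $b^k$ paths, and it does not produce a single path whose every factor is large. With nonnegative weights, pigeonhole only gives a path with product at least $R^k[i,j]/b^k$, and a large sum can consist of many moderate paths.
\item[(ii)] The separation you borrow from Step~5 of Lemma~\ref{lem:sdwalk_selection}, $(\gamma/\eta)^s\gamma^{sk}/b^k$, is not $>1$ in general; it decays in $k$ whenever $\gamma^s<b$, e.g.\ whenever $\gamma<1$. Its Step~4 bound $\eta^s\cdot 1^{sk}$ also treats the entries of $\hat{\mathbf{A}}^\ell_{\text{block}}$ as numbers in $[0,1]$ controlled by a softmax tail mass. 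They are actually raw sketched inner products $\tilde{\mathbf{q}}_m\tilde{\mathbf{k}}_{m'}^\top/\sqrt{r}$, which can exceed $1$ or be negative. With the default even exponent $s=8$, a strongly anti-aligned pair yields a \emph{large} factor $\mathbf{W}^\ell[m,m']$, so a large factor need not be a high-affinity transition at all.
\item[(iii)] Lemma~\ref{lem:dds_softmax} only equates per-layer top-$\tau$ sets of $\hat{A}^{\text{SWS}}$ and $A^{\text{true}}$. The intermediate hops of a path in $R^k$ are not per-layer selections, so at best you can use the entrywise bound $|\hat{A}^{\text{SWS}}_{ij}-A^{\text{true}}_{ij}|\le\epsilon_{\text{total}}$ from its proof. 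Your union bound over layers further needs the sketch to be independent of the later-layer $\mathbf{Q},\mathbf{K}$, which are themselves produced by sketch-dependent selections.
\end{itemize}
The paper does not close this gap either. Its Step~6 merely asserts $R^L[i,j]>0\iff j\in\mathcal{R}^L_i$, which is false in general: $R^L$ is a product of the dense, unsparsified $\mathbf{W}^\ell$, entrywise nonnegative and generically positive for even $s$, while $|\mathcal{R}^L_i|\le\tau^L$. So you have two options. Either claim only the counting bound and present the high-affinity sentence as interpretation, or add hypotheses under which your dominant-path argument becomes quantitative: nonnegative normalized weights in $\mathbf{W}^\ell$ (e.g.\ softmax probabilities), and a per-hop margin large enough to beat the $b^k$ path count.
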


\begin{proof}
\textbf{Step 1 (Setup):} Let $\mathcal{R}^L_i$ denote the set of blocks reachable from query block $i$ after $L$ layers. Define reachability recursively: block $j$ is in $\mathcal{R}^L_i$ if there exists a path $i = m_0 \to m_1 \to \cdots \to m_L = j$ where $m_{\ell+1} \in \mathcal{S}^\ell_{m_\ell}$ (top-$\tau$ selected blocks).

\textbf{Step 2 (Base case, $L=1$):} At layer 0, block $i$ attends to top-$\tau$ blocks $\mathcal{S}^0_i$. Thus $\mathcal{R}^1_i = \mathcal{S}^0_i$ and:
\begin{align*}
    |\mathcal{R}^1_i| = |\mathcal{S}^0_i| = \tau = O(\tau^1)
\end{align*}

\textbf{Step 3 (Inductive hypothesis):} Assume $|\mathcal{R}^{L-1}_i| \leq \tau^{L-1}$ for all blocks $i$.

\textbf{Step 4 (Inductive step):} At layer $L$, the reachable set expands:
\begin{align*}
    \mathcal{R}^L_i = \bigcup_{m \in \mathcal{R}^{L-1}_i} \mathcal{S}^{L-1}_m
\end{align*}
where $\mathcal{S}^{L-1}_m$ contains top-$\tau$ blocks selected for query $m$ at layer $L-1$. Thus:
\begin{align*}
    |\mathcal{R}^L_i| \leq \sum_{m \in \mathcal{R}^{L-1}_i} |\mathcal{S}^{L-1}_m| = |\mathcal{R}^{L-1}_i| \cdot \tau \leq \tau^{L-1} \cdot \tau = \tau^L
\end{align*}

\textbf{Step 5 (Tightness):} The bound $\tau^L$ is achieved when all selected blocks at each layer are distinct (no overlap between $\mathcal{S}^\ell_m$ for different $m$). In practice, overlap reduces the effective receptive field, but the upper bound $O(\tau^L)$ holds.

\textbf{Step 6 (Connection to $R^L[i,j]$):} The Sketch-Determined Walk state $R^L[i,j]$ assigns positive weight to exactly these reachable blocks:
\begin{align*}
    R^L[i,j] > 0 \iff j \in \mathcal{R}^L_i
\end{align*}
Moreover, $R^L[i,j]$ quantifies the total ``flow'' through all high-affinity paths from $i$ to $j$, enabling principled top-$\tau$ selection across layers.
\end{proof}

\begin{lemma}[Sketch-Determined Walk Preserves Global Information Structure]
\label{lem:sdwalk_stationary}
The sparsified Sketch-Determined Walk (restricted to top-$\tau$ blocks) preserves the essential structure of the full walk. Under Assumption~\ref{ass:heavy_tail:app}, the stationary distribution of the sparse walk differs from the full walk by at most the tail mass $\eta$, ensuring global information structure is preserved.
\end{lemma}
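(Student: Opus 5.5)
We make the statement precise by turning each block attention matrix into a Markov chain and then using standard perturbation bounds for stationary distributions. Let $\mathbf{P}$ be the row-stochastic transition matrix of the full walk: each row $i$ is $\text{softmax}(A^{\text{true}}_{i,:})$, or the row-normalized $\mathbf{W}^k$ when we work with the exponentiated sketch. Let $\tilde{\mathbf{P}}$ be the sparsified walk. It is obtained by zeroing every entry outside the top-$\tau$ set $\mathcal{S}_i^*$ and renormalizing each row. We assume $\mathbf{P}$ is irreducible and aperiodic with spectral gap $\gamma = 1-\lambda_2(\mathbf{P})>0$, and we let $\boldsymbol{\pi},\tilde{\boldsymbol{\pi}}$ be the stationary distributions. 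The target is a bound of the form $\|\tilde{\boldsymbol{\pi}}-\boldsymbol{\pi}\|_1 \le C\eta/\gamma$. This is the honest version of ``differs by at most $\eta$''. It reduces to $O(\eta)$ when the chain mixes in $O(1)$ steps, and we will say so explicitly.

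\textbf{Step 1: row-wise perturbation.} Fix row $i$ and let $m_i=\sum_{j\in\mathcal{S}_i^*}P_{ij}$. Assumption~\ref{ass:heavy_tail:app} gives $m_i \ge 1-\eta$. On the support $\mathcal{S}_i^*$ we have $\tilde P_{ij}=P_{ij}/m_i$, so
\begin{align*}
\sum_j|\tilde P_{ij}-P_{ij}| = (1-m_i)+\Bigl(\tfrac{1}{m_i}-1\Bigr)m_i = 2(1-m_i)\le 2\eta.
\end{align*}
Taking the maximum over rows gives $\|\tilde{\mathbf{P}}-\mathbf{P}\|_\infty\le 2\eta$, where $\|\cdot\|_\infty$ is the maximum row sum.

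\textbf{Step 2: stationary perturbation.} We use the fundamental-matrix identity
\begin{align*}
\tilde{\boldsymbol{\pi}}^\top-\boldsymbol{\pi}^\top=\tilde{\boldsymbol{\pi}}^\top(\tilde{\mathbf{P}}-\mathbf{P})\mathbf{Z},
\end{align*}
where $\mathbf{Z}=(\mathbf{I}-\mathbf{P}+\mathbf{1}\boldsymbol{\pi}^\top)^{-1}$ is the group-inverse form used in Cho--Meyer type bounds. Applying H\"older's inequality gives
\begin{align*}
\|\tilde{\boldsymbol{\pi}}-\boldsymbol{\pi}\|_1\le \|\tilde{\mathbf{P}}-\mathbf{P}\|_\infty\,\|\mathbf{Z}-\mathbf{1}\boldsymbol{\pi}^\top\|_\infty .
\end{align*}
We then bound $\|\mathbf{Z}-\mathbf{1}\boldsymbol{\pi}^\top\|_\infty$ by $O(1/\gamma)$. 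The cleanest route is a coupling/Dobrushin argument: if the ergodicity coefficient satisfies $\bar\tau(\mathbf{P})<1$, then $\|\mathbf{Z}-\mathbf{1}\boldsymbol{\pi}^\top\|_\infty\le 1/(1-\bar\tau(\mathbf{P}))$. This gives $\|\tilde{\boldsymbol{\pi}}-\boldsymbol{\pi}\|_1\le 2\eta/(1-\bar\tau)$.

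\textbf{Step 3: sketched walk.} To cover the walk built from $\hat{\mathbf{A}}_{\text{block}}$ rather than $A^{\text{true}}$, we add a second perturbation term. When $\hat{\mathcal{S}}=\mathcal{S}^*$, Lemma~\ref{lem:dds_softmax:app} with softmax stability bounds the entrywise error by $\epsilon_{\text{total}}$. This contributes $O(\tau\epsilon_{\text{total}}/\gamma)$ through the same identity, and a triangle inequality combines the two terms. For the exponentiated matrices $\mathbf{W}^k$ with $s>1$, the sharpening only concentrates each row more, so the tail mass of the normalized $\mathbf{W}^k$ rows is at most $\eta$ (up to constants). We check this with $x^s/\sum y^s$ monotonicity: raising entries to the power $s$ shifts mass toward the large entries.

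\textbf{Main obstacle.} The hard step is Step 2's condition-number factor. The statement claims a bound of $\eta$ without any mixing constant, and that is false in general: nearly decomposable chains amplify perturbations. I would first try to obtain the bound from a Dobrushin coefficient assumption, which is natural here because every row keeps mass on the always-retained first and last blocks, and this yields a uniform Doeblin minorization $\bar\tau\le 1-c$. I would then state the result as $O(\eta)$ under that minorization.
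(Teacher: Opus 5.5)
Your Steps 1--2 follow the paper's skeleton: the heavy-tail assumption gives a row-wise $\ell_1$ error of $O(\eta)$, and a stationary-distribution perturbation bound then divides this by a mixing quantity. Your version is more careful. You get the exact row error $2(1-m_i)$, and you use the fundamental-matrix identity with a Seneta-type ergodicity-coefficient bound in place of the paper's $\|\mathbf{P}-\tilde{\mathbf{P}}\|_{\infty\to\infty}/(1-\lambda_2(\mathbf{P}))$, which does not hold in general for non-reversible chains.

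One small slip: $\|\mathbf{Z}-\mathbf{1}\boldsymbol{\pi}^\top\|_\infty$ can equal $2(1-\min_i\pi_i)$ even when $\bar{\tau}=0$ (take $\mathbf{P}=\mathbf{1}\boldsymbol{\pi}^\top$). Your final bound $2\eta/(1-\bar{\tau})$ is still right, because $\mathbf{y}^\top=\tilde{\boldsymbol{\pi}}^\top(\tilde{\mathbf{P}}-\mathbf{P})$ sums to zero. Hence $\mathbf{y}^\top(\mathbf{Z}-\mathbf{1}\boldsymbol{\pi}^\top)=\sum_{n\ge0}\mathbf{y}^\top\mathbf{P}^n$ with $\|\mathbf{y}^\top\mathbf{P}^n\|_1\le\bar{\tau}^n\|\mathbf{y}\|_1$.

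The genuine gap is the $O(1)$ condition number, which is also where the paper's proof is unsupported. Its small-world diameter claim does not yield a constant spectral gap, and the gap is asserted for $\tilde{\mathbf{P}}$ while its Step 4 needs that of $\mathbf{P}$. Your Doeblin fix does not follow from the method either:
\begin{itemize}
\item The coefficient you need is that of the full chain $\mathbf{P}$. Always retaining block $1$ in the sparse support says nothing about the mass $P_{i1}$; if block $1\notin\mathcal{S}_i^*$, the heavy-tail assumption actually forces $P_{i1}\le\eta$.
\item The ``last'' retained block is the local one, which changes with $i$, so it cannot be a common minorizing state.
\item The heavy-tail assumption works against you. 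If $\mathcal{S}_i^*\cap\mathcal{S}_j^*=\emptyset$ for some pair of rows, then $\sum_l\min(P_{il},P_{jl})\le 2\eta$, so $1-\bar{\tau}(\mathbf{P})\le 2\eta$ and your bound is vacuous.
\end{itemize}
You must add a hypothesis explicitly. One option is an attention-sink condition such as $\min_i P_{i1}\ge c$. Another is an $m$-step minorization $\bar{\tau}(\mathbf{P}^m)\le 1-c$, which by the same series gives $2m\eta/c$; a mixing time of $O(\log b)$, as the paper suggests, would then give only $O(\eta\log b)$. With such a hypothesis stated, Steps 1--2 are a correct proof of a conditional version.

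Step 3 is wrong as written, for two reasons:
\begin{itemize}
\item $\mathbf{W}^k$ is an entrywise power of raw sketched scores, which can be negative, and the default $s=8$ is even.
\item Your monotonicity argument compares normalized $x^s$ with normalized $x$, not with the softmax the assumption concerns. For scores $(20,10,\dots,10)$ the softmax tail is about $(b-1)e^{-10}$, but the normalized $8$th-power tail is $(b-1)/(2^8+b-1)$.
\end{itemize}

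There is also a set-identification issue. Lemma~\ref{lem:dds_softmax:app} gives $\hat{\mathcal{S}}=\mathcal{S}^*$ for the per-layer sketched top-$\tau$ sets, not for $\text{top-}\tau(R^k[i,:])$, which is the set the statement sparsifies on. Your Steps 1--2, run on $\mathcal{S}_i^*$, therefore bound the oracle-sparsified chain. Transferring the bound to the walk-selected chain needs the assumption that $\text{top-}\tau(R^k[i,:])$ captures mass $1-\eta$ of row $i$. That assumption is unproved and in tension with the method's purpose of selecting blocks outside the one-hop top set. The paper's Step 2 makes the same assumption silently.
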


\begin{proof}
\textbf{Step 1 (Transition matrices):} Let $\mathbf{P} \in \mathbb{R}^{b \times b}$ be the full block-level transition matrix with $P_{ij} = \text{softmax}(A_{ij}^{\text{true}}/\sqrt{d})$. Let $\tilde{\mathbf{P}}$ be the sparsified matrix:
\begin{align*}
    \tilde{P}_{ij} = \begin{cases}
        P_{ij} / \sum_{k \in \mathcal{S}_i} P_{ik} & \text{if } j \in \mathcal{S}_i \\
        0 & \text{otherwise}
    \end{cases}
\end{align*}
where $\mathcal{S}_i = \text{top-}\tau(R^L[i,:])$ is the selected block set.

\textbf{Step 2 (Row-wise error bound):} By Assumption~\ref{ass:heavy_tail:app}:
\begin{align*}
    \sum_{j \in \mathcal{S}_i} P_{ij} \geq 1 - \eta
\end{align*}
Thus the renormalization factor satisfies $1 \leq 1/\sum_{k \in \mathcal{S}_i} P_{ik} \leq 1/(1-\eta)$. The row-wise $\ell_1$ error is:
\begin{align*}
    \|\mathbf{P}_i - \tilde{\mathbf{P}}_i\|_1 &= \sum_{j \notin \mathcal{S}_i} P_{ij} + \sum_{j \in \mathcal{S}_i} \left|P_{ij} - \frac{P_{ij}}{\sum_{k \in \mathcal{S}_i} P_{ik}}\right| \\
    &\leq \eta + \sum_{j \in \mathcal{S}_i} P_{ij} \cdot \frac{\eta}{1-\eta} \leq \eta + \frac{\eta}{1-\eta} = \frac{\eta}{1-\eta} + \eta \leq \frac{2\eta}{1-\eta}
\end{align*}

\textbf{Step 3 (Matrix perturbation):} Both $\mathbf{P}$ and $\tilde{\mathbf{P}}$ are row-stochastic. Their difference in operator norm:
\begin{align*}
    \|\mathbf{P} - \tilde{\mathbf{P}}\|_{\infty \to \infty} = \max_i \|\mathbf{P}_i - \tilde{\mathbf{P}}_i\|_1 \leq \frac{2\eta}{1-\eta}
\end{align*}

\textbf{Step 4 (Stationary distribution perturbation):} Let $\boldsymbol{\pi}$ and $\tilde{\boldsymbol{\pi}}$ be stationary distributions of $\mathbf{P}$ and $\tilde{\mathbf{P}}$. By standard Markov chain perturbation theory (see~\cite{meyer1994sensitivity}):
\begin{align*}
    \|\boldsymbol{\pi} - \tilde{\boldsymbol{\pi}}\|_{TV} \leq \frac{\|\mathbf{P} - \tilde{\mathbf{P}}\|_{\infty \to \infty}}{1 - \lambda_2(\mathbf{P})}
\end{align*}
where $\lambda_2(\mathbf{P})$ is the second largest eigenvalue of $\mathbf{P}$.

\textbf{Step 5 (Small-world connectivity):} Under the heavy-tailed assumption, the sparse graph induced by $\tilde{\mathbf{P}}$ maintains small-world connectivity: the diameter remains $O(\log b / \log \tau)$. This ensures $1 - \lambda_2(\tilde{\mathbf{P}}) = \Omega(1)$, i.e., the mixing time is $O(\log b)$. Combined with Step 4:
\begin{align*}
    \|\boldsymbol{\pi} - \tilde{\boldsymbol{\pi}}\|_{TV} = O(\eta)
\end{align*}
Thus the global information structure (stationary distribution) is preserved up to tail mass $\eta$.
\end{proof}

\subsection{Main Theorem}

\begin{theorem}[Sketch\&Walk Approximation of Full Attention]
\label{thm:main}
In an $L$-layer transformer using Sketch\&Walk sparse attention where blocks are selected based on $R^k[i,j]$ (the accumulated block importance across layers), the following hold:

\begin{enumerate}[nosep,leftmargin=*]
    \item \textbf{Multi-hop reachability via $R^k[i,j]$}: The Sketch-Determined Walk state $R^k[i,j]$ enables token at position $s$ to aggregate information from any token at position $t$ through multi-hop paths where each hop follows high-affinity transitions. The effective receptive field grows as $O(\tau^L)$ blocks.
    
    \item \textbf{Block selection accuracy}: Top-$\tau$ blocks selected via $R^k[i,j]$ correctly identify blocks with consistent high importance across all $k$ layers with probability $1 - 2\delta$, enabling correct sparse attention.
    
    \item \textbf{Approximation quality}: For each layer, the sparse attention output $\mathbf{O}^{\text{sparse}}_\ell$ (the output of layer $\ell$) satisfies:
    \begin{align*}
        \left\|\mathbf{O}^{\text{sparse}}_\ell - \mathbf{O}^{\text{full}}_\ell\right\|_F &\leq \eta \|\mathbf{V}_\ell\|_F + \frac{\tau \sigma \sqrt{2d \log(4d/\delta)}}{\sqrt{B}} \|\mathbf{V}_\ell\|_2 \\
        &\quad + \frac{\tau \sqrt{C \log(b/\delta) \log^2(d/\delta)}}{\sqrt{r}} \|\mathbf{V}_\ell\|_2
    \end{align*}
    where $\mathbf{O}^{\text{full}}_\ell$ is the full attention output at layer $\ell$, $\eta$ is the tail mass from Assumption~\ref{ass:heavy_tail:app}, and $C > 0$ is a universal constant.
    
    \item \textbf{Complexity reduction}: The algorithm achieves $O(nb\tau d + b^2 r)$ complexity compared to $O(n^2 d)$ for full attention, with $\tau, r \ll b$.
\end{enumerate}
\end{theorem}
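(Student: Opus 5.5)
The plan is to prove the four parts separately, since each is essentially a repackaging of a lemma already stated in the appendix. The theorem introduces no new probabilistic object beyond those in Lemmas~\ref{lem:subspace_embedding:app}--\ref{lem:sdwalk_stationary}. The work is to check that their hypotheses line up, to fix the constants, and to allocate the failure probability $\delta$ consistently across the union bounds.

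For part~1 (multi-hop reachability), I would invoke Lemma~\ref{lem:multi_hop_sdwalk} directly. The reachable set satisfies $\mathcal{R}^L_i=\bigcup_{m\in\mathcal{R}^{L-1}_i}\mathcal{S}^{L-1}_m$, so induction gives $|\mathcal{R}^L_i|\le\tau^L$. Step~6 of that lemma identifies the support of $R^L[i,\cdot]$ with $\mathcal{R}^L_i$. To translate to token positions $s,t$, I would note that token $s$ lies in some block $i$ and token $t$ in some block $j$, and that within a selected block attention is dense. Hence any $t$ in a block $j\in\mathcal{R}^L_i$ can influence $s$ through a chain of $L$ hops, each along an edge that is top-$\tau$ and therefore high-affinity.

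For part~2 (block selection accuracy), the argument has two steps. First apply Lemma~\ref{lem:dds_softmax:app} layerwise: under the stated $B$ and $k$ (here $r$) conditions, every sketched entry is within $\gamma/2$ of the true one with probability $1-2\delta$, after a union bound over all $b^2$ block pairs (and over layers, by replacing $\delta$ with $\delta/L$ inside the logarithm). Second, on this event, expand $R^k[i,j]$ as a sum over paths as in Lemma~\ref{lem:sdwalk_accum}, and run the separation argument of Lemma~\ref{lem:sdwalk_selection}. Paths to $j\in\mathcal{C}_i$ contribute at least $\gamma^{s(k+1)}$, while paths to blocks outside $\mathcal{C}_i$ contain a tail transition of weight at most $\eta^s$. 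The bound $(\gamma/\eta)^s\gamma^{sk}/b^k>1$ then forces top-$\tau$ on $R^k$ to return $\mathcal{C}_i$.

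\textbf{This is where I expect the main obstacle.} The separation requires $s$ large enough that $(\gamma/\eta)^s$ dominates $b^k/\gamma^{sk}$, i.e.\ $s\log(\gamma^{k+1}/\eta)>k\log b$. I would state this as an explicit condition on $s$, absorbed into the hypotheses, and verify that sketching errors of size $\gamma/2$ perturb each path weight by at most a factor $(3/2)^{s(k+1)}$ or $(1/2)^{s(k+1)}$. That perturbation must still leave the ordering intact.

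For part~3 (approximation quality), I would apply Lemma~\ref{lem:output_approx:app} at each layer $\ell$ with $\mathbf{V}=\mathbf{V}_\ell$, but keep the explicit constants rather than the $O(\cdot)$ form. The tail term comes from Step~3 and gives $\eta\|\mathbf{V}_\ell\|_F$. The within-block error uses $\epsilon_B=\sigma\sqrt{2d\log(4d/\delta)}/\sqrt{B}$ from Corollary~\ref{cor:inner_product:app} and $\epsilon_H=\sqrt{C\log(b/\delta)\log^2(d/\delta)/r}$, obtained by inverting the final requirement in Lemma~\ref{lem:hadamard_sketch:app}. Multiplying each by $\tau\|\mathbf{V}_\ell\|_2$ as in Step~4 yields the two displayed terms. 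This bound holds on the same good event as part~2, since part~2 guarantees the selected set equals the oracle set.

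For part~4 (complexity), I would count costs directly:
\begin{itemize}
\item block averaging costs $O(nd)$;
\item the Hadamard sketch costs $O(bd\log d)$;
\item sketched scores cost $O(b^2r)$;
\item the walk update $R^{k-1}\mathbf{W}^k$ costs $O(b^2\tau)$ if $\mathbf{W}^k$ is kept sparsified to top-$\tau$ per row;
\item sparse attention, with each of the $n$ queries attending to $\tau$ blocks of $B$ keys, costs $O(n\tau Bd)$, which is at most $O(nb\tau d)$.
\end{itemize}
The lower-order terms are dominated by these, giving $O(nb\tau d+b^2r)$.
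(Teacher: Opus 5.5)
Your decomposition is the paper's own: Part 1 from Lemma~\ref{lem:multi_hop_sdwalk}, Part 2 from Lemma~\ref{lem:dds_softmax:app} plus the separation argument of Lemma~\ref{lem:sdwalk_selection}, Part 3 from Lemma~\ref{lem:output_approx:app}, and Part 4 by counting. But the step you single out as the main obstacle cannot be closed the way you propose. Lemma~\ref{lem:dds_softmax:app} gives only an \emph{additive} error below $\gamma/2$ per entry. For high-affinity entries ($\ge\gamma$) that is a relative perturbation of at most $1/2$. For tail entries ($\le\eta\ll\gamma$) it is not a bounded relative perturbation at all. A sketched tail entry can be as large as $\eta+\gamma/2$, which exceeds $\gamma/2$, the only lower bound you have for a sketched high-affinity entry. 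Redoing your ratio with sketched entries certifies only
\[
\frac{(\gamma/2)^{s(k+1)}}{b^k\,(\eta+\gamma/2)^{s}\,(1+\gamma/2)^{sk}} \;<\; \frac{1}{b^k}
\]
for every $s$, so no condition on $s$ rescues it. With additive error $\epsilon$ you need at least $\eta+\epsilon<(\gamma-\epsilon)^{k+1}$, i.e.\ sketch accuracy of order $\gamma^{k+1}$. That forces $B$ and $r$ to scale like $\gamma^{-2(k+1)}$ rather than the theorem's $\gamma^{-2}$. The paper never makes this transfer. Lemma~\ref{lem:sdwalk_selection} imposes the $\ge\gamma$ and $\le\eta$ bounds directly on the entries of $\hat{\mathbf{A}}^\ell_{\text{block}}$, so following it means taking them as hypotheses on the sketched matrices, not deriving them from Lemma~\ref{lem:dds_softmax:app}.

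Moreover, the separation argument you inherit is itself unsound. The condition $j'\notin\mathcal{C}_i$ only says that some row-$i$ entry $\hat{\mathbf{A}}^{\ell''}_{\text{block}}[i,j']$ is outside the top-$\tau$. A path $i\to m_1\to\cdots\to m_k\to j'$ uses that entry only if $m_{\ell''}=i$ and $m_{\ell''+1}=j'$, so it need contain no tail transition. This is exactly the multi-hop situation of Remark~\ref{rem:non_metric}. Symmetrically, $j\in\mathcal{C}_i$ by itself does not yield an all-high path of length $k+1$ from $i$ to $j$. You would need an extra assumption, e.g.\ that the diagonal entries $\hat{\mathbf{A}}^\ell_{\text{block}}[j,j]$ are high-affinity.

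The remaining links have related problems. In Part 3 you say Part 2 guarantees that the selected set equals the oracle set. But the algorithm selects $\text{top-}\tau(R^k[i,:])$, and your Part 2 at best places $\mathcal{C}_i$ inside it, possibly with far fewer than $\tau$ blocks. Lemma~\ref{lem:dds_softmax:app} certifies only that the per-layer sketched top-$\tau$ matches the true one, and walk-based selection is designed to depart from that set. So the mass excluded at layer $\ell$ is not bounded by $\eta$, and Lemma~\ref{lem:output_approx:app} does not apply as stated. Step~4 of that lemma also carries a factor $\sqrt{n/d}$ that the displayed bound lacks.

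In Part 1, the induction gives only the upper bound $|\mathcal{R}^L_i|\le\tau^L$. The claim about \emph{any} token $t$ needs every block to be reachable. The paper takes this from the $O(\log b/\log\tau)$ diameter assertion in Lemma~\ref{lem:sdwalk_stationary}, which you never invoke.

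Finally, two of your steps presuppose that $\mathbf{W}^k$ is thresholded to the selected blocks: the identification $R^L[i,j]>0\iff j\in\mathcal{R}^L_i$, and your $O(b^2\tau)$ walk cost. Algorithm~\ref{alg:sparse_prefill} instead uses the dense elementwise power $(\hat{\mathbf{A}}^k_{\text{block}})^s$, so $R^k$ is generically dense. Each update is then a dense $b\times b$ product costing $O(b^3)$. That fits inside $nb\tau d=b^2B\tau d$ only when $b=O(B\tau d)$; the paper's count simply omits this term.
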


\begin{proof}
\textbf{Part 1 (Multi-hop reachability via $R^k[i,j]$):}

\emph{Step 1.1:} By Lemma~\ref{lem:sdwalk_accum}, $R^k = \prod_{\ell=0}^{k} \mathbf{W}^\ell$ where $\mathbf{W}^\ell = (\hat{\mathbf{A}}_{\text{block}}^\ell)^s$. The entry $R^k[i,j]$ sums over all $(k+1)$-hop paths:
\begin{align*}
    R^k[i,j] = \sum_{m_1,\ldots,m_k} \prod_{\ell=0}^{k} (\hat{\mathbf{A}}_{\text{block}}^\ell[m_\ell, m_{\ell+1}])^s
\end{align*}

\emph{Step 1.2:} By Lemma~\ref{lem:multi_hop_sdwalk}, selecting top-$\tau$ blocks based on $R^k[i,:]$ yields receptive field $|\mathcal{R}^k_i| \leq \tau^k$. By Lemma~\ref{lem:sdwalk_stationary}, the sparse graph has diameter $O(\log b / \log \tau)$, ensuring any block is reachable in $O(\log b)$ hops.

\textbf{Part 2 (Block selection accuracy):}

\emph{Step 2.1 (Per-layer sketching error):} By Lemma~\ref{lem:subspace_embedding:app} and Corollary~\ref{cor:inner_product:app}, block averaging introduces error:
\begin{align*}
    \epsilon_B = \frac{\sigma \sqrt{2d \log(4d/\delta)}}{\sqrt{B}}
\end{align*}
By Lemma~\ref{lem:hadamard_sketch:app}, Hadamard sketching introduces error:
\begin{align*}
    \epsilon_H = O\left(\sqrt{\frac{\log(b/\delta) \log^2(d/\delta)}{r}}\right)
\end{align*}

\emph{Step 2.2 (Selection accuracy):} By Lemma~\ref{lem:dds_softmax:app}, if $\epsilon_B + \epsilon_H < \gamma/2$ where $\gamma$ is the spectral gap, then $\hat{\mathcal{S}} = \mathcal{S}^*$ with probability $\geq 1 - 2\delta$. This requires:
\begin{align*}
    B \geq \frac{8\sigma^2 d \log(4d/\delta)}{\gamma^2}, \quad r \geq \frac{C \log(b/\delta) \log^2(d/\delta)}{\gamma^2}
\end{align*}

\emph{Step 2.3 (Cross-layer robustness):} By Lemma~\ref{lem:sdwalk_selection}, $R^k[i,j]$-based selection filters spurious importance. Blocks must be consistently important across all $k$ layers, with separation ratio $\geq (\gamma/\eta)^s \cdot \gamma^{sk}/b^k$.

\textbf{Part 3 (Approximation quality):}

\emph{Step 3.1 (Error decomposition):} By Lemma~\ref{lem:output_approx:app}, for layer $\ell$:
\begin{align*}
    \|\mathbf{O}^{\text{sparse}}_\ell - \mathbf{O}^{\text{full}}_\ell\|_F \leq \underbrace{\eta \|\mathbf{V}_\ell\|_F}_{\text{tail mass}} + \underbrace{O\left(\frac{\tau \sigma}{\sqrt{B}} + \frac{\tau}{\sqrt{r}}\right) \|\mathbf{V}_\ell\|_2}_{\text{sketching error}}
\end{align*}

\emph{Step 3.2 (Tighter bound with explicit constants):} Substituting the precise error bounds:
\begin{align*}
    \|\mathbf{O}^{\text{sparse}}_\ell - \mathbf{O}^{\text{full}}_\ell\|_F &\leq \eta \|\mathbf{V}_\ell\|_F + \frac{\tau \sigma \sqrt{2d \log(4d/\delta)}}{\sqrt{B}} \|\mathbf{V}_\ell\|_2 \\
    &\quad + \frac{\tau \sqrt{C \log(b/\delta) \log^2(d/\delta)}}{\sqrt{r}} \|\mathbf{V}_\ell\|_2
\end{align*}

\emph{Step 3.3 (No error amplification):} The Sketch\&Walk accumulation does not amplify errors across layers because $R^k[i,j]$ requires \emph{consistent} high importance. If block $j$ has high sketching error at any layer $\ell$, the path contribution $(\hat{\mathbf{A}}_{\text{block}}^\ell[\cdot,j])^s$ is suppressed by exponentiation.

\textbf{Part 4 (Complexity analysis):}

\emph{Step 4.1 (Block-level scoring):} Computing $\hat{\mathbf{A}}_{\text{block}}^k$ requires:
\begin{itemize}[nosep,leftmargin=*]
    \item Block averaging: $O(n \cdot d) = O(nbd/B) = O(nd)$ (linear in tokens)
    \item Hadamard sketch: $O(b \cdot d \log d)$ for all $b$ block representatives
    \item Block-level attention: $O(b^2 \cdot r)$ for $r$-dimensional sketches
\end{itemize}
Total per layer: $O(nd + bd\log d + b^2 r)$.

\emph{Step 4.2 (Sparse attention):} Within selected blocks, each query attends to $\tau \cdot B$ keys:
\begin{itemize}[nosep,leftmargin=*]
    \item Per-token cost: $O(\tau B \cdot d)$
    \item Total for $n$ tokens: $O(n \tau B d) = O(n b \tau d / b) = O(n \tau d)$ per layer
\end{itemize}
Wait, correcting: with $b$ blocks of size $B$ and $\tau$ selected blocks per query block, total per layer is $O(b \cdot B \cdot \tau B \cdot d/h) = O(n \tau B d / h)$ where $h$ is head count.

\emph{Step 4.3 (Total complexity):} Combining all components:
\begin{align*}
    O(L \cdot (nd + b^2 r + n\tau B d/h)) = O(L(nb\tau d + b^2 r))
\end{align*}
Compared to full attention $O(Ln^2 d)$, the reduction factor is:
\begin{align*}
    \frac{n^2 d}{nb\tau d + b^2 r} = \frac{n^2}{nb\tau + b^2 r/d} = \frac{n}{b\tau + br/d} = \frac{B}{\tau + r/(bd)}
\end{align*}
For typical settings ($B \approx 64$, $\tau \approx 8$, $r \approx 32$, $b \approx n/64$), this yields $\approx 8\times$ reduction.
\end{proof}

\begin{remark}[Why Sketch\&Walk Captures Multi-Hop Interactions]
The main theorem and supporting lemmas reveal why Sketch\&Walk achieves near-full-attention quality with dramatically reduced computation:

\textbf{Intra-layer efficiency}: Small-World Sketching (token-space sketching via block averaging + feature-space sketching via Hadamard transform) efficiently identifies important blocks with error $O(\sigma/\sqrt{B} + 1/\sqrt{r})$, reducing per-layer computation from $O(b^2 d)$ to $O(b^2 r)$ where $r \ll d$.

\textbf{Multi-hop importance via $R^k[i,j]$}: Since the inner product does not satisfy the triangle inequality (Remark~\ref{rem:non_metric}), a block $k$ may lack direct importance to query $i$ but be critical through indirect paths. The Sketch-Determined Walk state $R^k = \mathbf{W}^0 \mathbf{W}^1 \cdots \mathbf{W}^k$ (matrix product of exponentiated attention matrices $\mathbf{W}^\ell = (\hat{\mathbf{A}}_{\text{block}}^\ell)^s$) captures multi-hop block importance within and across layers, summing over all paths and amplifying high-affinity chains while suppressing low-score transitions.

\textbf{Cross-layer accumulation in $R^k[i,j]$}: The Sketch-Determined Walk state $R^k[i,j]$ accumulates importance across all $L$ layers, ensuring blocks selected via top-$\tau(R^k[i,:])$ are consistently important rather than spuriously important at single layers. This enables robust multi-hop reach with receptive field $O(\tau^L)$ and natural error correction across layers.

\textbf{Heavy-tailed structure}: Assumption~\ref{ass:heavy_tail:app} ensures the $\tau^L$ reachable paths capture essential information flow. By Lemmas~\ref{lem:sdwalk_selection} and \ref{lem:sdwalk_stationary}, top-$\tau$ selection on $R^k[i,j]$ preserves the global attention structure while achieving $O(nb\tau d + b^2 r)$ complexity—a quadratic reduction without sacrificing model quality.
\end{remark}

\newpage

%%%%%%%%%%%%%%%%%%%%% ALGORITHMS

\section{\methodname{} algorithms}

\begin{algorithm}[htbp]
\caption{\methodname{} for prefill phase of layer $k$}
\label{alg:sparse_prefill}
\footnotesize
\begin{algorithmic}
    \STATE \textbf{Input:} Query $\mathbf{Q}^k \in \mathbb{R}^{h \times n \times d}$, Key $\mathbf{K}^k \in \mathbb{R}^{h \times n \times d}$, Value $\mathbf{V}^k \in \mathbb{R}^{h \times n \times d}$ at layer $k$,
    \STATE \hspace{2.8em} block size $B$, reduced dimension $r$, top-$\tau$ blocks to select, sparsity exponent $s$, number of heads $h$,
    \STATE \hspace{2.8em} Hadamard matrix $\mathbf{H}_d \in \mathbb{R}^{d \times r}$, random walk state $R^{k-1} \in \mathbb{R}^{b_q \times b_k}$
    \STATE \textbf{Output:} Attention output $\mathbf{O}^k \in \mathbb{R}^{h \times n \times d}$, updated random walk state $R^k$

    \STATE
    \STATE \COMMENT{\textcolor{blue}{Prefill Stage: Block-wise Selection via Sketch-and-Walk + Sparse Attention on Original QKV}}

    \STATE \COMMENT{Step 1: Head-averaged Q/K for block selection}
    \STATE $\mathbf{Q}_{\text{avg}}^k \gets \frac{1}{h}\sum_{u=1}^{h}\mathbf{Q}^k[u, :, :] \in \mathbb{R}^{n \times d}$
    \STATE $\mathbf{K}_{\text{avg}}^k \gets \frac{1}{h}\sum_{u=1}^{h}\mathbf{K}^k[u, :, :] \in \mathbb{R}^{n \times d}$

    \STATE
    \STATE \COMMENT{Step 2: Partition head-averaged Q/K into blocks (for sketching only)}
    \STATE $b_q \gets \lceil n / B \rceil$ \COMMENT{Number of query blocks}
    \STATE $b_k \gets \lceil n / B \rceil$ \COMMENT{Number of key blocks}
    \FOR{$i = 1$ \textbf{to} $b_q$}
        \STATE $\mathbf{Q}^{k,(i)}_{\text{avg}} \gets \mathbf{Q}_{\text{avg}}^k[(i-1)B : iB, :] \in \mathbb{R}^{B \times d}$
    \ENDFOR
    \FOR{$j = 1$ \textbf{to} $b_k$}
        \STATE $\mathbf{K}^{k,(j)}_{\text{avg}} \gets \mathbf{K}_{\text{avg}}^k[(j-1)B : jB, :] \in \mathbb{R}^{B \times d}$
    \ENDFOR

    \STATE
    \STATE \COMMENT{Step 3: Compute block representatives (token-wise averaging within each block)}
    \FOR{$i = 1$ \textbf{to} $b_q$}
        \STATE $\bar{\mathbf{q}}^k_i \gets \frac{1}{B}\sum_{t=1}^{B}\mathbf{Q}^{k,(i)}_{\text{avg}}[t, :] \in \mathbb{R}^{d}$
    \ENDFOR
    \FOR{$j = 1$ \textbf{to} $b_k$}
        \STATE $\bar{\mathbf{k}}^k_j \gets \frac{1}{B}\sum_{t=1}^{B}\mathbf{K}^{k,(j)}_{\text{avg}}[t, :] \in \mathbb{R}^{d}$
    \ENDFOR
    \STATE $\bar{\mathbf{Q}}^k \gets [\bar{\mathbf{q}}^k_1; \ldots; \bar{\mathbf{q}}^k_{b_q}] \in \mathbb{R}^{b_q \times d}$
    \STATE $\bar{\mathbf{K}}^k \gets [\bar{\mathbf{k}}^k_1; \ldots; \bar{\mathbf{k}}^k_{b_k}] \in \mathbb{R}^{b_k \times d}$

    \STATE
    \STATE \COMMENT{Step 4: Dimensionality reduction via Hadamard transform (on block representatives)}
    \STATE $\tilde{\mathbf{Q}}^k \gets \bar{\mathbf{Q}}^k \mathbf{H}_d \in \mathbb{R}^{b_q \times r}$
    \STATE $\tilde{\mathbf{K}}^k \gets \bar{\mathbf{K}}^k \mathbf{H}_d \in \mathbb{R}^{b_k \times r}$

    \STATE
    \STATE \COMMENT{Step 5: Sketched block-level attention scores}
    \STATE $\hat{\mathbf{A}}_{\text{block}}^k \gets \tilde{\mathbf{Q}}^k(\tilde{\mathbf{K}}^k)^\top / \sqrt{r} \in \mathbb{R}^{b_q \times b_k}$

    \STATE
    \STATE \COMMENT{Step 6: Sketch\&Walk state update across layers}
    \STATE $\mathbf{W}^k \gets (\hat{\mathbf{A}}_{\text{block}}^k)^{s}$ \COMMENT{element-wise exponentiation}
    \IF{$k = 0$}
        \STATE $R^0 \gets \mathbf{W}^0$
    \ELSE
        \STATE $R^k \gets R^{k-1} \mathbf{W}^k$ \COMMENT{matrix multiplication}
    \ENDIF

    \STATE
    \STATE \COMMENT{Step 7: Select top-$\tau$ key blocks per query block}
    \FOR{$i = 1$ \textbf{to} $b_q$}
        \STATE $\mathcal{S}^k_i \gets \textsc{TopK-Indices}(R^k[i, :], \tau)$
    \ENDFOR

    \STATE
    \STATE \COMMENT{Step 8: Sparse attention on ORIGINAL per-head QKV using selected blocks}
    % \STATE Initialize $\mathbf{O}^k \gets \mathbf{0}^{h \times n \times d}$
    % \FOR{$u = 1$ \textbf{to} $h$}
    %     \FOR{$i = 1$ \textbf{to} $b_q$}
    %         \STATE $\mathbf{Q}^{k,(u,i)} \gets \mathbf{Q}^k[u, (i-1)B : iB, :] \in \mathbb{R}^{B \times d}$
    %         \STATE $\mathbf{K}^{k,(u)}_{\text{sparse}} \gets \textsc{GatherBlocks}(\mathbf{K}^k[u, :, :], \mathcal{S}^k_i, B)$
    %         \STATE $\mathbf{V}^{k,(u)}_{\text{sparse}} \gets \textsc{GatherBlocks}(\mathbf{V}^k[u, :, :], \mathcal{S}^k_i, B)$
    %         \STATE $\mathbf{A}^{k,(u)}_i \gets \text{softmax}\!\left(\mathbf{Q}^{k,(u,i)}(\mathbf{K}^{k,(u)}_{\text{sparse}})^\top / \sqrt{d}\right)$
    %         \STATE $\mathbf{O}^k[u, (i-1)B : iB, :] \gets \mathbf{A}^{k,(u)}_i \mathbf{V}^{k,(u)}_{\text{sparse}}$
    %     \ENDFOR
    % \ENDFOR

    \STATE
    \STATE \textbf{return} $\mathbf{O}^k$, $R^k$
\end{algorithmic}
\end{algorithm}

\begin{algorithm}[htbp]
\caption{\methodname{} for decode phase of layer $k$}
\label{alg:sparse_decode}
\scriptsize
\begin{algorithmic}
    \STATE \textbf{Input:} New query token $\mathbf{Q}^k_t \in \mathbb{R}^{h \times d}$ at layer $k$, KV-cache $\mathbf{K}^k_{\text{cache}} \in \mathbb{R}^{h \times t \times d}$, $\mathbf{V}^k_{\text{cache}} \in \mathbb{R}^{h \times t \times d}$,
    \STATE \hspace{2.8em} cached query-block reps $\{\bar{\mathbf{q}}^k_i\}_{i=1}^{b_q}$, cached key-block reps $\{\bar{\mathbf{k}}^k_j\}_{j=1}^{b_k}$ with $\bar{\mathbf{q}}^k_i,\bar{\mathbf{k}}^k_j \in \mathbb{R}^{d}$,
    \STATE \hspace{2.8em} cached block-attn estimate $\hat{\mathbf{A}}^{k}_{\text{block,cache}} \in \mathbb{R}^{b_q \times b_k}$ from prefill,
    \STATE \hspace{2.8em} block size $B$, reduced dim $r$, top-$\tau$ blocks, sparsity exponent $s$, Hadamard matrix $\mathbf{H}_d \in \mathbb{R}^{d \times r}$,
    \STATE \hspace{2.8em} random walk state $R^{k-1} \in \mathbb{R}^{b_q \times b_k}$ from previous layer
    \STATE \textbf{Output:} Attention output $\mathbf{O}^k_t \in \mathbb{R}^{h \times d}$, updated caches and random walk state $R^k$

    \STATE
    
    \STATE \COMMENT{Step 1: Update KV-cache with new per-head key/value}
    \STATE $\mathbf{K}^k_t, \mathbf{V}^k_t \gets \textsc{Project}(\mathbf{Q}^k_t)$ \COMMENT{$\mathbf{K}^k_t,\mathbf{V}^k_t \in \mathbb{R}^{h \times d}$}
    \STATE $\mathbf{K}^k_{\text{cache}} \gets \textsc{Append}(\mathbf{K}^k_{\text{cache}}, \mathbf{K}^k_t)$
    \STATE $\mathbf{V}^k_{\text{cache}} \gets \textsc{Append}(\mathbf{V}^k_{\text{cache}}, \mathbf{V}^k_t)$

    \STATE
    \STATE \COMMENT{Step 2: Update head-averaged key block representative for sketching}
    \STATE $b_{\text{curr}} \gets \left\lceil \frac{t}{B} \right\rceil$ \COMMENT{current key-block index}
    \STATE $\mathbf{k}^k_{t,\text{avg}} \gets \frac{1}{h}\sum_{u=1}^{h}\mathbf{K}^k_t[u,:] \in \mathbb{R}^{d}$
    \IF{$t \bmod B = 1$}
        \STATE $\bar{\mathbf{k}}^k_{b_{\text{curr}}} \gets \mathbf{k}^k_{t,\text{avg}}$
        \STATE $c_{b_{\text{curr}}} \gets 1$
    \ELSE
        \STATE $\bar{\mathbf{k}}^k_{b_{\text{curr}}} \gets \frac{c_{b_{\text{curr}}}\bar{\mathbf{k}}^k_{b_{\text{curr}}}+\mathbf{k}^k_{t,\text{avg}}}{c_{b_{\text{curr}}}+1}$
        \STATE $c_{b_{\text{curr}}} \gets c_{b_{\text{curr}}}+1$
    \ENDIF

    \STATE
    \STATE \COMMENT{Step 3: Head-averaged query for sketching + Hadamard reduction}
    \STATE $\mathbf{q}^k_{t,\text{avg}} \gets \frac{1}{h}\sum_{u=1}^{h}\mathbf{Q}^k_t[u,:] \in \mathbb{R}^{d}$
    \STATE $\tilde{\mathbf{q}}^k_t \gets \mathbf{q}^k_{t,\text{avg}}\mathbf{H}_d \in \mathbb{R}^{r}$
    \STATE $\tilde{\bar{\mathbf{K}}}^k \gets [\bar{\mathbf{k}}^k_1\mathbf{H}_d;\ldots;\bar{\mathbf{k}}^k_{b_{\text{curr}}}\mathbf{H}_d] \in \mathbb{R}^{b_{\text{curr}} \times r}$
    \STATE $\hat{\mathbf{a}}^k_{\text{new}} \gets \tilde{\mathbf{q}}^k_t(\tilde{\bar{\mathbf{K}}}^k)^\top / \sqrt{r} \in \mathbb{R}^{1 \times b_{\text{curr}}}$

    \STATE
    \STATE \COMMENT{Step 4: Update cached block-attn estimate with the new row \emph{and} last column}
    \STATE $b_q \gets \left\lceil \frac{t}{B} \right\rceil$ \COMMENT{current query-block index}
    \STATE $\hat{\mathbf{A}}^{k}_{\text{block}} \gets \hat{\mathbf{A}}^{k}_{\text{block,cache}}$

    \STATE \COMMENT{(i) Update the new query row}
    \STATE $\hat{\mathbf{A}}^{k}_{\text{block}}[b_q, 1{:}b_{\text{curr}}] \gets \hat{\mathbf{a}}^k_{\text{new}}$

    \STATE \COMMENT{(ii) Update the last column for the current key block}
    \STATE $\tilde{\bar{\mathbf{Q}}}^{k} \gets [\bar{\mathbf{q}}^k_1\mathbf{H}_d;\ldots;\bar{\mathbf{q}}^k_{b_q}\mathbf{H}_d] \in \mathbb{R}^{b_q \times r}$
    \STATE $\tilde{\bar{\mathbf{k}}}^{k}_{b_{\text{curr}}} \gets \bar{\mathbf{k}}^k_{b_{\text{curr}}}\mathbf{H}_d \in \mathbb{R}^{r}$
    \STATE $\hat{\mathbf{c}}^k_{\text{new}} \gets \tilde{\bar{\mathbf{Q}}}^{k}\tilde{\bar{\mathbf{k}}}^{k}_{b_{\text{curr}}}/\sqrt{r} \in \mathbb{R}^{b_q \times 1}$
    \STATE $\hat{\mathbf{A}}^{k}_{\text{block}}[1{:}b_q, b_{\text{curr}}] \gets \hat{\mathbf{c}}^k_{\text{new}}$

    \STATE
    \STATE \COMMENT{Step 5: Random Walk}
    \STATE $\mathbf{W}^k \gets \left(\hat{\mathbf{A}}^{k}_{\text{block}}\right)^{s}$
    \IF{$k = 0$}
        \STATE $R^0 \gets \mathbf{W}^0$
    \ELSE
        \STATE $R^k \gets R^{k-1}\mathbf{W}^k$
    \ENDIF

    \STATE
    \STATE \COMMENT{Step 6: Select top-$\tau$ blocks using the new query row of the walk (always include current block)}
    \STATE $\mathcal{S} \gets \textsc{TopK-Indices}(R^k[b_q, 1{:}b_{\text{curr}}], \tau-1)\ \cup\ \{b_{\text{curr}}\}$

    \STATE
    \STATE \COMMENT{Step 7: Sparse attention on ORIGINAL per-head QKV over selected blocks}
    % \STATE Initialize $\mathbf{O}^k_t \gets \mathbf{0}^{h \times d}$
    % \FOR{$u = 1$ \textbf{to} $h$}
    %     \STATE $\mathbf{K}^{k,(u)}_{\text{sel}} \gets \textsc{GatherBlocks}(\mathbf{K}^k_{\text{cache}}[u,:,:], \mathcal{S}, B)$
    %     \STATE $\mathbf{V}^{k,(u)}_{\text{sel}} \gets \textsc{GatherBlocks}(\mathbf{V}^k_{\text{cache}}[u,:,:], \mathcal{S}, B)$
    %     \STATE $\mathbf{a}^{k,(u)} \gets \text{softmax}\!\left(\mathbf{Q}^k_t[u,:](\mathbf{K}^{k,(u)}_{\text{sel}})^\top/\sqrt{d}\right)$
    %     \STATE $\mathbf{O}^k_t[u,:] \gets \mathbf{a}^{k,(u)}\mathbf{V}^{k,(u)}_{\text{sel}}$
    % \ENDFOR

    % \STATE
    % \STATE $\hat{\mathbf{A}}^{k}_{\text{block,cache}} \gets \hat{\mathbf{A}}^{k}_{\text{block}}$ \COMMENT{persist updated block estimate}
    
    \STATE \textbf{return} $\mathbf{O}^k_t$, updated $\mathbf{K}^k_{\text{cache}}$, $\mathbf{V}^k_{\text{cache}}$, $\{\bar{\mathbf{q}}^k_i\}$, $\{\bar{\mathbf{k}}^k_j\}$, $\hat{\mathbf{A}}^{k}_{\text{block,cache}}$, $R^k$
\end{algorithmic}
\end{algorithm}

\newpage

\bibliographystyle{plainnat}
\bibliography{ref}
\end{document}